\documentclass[10pt,twocolumn,letterpaper]{article}

\usepackage[pagenumbers]{cvpr} 

\newif\ifdraft
\drafttrue

\usepackage[T1]{fontenc}
\usepackage{amsfonts}
\usepackage{mathtools}          
\usepackage{array}
\usepackage{colortbl}           
\usepackage{multirow}
\usepackage{adjustbox}
\usepackage{tikz}
\usetikzlibrary{patterns,calc,arrows.meta}
\usepackage{nicefrac}
\usepackage{placeins}
\usepackage{amsthm}
\newtheorem{proposition}{Proposition}
\newif\ifgeer\geertrue\newif\ifgeertable\geertabletrue 

\newcommand{\Mat}{\boldsymbol}     

\ifdraft
  
  \newcommand{\note}[1]{\textcolor{blue}{[NOTE: #1]}}
  \newcommand{\comment}[2]{\textcolor{magenta}{[#1: #2]}}
\else
  
  \newcommand{\note}[1]{}
  \newcommand{\comment}[2]{}
\fi

\definecolor{tabfirst}{rgb}{1.0,0.70,0.70}
\definecolor{tabsecond}{rgb}{1.0,0.85,0.70}
\definecolor{tabthird}{rgb}{1.0,0.95,0.70}
\newcommand{\cellbest}[1]{\cellcolor{tabfirst}#1}
\newcommand{\cellsecond}[1]{\cellcolor{tabsecond}#1}
\newcommand{\cellthird}[1]{\cellcolor{tabthird}#1}
\definecolor{tabgain}{rgb}{0.75,0.93,0.75}
\newcommand{\cellgain}[1]{\cellcolor{tabgain}#1}
\newcommand{\rankswatch}[2]{{\setlength{\fboxsep}{0.4pt}\colorbox{#1}{\hspace{0.6pt}\vphantom{Ag}#2\hspace{0.6pt}}}}
\newcommand{\rankkey}{The \rankswatch{tabfirst}{best}, \rankswatch{tabsecond}{second best}, and \rankswatch{tabthird}{third best} entries are highlighted}

\newcommand{\figph}[3]{%
  \fcolorbox{black!40}{black!6}{\parbox[c][#2][c]{\dimexpr#1-2\fboxsep-2\fboxrule\relax}{%
    \centering\footnotesize\sffamily\color{black!70}#3}}}

\newlength{\figthreew}
\newlength{\figfourw}
\definecolor{oursbg}{RGB}{214,234,250}  
\definecolor{zoomviolet}{RGB}{148,0,211}  

\definecolor{cvprblue}{rgb}{0.21,0.49,0.74}
\usepackage[pagebackref,breaklinks,colorlinks,allcolors=cvprblue]{hyperref}

\def\paperID{*****} 
\def\confName{CVPR}
\def\confYear{2027}

\title{GOFish: Full Field-of-View Gaussian Splatting and Mesh Extraction}

\newcommand{\authorspace}{\hspace{2.5em}}
\newcommand{\affilspace}{\hspace{2.5em}}
\author{%
  Allen Tu$^{1,2}$ \authorspace
  Meher Gitika Karumuri$^{1}$ \authorspace
  Shifan Zhu$^{1,3}$ \authorspace
  Hailing Wang$^{1,4}$ \\
  Yen-Liang Lin$^{1}$ \authorspace
  David Held$^{1,5}$ \authorspace
  Matthias Zwicker$^{2}$ \authorspace
  Tom Goldstein$^{2}$ \\[0.35em]
  {\normalsize $^1$Amazon \affilspace $^2$University of Maryland, College Park \affilspace
  $^3$University of Massachusetts Amherst} \\
  {\normalsize $^4$Northeastern University \affilspace $^5$Carnegie Mellon University} \\[1.375pt]
  \iftoggle{cvprfinal}{\url{https://3dgofish.github.io}}{}%
  \vspace{-24.807pt}
}

\begin{document}

\twocolumn[{%
\renewcommand\twocolumn[1][]{#1}%
\maketitle
\begin{center}
    \centering
    \captionsetup{type=figure}
    {
\setlength{\figthreew}{0.2310\textwidth}
\ifdefined\teasergw\else\newlength{\teasergw}\newlength{\teaserpill}\fi
\settowidth{\teasergw}{\fontsize{9}{9.5}\selectfont 1.01M Gaussians}\addtolength{\teasergw}{5pt}
\setlength{\teaserpill}{\figthreew}\addtolength{\teaserpill}{-3pt}\setlength{\teaserpill}{0.5\teaserpill}
\begin{tikzpicture}[x=\figthreew,y=\figthreew,
  badge/.style={rounded corners=1.2pt,inner sep=0pt,minimum height=0.088\figthreew,minimum width=\teaserpill,
    font=\fontsize{9}{9.5}\selectfont,text height=6.4pt,text depth=0.8pt}]
\fill[oursbg,rounded corners=4pt] (1.0512,-0.0436) rectangle (3.2561,-1.2983);
\node[anchor=south,align=center,font=\normalsize\bfseries,inner sep=0pt] at (0.5000,-0.1616) {\strut 3DGUT + PPISP};
\node[anchor=north west,inner sep=0pt] at (0.0000,-0.1600) {\includegraphics[width=\figthreew]{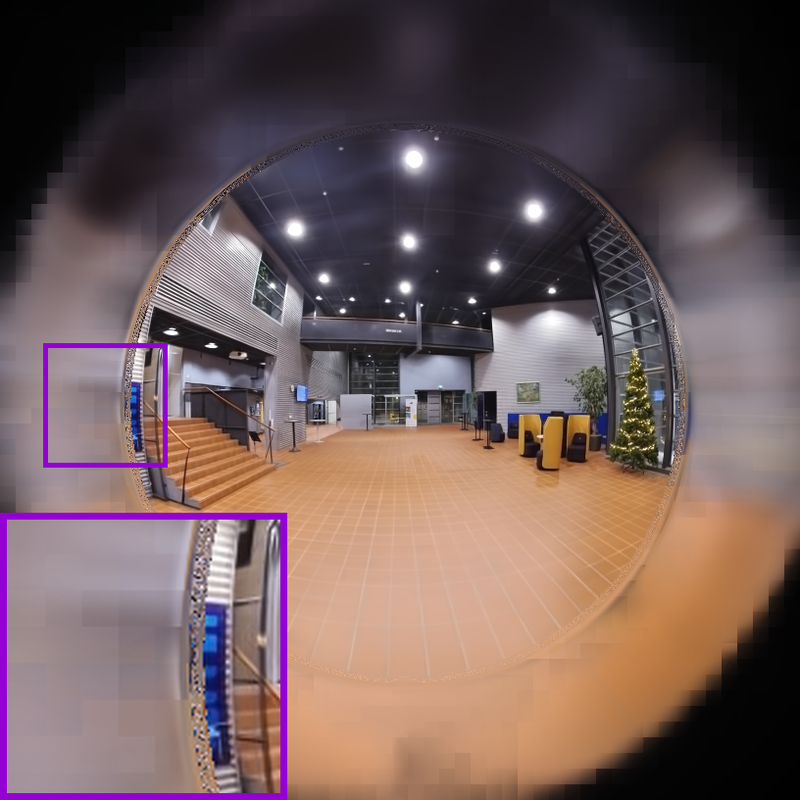}};
\node[anchor=south,align=center,font=\normalsize\bfseries,inner sep=0pt] at (1.6025,-0.1616) {\strut GOFish (Ours)};
\node[anchor=north west,inner sep=0pt] at (1.1025,-0.1600) {\includegraphics[width=\figthreew]{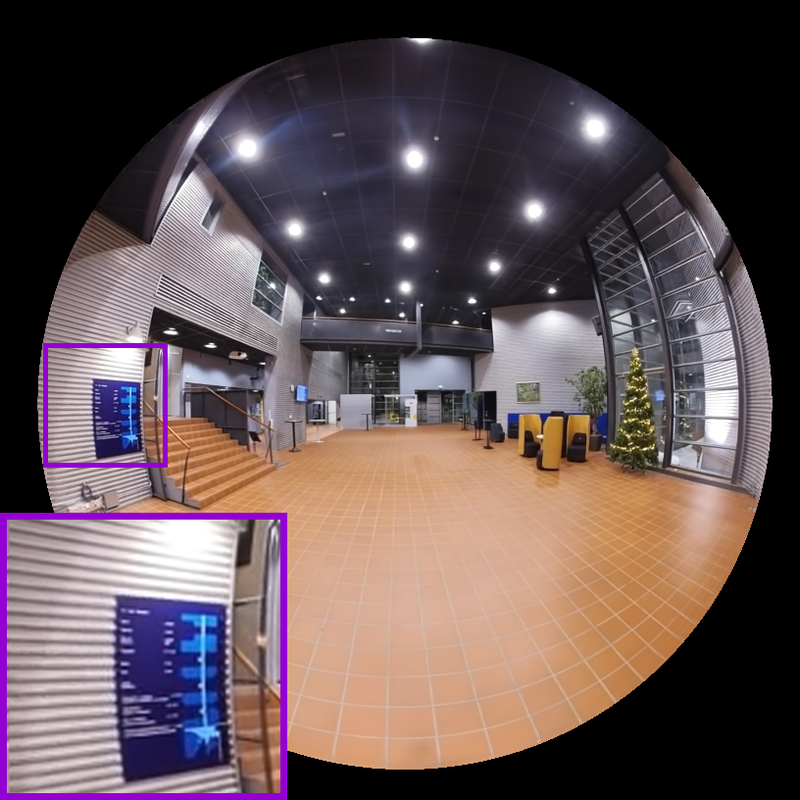}};
\node[anchor=south,align=center,font=\normalsize\bfseries,inner sep=0pt] at (2.7046,-0.1616) {\strut GOFish Mesh};
\node[anchor=north west,inner sep=0pt] at (2.2046,-0.1600) {\includegraphics[width=\figthreew]{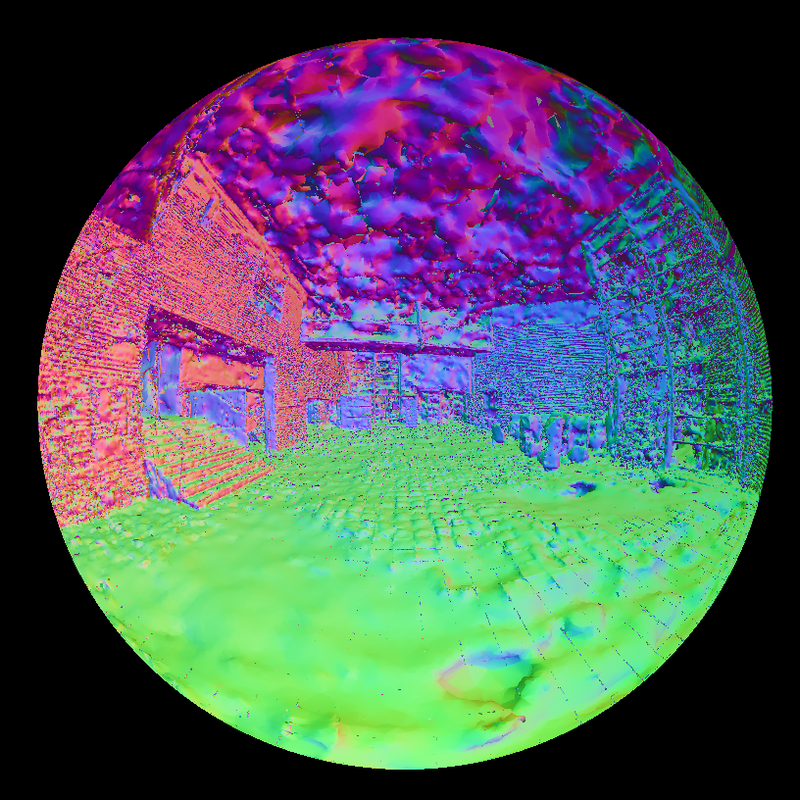}};
\node[anchor=south,align=center,font=\normalsize\bfseries,inner sep=0pt] at (3.8071,-0.1616) {\strut Ground Truth};
\node[anchor=north west,inner sep=0pt] at (3.3071,-0.1600) {\includegraphics[width=\figthreew]{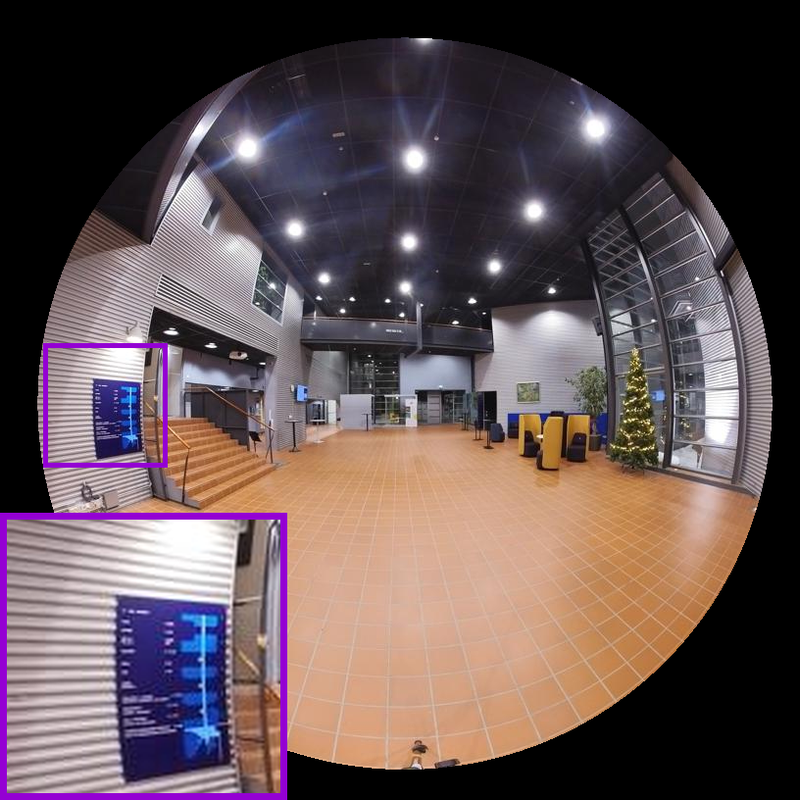}};
\node[badge,anchor=north west,fill=tabsecond] at (0.0000,-1.1820) {18.93 PSNR};
\node[badge,anchor=north east,fill=tabsecond] at (1.0000,-1.1820) {0.285 LPIPS};
\node[badge,anchor=north west,fill=tabfirst] at (1.1025,-1.1820) {25.10 PSNR};
\node[badge,anchor=north east,fill=tabfirst] at (2.1025,-1.1820) {0.137 LPIPS};
\node[badge,anchor=north,minimum width=1\figthreew,fill=tabfirst] at (2.7046,-1.1820) {Full $200^\circ$ Field of View};
\node[badge,anchor=north,minimum width=1\figthreew,fill=black!8] at (3.8071,-1.1820) {$200^\circ$ Fisheye Test View};
\end{tikzpicture}
}\vspace{-1mm}
    \caption{\textbf{GOFish} reconstructs \texttt{Hall\_In} from FIORD~\cite{gunes2025fiord}
    as a Gaussian Splatting scene and aligned mesh over its full $200^\circ$ field of view.
    3DGUT + PPISP~[\citenum{wu20253dgut}, \citenum{deutsch2026ppispphysicallyplausiblecompensationcontrol}]
    stops at $160^\circ$ and does not extract a mesh.
    Figure~\ref{fig:capabilities} demonstrates full-FoV ray tracing and physical simulation.}
    \label{fig:teaser}
\end{center}
}]

\begin{abstract}
Robots, autonomous vehicles, and consumer 360\textdegree{} cameras use fisheye lenses for wide fields of view (FoVs).
Simulation-ready digital twins require photorealistic rendering across the full FoV and an aligned collision mesh.
However, prior fisheye Gaussian Splatting methods only rasterize up to 180\textdegree{} and do not produce meshes, while Gaussian surface reconstruction methods assume pinhole training images.
We present GOFish, the first 3D Gaussian Splatting surface reconstruction method to train directly on fisheye images, rasterize FoVs beyond 180\textdegree, and extract an aligned mesh from the same scene.
Calibrated densification recovers the surface across the entire FoV, while Calibrated Mip Filtering and Wide-Angle Appearance Modeling keep pixel blur and lens falloff out of the scene and mesh.
Across the 200\textdegree{} FIORD and 205\textdegree{} FullCircle benchmarks, GOFish improves PSNR over the best prior method by 2.28 and 1.97 dB, respectively.
It also outperforms fisheye rendering baselines within their own FoVs and achieves the highest mesh F-score on FIORD and the main ScanNet++ set.
Together, the Gaussian scene and mesh form a LiDAR-free, simulation-ready digital twin that renders in real time, ray traces the full FoV, and provides collision geometry.
\end{abstract}

\section{Introduction}
\label{sec:intro}

Digital twins are virtual replicas of real environments used for robot simulation and training.
To support visual perception and physical interaction, they must combine photorealistic appearance with aligned collision geometry.
Fisheye cameras are commonly used for robotic perception and mapping~\cite{yogamani2019woodscape,liao2022kitti360}, while consumer $360^\circ$ cameras pair back-to-back fisheye lenses for omnidirectional scene capture~\cite{foroutan2026fullcircle}.
LiDAR scans are expensive to collect at scale, motivating recovery of both appearance and geometry directly from calibrated fisheye images over their full field of view (FoV).

No prior 3D Gaussian Splatting~\cite{kerbl3Dgaussians} method delivers both.
Fisheye Gaussian Splatting rasterizers~\cite{liao2024fisheyegs,wu20253dgut,yang2025directfisheye,deng2025self,huang20253dgeer} cover at most $180^\circ$, while 3DGRT~\cite{moenne20243d} ray traces the full FoV but bakes pixel blur into the scene; neither produces a mesh.
Conversely, Gaussian Opacity Fields (GOF)~\cite{Yu2024GOF} and other Gaussian surface methods~\cite{radl2025sof,radl2026come,chen2024pgsr,guedon2025milo} extract aligned meshes but assume pinhole training images.
Using fisheye captures therefore requires rectification, which discards peripheral rays and prevents rendering the native FoV required for accurate sensor simulation.

We present \textbf{GOFish}, the first 3D Gaussian Splatting surface reconstruction method to train directly on fisheye images, rasterize calibrated FoVs beyond $180^\circ$, and extract an aligned mesh from the same Gaussian scene.
A calibration-defined domain and complete admission cover every valid ray, while calibrated densification recovers the surface across the FoV.
We further introduce Calibrated Mip Filtering and Wide-Angle Appearance Modeling to keep view-specific sampling and camera response outside the shared scene representation used for meshing.
Figure~\ref{fig:teaser} illustrates GOFish's full-FoV rendering and aligned mesh on a $200^\circ$ FIORD capture~\cite{gunes2025fiord}, while Figure~\ref{fig:capabilities} demonstrates full-FoV ray tracing and mesh-based physics simulation.

GOFish leads every rendering metric across the $200^\circ$ FIORD, $205^\circ$ FullCircle, $127^\circ$ ScanNet++, and pinhole Tanks~\&~Temples benchmarks~\cite{foroutan2026fullcircle,yeshwanth2023scannet++,Knapitsch2017tanks}.
On average, it also outperforms all fisheye baselines within their respective fields of view, confirming higher rendering quality even without its coverage advantage.
Furthermore, its meshes achieve the highest F-score on FIORD and the main ScanNet++ set.
Beyond these gains, GOFish renders in real time, ray traces the full FoV, and provides collision geometry, turning native fisheye captures into LiDAR-free, simulation-ready digital twins.

In summary, our contributions are:
\begin{enumerate}
\item \textbf{Full Field-of-View Rasterization:} Calibration-defined domains and complete admission extend Gaussian rasterization beyond $180^\circ$ for the first time, with a 2.28\,dB PSNR lead over the best prior baseline on FIORD.
\item \textbf{Wide-Angle Appearance Modeling:} Camera-fixed attenuation captures steep rim falloff outside the 3D scene, improving FIORD PSNR by 3.17\,dB over PPISP~\cite{deutsch2026ppispphysicallyplausiblecompensationcontrol}.
\item \textbf{Full Field-of-View Mesh Extraction:} Calibrated densification and filtering enable an aligned full-FoV opacity-field mesh, achieving the highest mean F-score and recall among Gaussian surface methods on FIORD.
\end{enumerate}

\section{Related Work}
\label{sec:related}

Gaussian surface methods extract meshes from 3D Gaussian Splatting scenes~\cite{kerbl3Dgaussians}.
Gaussian Opacity Fields (GOF)~\cite{Yu2024GOF} constructs a 3D opacity field by compositing per-view Gaussian responses and taking the minimum across views.
SOF~\cite{radl2025sof} adds per-pixel sorting, geometry losses, and fast extraction, while CoMe~\cite{radl2026come} adds confidence-driven optimization.
PGSR~\cite{chen2024pgsr} uses planar Gaussians with geometric consistency.
MiLo~\cite{guedon2025milo} jointly refines a Gaussian scene and its extracted mesh.
Other approaches use surface-oriented primitives or explicit mesh connectivity~\cite{huang20242dgs,guedon2024sugar,zhang2025qgs,held2025meshsplatting}.
Existing Gaussian surface methods assume pinhole or rectified inputs, precluding direct mesh reconstruction from fisheye imagery.

Fisheye Gaussian Splatting methods train directly on distorted images.
Fisheye-GS~\cite{liao2024fisheyegs} and DirectFisheye-GS~\cite{yang2025directfisheye} modify the splatting projection, while Self-Calibrating GS~\cite{deng2025self} learns a lens deformation.
3DGUT~\cite{wu20253dgut} supports arbitrary camera models, and 3DGEER~\cite{huang20253dgeer} evaluates Gaussians exactly along rays up to $180^\circ$.
However, their rasterizers stop short of the full per-lens FoVs of the dual-fisheye $360^\circ$ cameras used to capture FIORD and FullCircle~\cite{gunes2025fiord,foroutan2026fullcircle}.
3DGRT~\cite{moenne20243d} ray traces the full field of view with one ray per pixel, baking pixel blur into the Gaussian scene while rendering substantially more slowly than rasterized methods.
Omnidirectional methods~\cite{lee2024odgs,li2025omnigs,foroutan2026fullcircle} target panoramas and casual $360^\circ$ captures.
None of these methods includes mesh extraction.
GOFish instead trains directly on fisheye images, rasterizes the full field of view, and extracts a mesh from the same Gaussian scene.

When unmodeled, sampling and appearance effects are absorbed into the Gaussian scene, degrading rendering and mesh extraction.
Mip-Splatting~\cite{yu2024mip} suppresses aliasing in pinhole Gaussian Splatting with 3D and screen-space filters.
Gaussian surface methods use VastGaussian's appearance model~\cite{lin2024vastgaussian} to keep photometric inconsistencies from creating spurious geometry~\cite{Yu2024GOF,radl2025sof}.
PPISP~\cite{deutsch2026ppispphysicallyplausiblecompensationcontrol} instead limits corrections to camera effects, but its low-order vignetting misses steep rim falloff in wide-FoV fisheye images.
GOFish adapts Mip-Splatting's filters to calibrated fisheye cameras and extends PPISP to model this falloff.

\section{Background}
\label{sec:background}

3D Gaussian Splatting (3DGS)~\cite{kerbl3Dgaussians} represents a scene using Gaussians, each with mean $\Mat\mu_i$, covariance $\Mat\Sigma_i$, opacity $o_i$, and view-dependent color.
Its rasterizer linearizes a pinhole projection at each Gaussian mean~\cite{zwicker2002ewa}, so the standard rasterizer cannot consume native fisheye images or represent rays beyond $90^\circ$.
GOFish instead builds on 3DGUT~\cite{wu20253dgut}, which evaluates Gaussians directly in 3D along calibrated pixel rays.
A ray is $\Mat r(t)=\Mat o+t\Mat d$, where $\Mat o$ is the camera center, $\Mat d$ a unit direction, and $t\ge0$ the distance along the ray.
For Gaussian $i$, 3DGUT's ray-maximum response occurs at the minimum Mahalanobis distance over the valid ray interval:
\begin{equation}
 \delta_i=\min_{t\in\mathcal I}\|\Mat r(t)-\Mat\mu_i\|_{\Mat\Sigma_i^{-1}},
 \qquad \alpha_i=\min\!\big(\alpha_{\max},o_i e^{-\delta_i^2/2}\big).
 \label{eq:intrinsic_render_alpha}
\end{equation}
Here $\mathcal I$ is the valid distance interval, and $\alpha_{\max}$ caps the response.
GOF~\cite{Yu2024GOF} meshes this response, while 3DGRT~\cite{moenne20243d} ray traces the same Gaussians.

3DGUT depth-sorts Gaussians and composites their alphas as in 3DGS.
However, for a ray at angle $\theta$ with optical-axis component $d_z$, camera-space depth $z=td_z$ becomes negative beyond $90^\circ$, so sorting by depth reverses the near-to-far order.
Its tile selection also approximates each Gaussian's image extent by projecting unscented-transform sigma points~\cite{julier1997ut} through the camera.
Because $\alpha_i$ instead depends on the ray-to-Gaussian Mahalanobis distance $\delta_i$, this bound can omit tiles whose rays reach the Gaussian.
Figure~\ref{fig:failures} illustrates the tile mismatch and two field-of-view limits: the half-angle crop stops at $80^\circ$ from the axis, and the positive-depth test rejects valid content beyond $90^\circ$.

\section{Method}
\label{sec:method}
\begin{table*}[t]
\centering
\caption{\textbf{Quantitative comparison against wide-angle rendering baselines.}  \rankkey{}.
GOFish's gain at each baseline's FoV is \rankswatch{tabgain}{highlighted}.  $\ast$ Appendix~\ref{app:baselines} details baseline adaptations for wide-angle rendering.}
\label{tab:rendering}
\setlength{\tabcolsep}{2.3pt}
\resizebox{\textwidth}{!}{%
\begin{tabular}{@{}l c ccc ccc ccc ccc ccc@{}}
\toprule
& & \multicolumn{6}{c}{\textbf{FIORD}~\cite{gunes2025fiord}}
& \multicolumn{6}{c}{\textbf{FullCircle}~\cite{foroutan2026fullcircle}}
& \multicolumn{3}{c}{\textbf{ScanNet++}~\cite{yeshwanth2023scannet++}} \\
\cmidrule(lr){3-8}\cmidrule(lr){9-14}\cmidrule(l){15-17}
& & \multicolumn{3}{c}{Full ($200^\circ$)}
& \multicolumn{3}{c}{GOFish @ FoV}
& \multicolumn{3}{c}{Full ($205^\circ$)}
& \multicolumn{3}{c}{GOFish @ FoV}
& \multicolumn{3}{c}{Full ($127^\circ$)} \\
\cmidrule(lr){3-5}\cmidrule(lr){6-8}\cmidrule(lr){9-11}\cmidrule(lr){12-14}\cmidrule(l){15-17}
Method & FoV
& PSNR$\uparrow$ & SSIM$\uparrow$ & LPIPS$\downarrow$
& $\Delta$PSNR$\uparrow$ & $\Delta$SSIM$\uparrow$ & $\Delta$LPIPS$\uparrow$
& PSNR$\uparrow$ & SSIM$\uparrow$ & LPIPS$\downarrow$
& $\Delta$PSNR$\uparrow$ & $\Delta$SSIM$\uparrow$ & $\Delta$LPIPS$\uparrow$
& PSNR$\uparrow$ & SSIM$\uparrow$ & LPIPS$\downarrow$
\\
\midrule
Fisheye-GS$^\ast$~\cite{liao2024fisheyegs} & $145^\circ$ & 17.78 & 0.685 & 0.395 & \cellgain{+5.06} & \cellgain{+0.119} & \cellgain{+0.167} & 14.01 & 0.630 & 0.495 & \cellgain{+14.05} & \cellgain{+0.263} & \cellgain{+0.397} & 26.28 & 0.895 & 0.256 \\
3DGUT~\cite{wu20253dgut} & $160^\circ$ & 19.86 & 0.740 & 0.316 & \cellgain{+3.95} & \cellgain{+0.057} & \cellgain{+0.046} & 21.58 & 0.842 & 0.262 & \cellgain{+1.62} & \cellgain{+0.013} & \cellgain{+0.030} & 28.93 & 0.913 & 0.214 \\
3DGUT + PPISP~[\citenum{wu20253dgut}, \citenum{deutsch2026ppispphysicallyplausiblecompensationcontrol}] & $160^\circ$ & 20.18 & 0.743 & 0.312 & \cellgain{+4.04} & \cellgain{+0.058} & \cellgain{+0.048} & 21.45 & 0.841 & 0.262 & \cellgain{+1.88} & \cellgain{+0.014} & \cellgain{+0.030} & 28.93 & 0.913 & 0.213 \\
3DGEER$^\ast$~\cite{huang20253dgeer} & $178^\circ$ & 12.79 & 0.679 & 0.328 & \cellgain{+2.03} & \cellgain{+0.026} & \cellgain{+0.027} & 12.65 & 0.715 & 0.336 & \cellgain{+4.25} & \cellgain{+0.040} & \cellgain{+0.036} & 28.59 & \cellthird{0.914} & \cellthird{0.201} \\
Self-Calibrating GS$^\ast$~\cite{deng2025self} & $180^\circ$ & 17.73 & 0.688 & 0.352 & \cellgain{+5.65} & \cellgain{+0.129} & \cellgain{+0.135} & 17.29 & 0.669 & 0.435 & \cellgain{+10.44} & \cellgain{+0.219} & \cellgain{+0.291} & 27.67 & 0.885 & 0.240 \\
DirectFisheye-GS~\cite{yang2025directfisheye} & $180^\circ$ & \cellthird{22.08} & \cellthird{0.753} & \cellthird{0.304} & \cellgain{+2.21} & \cellgain{+0.070} & \cellgain{+0.079} & 20.67 & 0.805 & 0.292 & \cellgain{+5.31} & \cellgain{+0.053} & \cellgain{+0.069} & 29.28 & 0.910 & 0.207 \\
3DGRT~\cite{moenne20243d} & Full & 15.84 & 0.685 & 0.343 & \cellgain{+8.52} & \cellgain{+0.142} & \cellgain{+0.133} & 23.32 & 0.870 & 0.228 & \cellgain{+2.15} & \cellgain{+0.005} & \cellgain{+0.020} & 28.83 & 0.904 & 0.278 \\
3DGRT + PPISP~[\citenum{moenne20243d}, \citenum{deutsch2026ppispphysicallyplausiblecompensationcontrol}] & Full & 18.42 & 0.712 & 0.313 & \cellgain{+5.94} & \cellgain{+0.115} & \cellgain{+0.103} & \cellthird{23.50} & \cellsecond{0.872} & \cellthird{0.222} & \cellgain{+1.97} & \cellgain{+0.004} & \cellgain{+0.015} & \cellthird{29.71} & 0.911 & 0.247 \\
\midrule
\textbf{GOFish (Ours)} & Full & \cellbest{24.36} & \cellbest{0.827} & \cellbest{0.210} &  &  &  & \cellbest{25.47} & \cellbest{0.876} & \cellbest{0.207} &  &  &  & \cellbest{30.17} & \cellbest{0.921} & \cellsecond{0.172} \\
\textbf{+ Geometry Objective} & Full & \cellsecond{24.18} & \cellsecond{0.820} & \cellsecond{0.217} &  &  &  & \cellsecond{25.31} & \cellthird{0.871} & \cellsecond{0.213} &  &  &  & \cellsecond{30.12} & \cellsecond{0.921} & \cellbest{0.171} \\
\bottomrule
\end{tabular}}
\end{table*}

\textbf{GOFish} builds on 3DGUT~\cite{wu20253dgut} to train one Gaussian scene directly from fisheye images over the camera's full field of view (FoV) and extract an aligned mesh.
Calibration defines valid pixels, complete admission assigns Gaussians to tiles, and calibrated densification recovers the surface.
Calibrated Mip Filtering and Wide-Angle Appearance Modeling keep view-specific blur and sensor falloff out of the scene and mesh.
GOFish then extracts the mesh from the resulting pixel-filter-free opacity field.
Together, the Gaussian scene and mesh form a simulation-ready digital twin.

\subsection{Full Field-of-View Rasterization}
\label{ssec:rasterization}

3DGUT's rasterizer defines a ray for each pixel, assigns Gaussians to image tiles, and composites them along each ray in depth order.
As Figure~\ref{fig:failures} shows, each step breaks at wide angles: rays outside its $160^\circ$ field of view are cut and receive no supervision, projected extents omit Gaussians from tiles their rays reach, and depth order reverses beyond $90^\circ$.
GOFish replaces these steps with a calibration-defined domain, complete admission, and ray-distance sorting, extending rasterization to the full calibrated FoV.

A pixel is valid if it lies in the calibrated image domain and defines a unique ray.
For a radial fisheye mapping ray angle $\theta$ to image radius $\rho(\theta)$, rays are unique up to
\begin{equation}
 \theta_{\max}=\sup\{a\in(0,\pi]:\rho'(\theta)>0\ \forall\theta\in[0,a]\},
 \label{eq:physical_theta_max}
\end{equation}
where $\rho'=d\rho/d\theta$.
The calibration itself, rather than a preset cutoff, therefore bounds the valid domain of any radial central camera, including FoVs wider than $200^\circ$.
Rendering, admission, and meshing share this domain, which Appendix~\ref{app:domain} defines formally.

\begin{figure}[!t]
  \centering
  \resizebox{\linewidth}{!}{%
  \begin{tikzpicture}[scale=0.92,font=\small]
    \begin{scope}
      \coordinate (C) at (0,0);
      \fill[blue!12] (C) -- (10:1.9) arc (10:170:1.9) -- cycle;
      \fill[orange!30] (C) -- (0:1.9) arc (0:10:1.9) -- cycle;
      \fill[orange!30] (C) -- (170:1.9) arc (170:180:1.9) -- cycle;
      \fill[red!30] (C) -- (-10:1.9) arc (-10:0:1.9) -- cycle;
      \fill[red!30] (C) -- (180:1.9) arc (180:190:1.9) -- cycle;
      \foreach \t in {-10,10,...,190} \draw[gray!60] (C) -- (\t:1.9);
      \draw[thick,dashed] (-2.1,0) -- (2.1,0) node[right]{$z=0$};
      \draw[-{Latex[length=1.6mm]},thick] (C) -- (90:2.15) node[above]{Optical axis};
      \fill (C) circle (1.2pt);
      \node[align=center] at (0,-0.9) {\textbf{(a)} Rays past $90^\circ$ have $z<0$};
    \end{scope}
    \begin{scope}[xshift=3.3cm,yshift=-0.55cm]
      \foreach \i in {0,...,5} \foreach \j in {0,...,4} \draw[gray!50] (\i*0.5,\j*0.5) rectangle ++(0.5,0.5);
      \foreach \i/\j in {2/1,2/2,2/3,3/1,3/2,3/3} \fill[blue!25] (\i*0.5,\j*0.5) rectangle ++(0.5,0.5);
      \foreach \i/\j in {1/1,1/2,3/4,4/2,4/3,4/4} \fill[pattern=north east lines,pattern color=red] (\i*0.5,\j*0.5) rectangle ++(0.5,0.5);
      \draw[blue,thick] (1.5,1.25) ellipse (0.45 and 0.35);
      \draw[red,thick,rotate around={35:(1.55,1.3)}] (1.55,1.3) ellipse (1.05 and 0.5);
      \node[align=center] at (1.5,-0.35) {\textbf{(b)} Projected vs.\ ray support};
    \end{scope}
  \end{tikzpicture}}\\[4pt]
  \begin{minipage}[t]{0.49\linewidth}\centering
    \includegraphics[width=\linewidth]{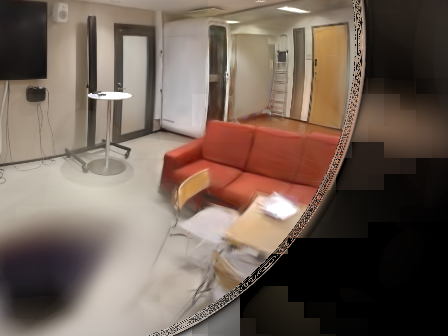}\\[-1pt]
    {\small\textbf{3DGUT + PPISP}~[\citenum{wu20253dgut}, \citenum{deutsch2026ppispphysicallyplausiblecompensationcontrol}]}\end{minipage}\hfill
  \begin{minipage}[t]{0.49\linewidth}\centering
    \includegraphics[width=\linewidth]{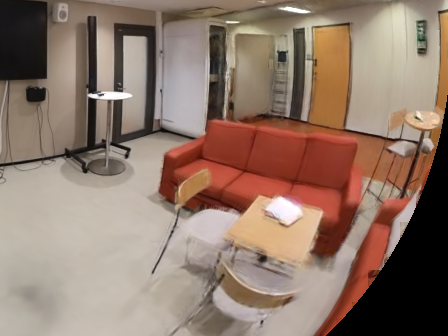}\\[-1pt]
    {\small\textbf{GOFish (Ours)}}\end{minipage}
  \caption{\textbf{3DGUT failures over a full field of view.}
  (a)~Its half-angle limit cuts rays beyond $80^\circ$ from the axis
  (\textcolor{orange!90!black}{orange}), and its positive-depth test fails
  beyond $90^\circ$ (\textcolor{red!80!black}{red}).  (b)~Tiles come from the
  projected sigma-point extent (\textcolor{blue!80!black}{blue}) but alpha from
  the exact ray response (\textcolor{red!80!black}{red}), so the Gaussian is
  skipped in the hatched tiles.  (Bottom)~3DGUT + PPISP leaves a cutoff ring
  and tile-shaped smears, while GOFish renders the full FoV cleanly.}
  \label{fig:failures}
\end{figure}%

Let $\mathcal D_v$ be the valid image domain of view $v$ and $\mathcal P_v$ its pixel centers.
We require \emph{complete admission}: Gaussian $i$ is drawn in every tile $\mathcal T$ containing a valid pixel whose ray intersects its effective support $S_i$, the region where the renderer can return a visible alpha:
\begin{equation}
 \min_{u\in \mathcal T\cap\mathcal P_v\cap\mathcal D_v}\delta_i(u)\le r_i
 \;\Longrightarrow\; i \text{ is admitted to } \mathcal T.
 \label{eq:complete_ray_support}
\end{equation}
Here $\delta_i(u)$ is pixel $u$'s ray-to-Gaussian Mahalanobis distance from Eq.~\eqref{eq:intrinsic_render_alpha}.
We cap the opacity-dependent radius $r_i$ at $r_{\mathrm{adm}}{=}3$, conservatively enclosing 3DGUT's renderer cutoff.
3DGUT's projected sigma-point extents can omit tiles whose rays reach the Gaussian, and conventional projection-based bounds fail beyond $90^\circ$.
We instead use three conservative rejection tests, defined in Appendix~\ref{app:admission}, each of which drops a Gaussian only if it proves that every valid ray in the tile misses $S_i$.
Together, they guarantee:
\begin{proposition}[Complete admission]\label{prop:admission}
Given a central camera, exact arithmetic, and conservative enclosures, the three tests of Appendix~\ref{app:admission} admit Gaussian $i$ to every tile containing a valid pixel whose ray intersects $S_i$.
\end{proposition}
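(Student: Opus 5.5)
The proof is a contrapositive argument that reduces the proposition to soundness of each test in isolation. Suppose tile $\mathcal T$ contains a valid pixel $u\in\mathcal T\cap\mathcal P_v\cap\mathcal D_v$ whose ray $\Mat r_u(t)=\Mat o+t\Mat d_u$, $t\in\mathcal I$, satisfies $\delta_i(u)\le r_i$, so it meets $S_i$. We show that none of the three tests of Appendix~\ref{app:admission} can reject $i$ from $\mathcal T$. Admission is the conjunction of ``not rejected'' over the three tests, so it suffices that each test is \emph{sound}: it rejects only when it certifies
\[
\forall u\in\mathcal T\cap\mathcal P_v\cap\mathcal D_v:\ \delta_i(u)>r_i .
\]
The only structural facts used are the following. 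Centrality gives a single origin $\Mat o$ for all rays. The valid domain of Eq.~\eqref{eq:physical_theta_max} gives each pixel a unique, well-defined direction $\Mat d_u$, so the set of tile directions $\{\Mat d_u\}$ is a bounded subset of the sphere. The effective support is the ellipsoid $S_i=\{\Mat x:\|\Mat x-\Mat\mu_i\|_{\Mat\Sigma_i^{-1}}\le r_i\}$ with $r_i\le r_{\mathrm{adm}}$.

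The first step is to whiten the geometry. Substitute $\Mat y=\Mat\Sigma_i^{-1/2}(\Mat x-\Mat\mu_i)$. Then $S_i$ becomes the ball of radius $r_i$, each ray stays a ray, and $\delta_i(u)$ becomes the Euclidean distance from the origin to the whitened ray segment. In these coordinates, ``the ray meets $S_i$'' is equivalent to ``the direction $\Mat d_u$ lies in the cone $K_i$ from $\Mat o$ subtending $S_i$, with the intersection occurring at some $t\in\mathcal I$.''

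Each test is then a specific conservative certificate, and I would verify the three in the same way. The likely candidates are:
\begin{itemize}
\item an interval or depth-range test, rejecting when $S_i$ lies entirely outside $\{\Mat o+t\Mat d:t\in\mathcal I\}$, using a bounding sphere of $S_i$;
\item an angular cone test, rejecting when the enclosing cone of the tile's directions and the cone $K_i$ are angularly disjoint, a comparison of angle between axes against the sum of half-angles;
\item a separating-plane or sphere-distance test, rejecting when every tile ray provably stays farther than $r_i$ in Mahalanobis distance.
\end{itemize}
For each test the argument has two parts. First, the quantity it computes from conservative enclosures bounds the true quantity over all valid pixels in $\mathcal T$: the tile's direction cone, or a direction box, \emph{contains} every $\Mat d_u$ with $u\in\mathcal T\cap\mathcal D_v$. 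Second, the rejection inequality, applied to that enclosure, implies disjointness for every contained direction. Both parts are monotone: enlarging the direction set or the support can only turn a rejection into an admission. This is exactly where ``conservative enclosures'' and ``exact arithmetic'' enter, since no rounding can flip an inequality. No test may use depth sign or projected extents, so it remains valid beyond $90^\circ$.

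The main obstacle is the enclosure step for tile directions under a nonlinear fisheye map near and beyond $90^\circ$. Corner rays alone need not bound the direction set, because $\rho$ is nonlinear. My approach is as follows. Use $\rho'>0$ on $[0,\theta_{\max}]$, so that $\theta$ is monotone in the image radius and the azimuth is monotone in the image angle. The exact $(\theta,\phi)$ range of a tile is then attained over its radius and angle extrema, which gives a spherical box containing all $\Mat d_u$. Inflate that box to a cone by the box's circumradius. Clipping to $\mathcal D_v$ can only shrink the set, so it preserves containment.

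With containment in hand, each test's soundness is a direct inequality: a triangle inequality on angles for the cone test, and a support-function bound for the plane test. The proposition then follows by the contrapositive: a rejected Gaussian has no valid tile ray meeting $S_i$.
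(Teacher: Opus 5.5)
Your reduction to per-test soundness is the right skeleton; the paper's proof does the same thing directly, fixing a witness $\Mat x=\Mat o+t\Mat d_v(u)\in S_i$ and showing that no test can reject. The gap is that the proposal never verifies the three tests the proposition is actually about. Appendix~\ref{app:admission} defines:
\begin{itemize}
\item a \emph{Domain} test, which rejects a tile with no valid pixel, or when the angular cap of $S_i$ lies beyond $\theta_{\max}$;
\item a \emph{Taylor} test, which rejects when no pixel center lies in the rectangle of Eq.~\eqref{eq:taylor_rect} enclosing $\pi(S_i)$;
\item a \emph{Cone} test, given by Eq.~\eqref{eq:cone_reject}.
\end{itemize}
Your guessed depth-range, angular-disjointness, and separating-plane tests do not match these. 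Your claim that ``no test may use depth sign or projected extents'' is also false here: the Taylor test is exactly a projected-extent test.

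The missing step for the Taylor test uses centrality. The valid pixel $u$ satisfies $\pi(\Mat o+t\Mat d_v(u))=u$ for all $t>0$, so the witness gives $u=\pi(\Mat x)\in\pi(S_i)$. By hypothesis that set lies in the rectangle, so the tile has a pixel center inside it and cannot be rejected. For the Cone test, the certificate is neither an angle-sum nor a support-function inequality. Write
\[
f(t)=q_i(\Mat o+t\Mat d)^2=\kappa+r^2-2t\,\Mat d^{\top}\Mat p+t^2\,\Mat d^{\top}\Mat\Sigma_i^{-1}\Mat d,\qquad \kappa>0 .
\]
Reaching $f(t)\le r^2$ at some $t\ge0$ requires both $\Mat d^{\top}\Mat p>0$ and $\Mat d^{\top}\Mat K\Mat d\ge0$. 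You must then bound the linear form, and the \emph{indefinite} quadratic form via $\Mat d^{\top}\Mat K\Mat d\le\Mat c^{\top}\Mat K\Mat c+2\varrho\|\Mat K\Mat c\|+\varrho^2\|\Mat K\|_F$, over $\Mat d\in B(\Mat c,\varrho)$. Finally use $q_i(\Mat x)\le r_i\le r_{\mathrm{adm}}\le r$ to conclude.

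Two further points. Your equivalence ``the ray meets $S_i$ iff $\Mat d_u$ lies in the cone subtending $S_i$'' degenerates when $\Mat o\in S_i$ (the $t=0$ case), where every direction meets the support and the half-angle is undefined. The paper handles this case separately. The Taylor and Cone tests apply only when the camera is certified outside $B(\Mat\mu_i,R_i)$ or the enlarged support; otherwise every tile containing a valid pixel is retained. Your argument needs this case.

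Conversely, the step you identify as the main obstacle is not something the proof has to construct. That step is building a region that contains all tile directions under the nonlinear fisheye map. The proposition's ``conservative enclosure'' hypotheses already assume the needed containments: the tile directions lie in $B(\Mat c,\varrho)$, all directions to $S_i$ lie in the angular cap, and $\pi(S_i)$ lies in the Taylor rectangle. Once the actual three tests are checked against the witness point, the proof is only a few lines.
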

The same appendix gives the proof.
In practice, we round every rejection bound outward to keep the tests conservative under floating-point arithmetic.
Because $d_z<0$ beyond $90^\circ$, depth $z=td_z$ reverses the near-to-far order.
We therefore sort the admitted Gaussians at each pixel by ray distance, following StopThePop~\cite{radl2024stopthepop}.%

\subsection{Calibrated Mip Filtering}
\label{ssec:factorization}

Complete ray coverage alone is not enough: each image pixel averages radiance over a finite footprint whose size and shape vary across a fisheye image.
The 3D Gaussian scene is shared by every view and later meshed, so this view-specific blur must remain outside it.
Without a pixel filter, however, the exact ray response can match the blur only by baking it into the scene.
For pinhole cameras, Mip-Splatting~\cite{yu2024mip} separates a view-independent 3D scene filter from a 2D pixel filter.
The 3D filter limits scene detail to what the finest training view can resolve, while the 2D filter matches each view's pixel footprint.

As illustrated in Figure~\ref{fig:filtering}, we adapt this 3D filter to calibrated cameras by deriving its scale from the camera Jacobian while leaving opacity unchanged.
For each training view $v$ observing Gaussian $i$, let the calibrated Jacobian $\Mat J_{i,v}$ be the $2{\times}3$ Jacobian of $v$'s lens-distorted projection at $\Mat\mu_i$.
The smaller singular value $\sigma_{\min}(\Mat J_{i,v})$ is the coarser of the view's two image-space sampling rates.
Using Mip-Splatting's 3D-filter variance of $0.2\,\mathrm{px}^2$, we set $\epsilon_i=\sqrt{0.2}/\max_v\sigma_{\min}(\Mat J_{i,v})$.
The maximum selects the finest observing view, and dividing by its sampling rate converts the pixel-space filter width to a world-space radius.
From here on, $\Mat\Sigma_i$ denotes Gaussian $i$'s covariance after adding $\epsilon_i^2\Mat I_3$.

The finest view sets the 3D filter, so a Gaussian may project to less than one pixel in a coarser view.
For each view $v$, we therefore add the minimum-trace 3D covariance that projects to 3DGS's fixed screen-space filter of variance $\gamma{=}0.3\,\mathrm{px}^2$~\cite{kerbl3Dgaussians}.
We also rescale opacity to preserve the image integral.
Let $\Mat J=\Mat J_{i,v}$ have full row rank, and let $\Mat J^{+}$ denote its pseudo-inverse.
Then
\begin{equation}
 \Mat C=\Mat\Sigma_i+\gamma\Mat J^{+}\Mat J^{+\top},\qquad
 m=\sqrt{\frac{\det(\Mat J\Mat\Sigma_i\Mat J^{\top})}
              {\det(\Mat J\Mat C\Mat J^{\top})}}.
 \label{eq:observation_m2}
\end{equation}
View $v$ then evaluates Eq.~\eqref{eq:intrinsic_render_alpha} with covariance $\Mat C$ and opacity $m\,o_i$.
$\Mat C$ and $m$ are recomputed for every view and never stored, so neither enters mesh extraction.
The covariance lift has the following minimality guarantee:
\begin{proposition}[Minimum-trace footprint lift]\label{prop:lift}
For full-row-rank $\Mat J$ and $\gamma>0$, $\gamma\Mat J^{+}\Mat J^{+\top}$ is the unique positive-semidefinite matrix $\Mat X$ of minimum trace satisfying $\Mat J\Mat X\Mat J^{\top}=\gamma\Mat I_2$.
It adds no variance along the viewing-ray direction, which spans the null space of $\Mat J$.
\end{proposition}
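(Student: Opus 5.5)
The plan is to write $\Mat J=\Mat U[\Mat S\;\Mat 0]\Mat V^{\top}$ as a singular value decomposition. Here $\Mat S=\mathrm{diag}(s_1,s_2)$ with $s_1,s_2>0$ by full row rank, and $\Mat V=[\Mat V_1\;\Mat v_3]$ is orthogonal, with $\Mat v_3$ spanning $\ker\Mat J$. Then $\Mat J^{+}=\Mat V_1\Mat S^{-1}\Mat U^{\top}$, so $\gamma\Mat J^{+}\Mat J^{+\top}=\gamma\Mat V_1\Mat S^{-2}\Mat V_1^{\top}$. This matrix is visibly positive semidefinite. It also satisfies $\Mat J(\gamma\Mat J^{+}\Mat J^{+\top})\Mat J^{\top}=\gamma(\Mat J\Mat J^{+})(\Mat J\Mat J^{+})^{\top}=\gamma\Mat I_2$, because $\Mat J\Mat J^{+}=\Mat I_2$ for full row rank. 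Its trace is $\gamma(s_1^{-2}+s_2^{-2})$. Feasibility is therefore routine.

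For minimality and uniqueness, I change coordinates to $\Mat Y=\Mat V^{\top}\Mat X\Mat V$ and write $\Mat Y$ in blocks $\begin{pmatrix}\Mat A&\Mat b\\ \Mat b^{\top}&c\end{pmatrix}$, with $\Mat A$ of size $2\times2$. The constraint becomes $\Mat U\Mat S\Mat A\Mat S\Mat U^{\top}=\gamma\Mat I_2$. This forces $\Mat A=\gamma\Mat S^{-2}$ for every feasible $\Mat X$, so the block $\Mat A$ is fully determined. Since trace is invariant under orthogonal conjugation, $\mathrm{tr}\,\Mat X=\mathrm{tr}\,\Mat A+c=\gamma(s_1^{-2}+s_2^{-2})+c$. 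Positive semidefiniteness of $\Mat Y$ gives $c\ge0$, so the trace is at least the candidate's value.

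Equality forces $c=0$. For a positive semidefinite matrix, a zero diagonal entry forces its row and column to vanish, since every $2\times2$ principal minor must be nonnegative. Hence $\Mat b=\Mat 0$ and $\Mat Y=\mathrm{diag}(\gamma\Mat S^{-2},0)$, which gives back exactly $\gamma\Mat V_1\Mat S^{-2}\Mat V_1^{\top}=\gamma\Mat J^{+}\Mat J^{+\top}$. This establishes uniqueness. The step needing the most care is this zero-diagonal argument: it is where positive semidefiniteness, not just the trace value, is used to rule out nonzero off-diagonal coupling to the null direction.

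For the second claim, the minimizer satisfies $\Mat X\Mat v_3=\Mat 0$, because $\Mat V_1^{\top}\Mat v_3=\Mat 0$. So the added variance along $\Mat v_3$, namely $\Mat v_3^{\top}\Mat X\Mat v_3$, is zero. It remains to show that $\ker\Mat J$ is the viewing-ray direction. For a central camera, the projection is constant along each ray through the camera center. Differentiating the projection along $\Mat\mu_i-\Mat o$ at $\Mat\mu_i$ therefore gives zero, so this direction lies in $\ker\Mat J$. Full row rank makes the kernel one-dimensional, so the viewing-ray direction spans it.
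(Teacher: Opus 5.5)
Your proof is correct and takes essentially the same route as the paper's. Both block-decompose $\Mat X$ in an orthonormal frame adapted to the row space and null space of $\Mat J$, observe that the constraint pins the row-space block, and use positive semidefiniteness to force the null-direction diagonal entry, and then the coupling block, to vanish at the minimum; your SVD frame $[\Mat V_1\ \Mat v_3]$ is a concrete instance of the paper's row-space basis plus unit null vector $\Mat n$. Your extra argument that the viewing ray spans $\ker\Mat J$, because a central projection is constant along rays through the camera center, fills in a step the paper states without proof.
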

\begin{figure}[t]
  \centering
  \begin{tikzpicture}[
      font=\small,
      >=Latex,
      flow/.style={-{Latex[length=1.5mm]},thick}]
    \fill (0.02,1.83) circle (1.3pt);
    \draw[thick,gray!75] (0.02,1.83) -- (0.79,2.17) -- (0.79,1.49) -- cycle;
    \fill (0.16,1.15) circle (1.3pt);
    \draw[thick,gray!75] (0.16,1.15) -- (0.86,1.42) -- (0.86,0.88) -- cycle;
    \node[align=center] at (0.40,0.43) {Training\\views};

    \draw[flow] (0.92,1.72) -- (2.87,1.72);
    \node[align=center] at (1.90,2.27) {Finest\\sampling rate};
    \node at (1.90,1.42) {$+\,\epsilon_i^2\Mat I_3$};

    \begin{scope}[rotate around={-18:(3.55,1.72)}]
      \draw[thick,draw=cvprblue,fill=cvprblue!18] (3.55,1.72) ellipse (0.62 and 0.38);
      \draw[draw=cvprblue!65] (3.55,1.72) ellipse (0.37 and 0.21);
    \end{scope}
    \node[align=center] at (3.55,0.72) {Stored Gaussian\\covariance $\Mat\Sigma_i$};

    \draw[flow,draw=orange!90!black] (4.16,1.87) .. controls (4.75,2.08) and (5.18,2.35) .. (5.79,2.38);
    \node[align=center,text=orange!90!black] at (4.20,2.72)
      {Render-only footprint\\$+\,\gamma\Mat J^+\Mat J^{+\top}$};
    \begin{scope}[rotate around={-18:(6.47,2.38)}]
      \draw[thick,draw=cvprblue,fill=cvprblue!18] (6.47,2.38) ellipse (0.54 and 0.32);
      \draw[thick,dashed,draw=orange!90!black] (6.47,2.38) ellipse (0.78 and 0.48);
    \end{scope}
    \node at (6.47,1.58) {Render view $v$};

    \draw[flow,draw=cvprblue] (4.06,1.43) .. controls (4.65,1.12) and (5.18,0.76) .. (5.77,0.69);
    \draw[thick,draw=cvprblue,fill=cvprblue!8]
      (5.92,0.34) -- (6.25,1.08) -- (6.69,0.82) -- (7.06,1.14) -- (7.22,0.31) -- cycle;
    \draw[draw=cvprblue!75] (5.92,0.34) -- (6.69,0.82) -- (7.22,0.31);
    \draw[draw=cvprblue!75] (6.25,1.08) -- (6.47,0.58) -- (7.06,1.14);
    \node at (6.57,0.05) {Mesh extraction};
  \end{tikzpicture}\vspace{-1.2mm}
  \caption{\textbf{Calibrated Mip Filtering.}
  The finest training view's Jacobian sets the 3D filter $+\epsilon_i^2\Mat I_3$ in the stored Gaussian covariance $\Mat\Sigma_i$ (\textcolor{cvprblue}{blue}).
  Rendering view $v$ adds the footprint $+\gamma\Mat J^+\Mat J^{+\top}$ and rescales opacity by $m$ (\textcolor{orange!90!black}{orange}), while meshing uses $\Mat\Sigma_i$ directly.}
  \label{fig:filtering}
\end{figure}%

The added covariance projects through $\Mat J$ to $\gamma\Mat I_2$, matching the form of Mip-Splatting's 2D filter but using 3DGS's variance.
Within the same linearization, the ray-maximum response projects to a 2D Gaussian with covariance $\Mat J\Mat C\Mat J^{\top}$, whose image integral scales with $\det(\Mat J\Mat C\Mat J^{\top})^{1/2}$.
The factor $m$ in Eq.~\eqref{eq:observation_m2} cancels this growth, so the footprint preserves the image integral whenever the $\alpha_{\max}$ cap is inactive.
Mip-Splatting instead rescales opacity by the 3D volume ratio $\sqrt{\det\Mat\Sigma_i/\det\Mat C}$, which preserves the Gaussian's 3D integral.
Proposition~\ref{prop:volume} shows that this ratio equals $m$ times a ray-dependent factor of at most one:
\begin{proposition}[Ray-maximum normalization]\label{prop:volume}
After linearizing the camera projection at $\Mat\mu_i$, let $\Mat d$ denote the viewing-ray direction.
The conditional variance along this ray is $s(\Mat\Sigma)=1/(\Mat d^{\top}\Mat\Sigma^{-1}\Mat d)$.
The 3D volume ratio and image-space normalization are then related by
\begin{equation}
 \sqrt{\det\Mat\Sigma_i/\det\Mat C}=m\sqrt{s(\Mat\Sigma_i)/s(\Mat C)}\le m,
 \label{eq:ewa_vs_m}
\end{equation}
with equality exactly when $\Mat d$ is an eigenvector of $\Mat\Sigma_i$.
\end{proposition}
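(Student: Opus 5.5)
The plan is to reduce everything to a block decomposition adapted to the viewing ray. After linearization, the projection at $\Mat\mu_i$ is the $2{\times}3$ matrix $\Mat J$, and its null space is spanned by the unit viewing direction $\Mat d$ (Proposition~\ref{prop:lift}). Choose a $2{\times}3$ matrix $\Mat P$ whose rows are an orthonormal basis of $\Mat d^{\perp}$. Then $\Mat Q=\begin{bmatrix}\Mat P\\ \Mat d^{\top}\end{bmatrix}$ is orthogonal. Since $\Mat J\Mat d=0$, we have $\Mat J=\Mat A\Mat P$ with $\Mat A=\Mat J\Mat P^{\top}$, and $\Mat A$ is an invertible $2{\times}2$ matrix by full row rank. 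Hence $\det(\Mat J\Mat\Sigma\Mat J^{\top})=\det(\Mat A)^2\det(\Mat P\Mat\Sigma\Mat P^{\top})$ for any symmetric positive-definite $\Mat\Sigma$. This applies to both $\Mat\Sigma_i$ (positive definite thanks to the $\epsilon_i^2\Mat I_3$ filter) and $\Mat C\succeq\Mat\Sigma_i$.

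\emph{Key determinant identity.} I would show that
\begin{equation*}
\det\Mat\Sigma=\det(\Mat P\Mat\Sigma\Mat P^{\top})\,s(\Mat\Sigma).
\end{equation*}
Write $\det\Mat\Sigma=\det(\Mat Q\Mat\Sigma\Mat Q^{\top})$. This matrix has upper-left block $\Mat P\Mat\Sigma\Mat P^{\top}$, so its determinant equals $\det(\Mat P\Mat\Sigma\Mat P^{\top})$ times the Schur complement of that block. By the block-inverse formula, the reciprocal of this Schur complement is the bottom-right entry of $(\Mat Q\Mat\Sigma\Mat Q^{\top})^{-1}=\Mat Q\Mat\Sigma^{-1}\Mat Q^{\top}$, which is $\Mat d^{\top}\Mat\Sigma^{-1}\Mat d=1/s(\Mat\Sigma)$. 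Combining with the previous paragraph gives
\begin{equation*}
\det(\Mat J\Mat\Sigma\Mat J^{\top})=\det(\Mat A)^2\det\Mat\Sigma/s(\Mat\Sigma).
\end{equation*}
Applying this to $\Mat\Sigma_i$ and $\Mat C$ and substituting into the definition of $m$ in Eq.~\eqref{eq:observation_m2}, the factor $\det(\Mat A)^2$ cancels. The result is $m^2=\frac{\det\Mat\Sigma_i\, s(\Mat C)}{\det\Mat C\, s(\Mat\Sigma_i)}$, which rearranges to the equality in Eq.~\eqref{eq:ewa_vs_m}.

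\emph{Inequality.} Since $\Mat C=\Mat\Sigma_i+\Mat X$ with $\Mat X=\gamma\Mat J^{+}\Mat J^{+\top}\succeq0$, we have $\Mat C\succeq\Mat\Sigma_i$. Hence $\Mat C^{-1}\preceq\Mat\Sigma_i^{-1}$, so $\Mat d^{\top}\Mat C^{-1}\Mat d\le\Mat d^{\top}\Mat\Sigma_i^{-1}\Mat d$, i.e.\ $s(\Mat\Sigma_i)\le s(\Mat C)$. The ratio inside the square root is therefore at most one.

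\emph{Equality case (main obstacle).} Here one needs the structure of $\Mat X$, not just its positive semidefiniteness. Write $\Mat X=\Mat B\Mat B^{\top}$ with $\Mat B=\sqrt\gamma\,\Mat J^{+}$. This $\Mat B$ is $3{\times}2$ of rank two, and its range is $\operatorname{range}(\Mat J^{\top})=\Mat d^{\perp}$. By Woodbury,
\begin{equation*}
\Mat\Sigma_i^{-1}-\Mat C^{-1}=\Mat\Sigma_i^{-1}\Mat B\,(\Mat I_2+\Mat B^{\top}\Mat\Sigma_i^{-1}\Mat B)^{-1}\Mat B^{\top}\Mat\Sigma_i^{-1},
\end{equation*}
and the middle factor is positive definite. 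So $\Mat d^{\top}(\Mat\Sigma_i^{-1}-\Mat C^{-1})\Mat d=0$ iff $\Mat B^{\top}\Mat\Sigma_i^{-1}\Mat d=0$. This holds iff $\Mat\Sigma_i^{-1}\Mat d\perp\Mat d^{\perp}$, i.e.\ $\Mat\Sigma_i^{-1}\Mat d\in\operatorname{span}(\Mat d)$. That is exactly the condition that $\Mat d$ is an eigenvector of $\Mat\Sigma_i^{-1}$, equivalently of $\Mat\Sigma_i$, which completes the proof.
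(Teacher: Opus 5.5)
Your proof is correct and, for the identity, follows the paper's route: an orthonormal frame adapted to $\Mat d$, the Schur-complement factorization $\det\Mat\Sigma=\det(\Mat P\Mat\Sigma\Mat P^{\top})\,s(\Mat\Sigma)$, and cancellation of the common factor $\det(\Mat A)^2$ that comes from $\Mat J\Mat d=\Mat 0$. For the inequality and the equality case you use precision matrices, first Loewner monotonicity of the inverse and then Woodbury with $\Mat B=\sqrt{\gamma}\,\Mat J^{+}$ and $\operatorname{range}\Mat B=\Mat d^{\perp}$, whereas the paper computes $s(\Mat C)-s(\Mat\Sigma_i)=\Mat\Sigma_{\parallel\perp}\bigl[\Mat\Sigma_{\perp\perp}^{-1}-(\Mat\Sigma_{\perp\perp}+\Mat\Delta)^{-1}\bigr]\Mat\Sigma_{\perp\parallel}$ directly; both rest on the same fact that the lift is positive definite on $\Mat d^{\perp}$ and vanishes along $\Mat d$, so the difference is only cosmetic.
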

Using it would therefore dim Gaussians whose principal axes are not aligned with the ray, darkening surfaces viewed at grazing angles.
Appendix~\ref{app:factorization} proves both propositions.

\subsection{Calibrated Densification}
\label{ssec:densification}

The calibrated Jacobian also drives the densification that recovers the surface.
A 3D gradient measures image-space displacement in world units, whose pixel equivalent varies with distance and ray angle, while a pinhole projection fails beyond $90^\circ$.
We therefore map each mean's gradient into the calibrated image with $\Mat J^{+\top}$, allowing one image-space threshold and SOF's densification schedule to serve the full FoV.
Appendix~\ref{app:optimization} details the mapping and schedule.
Calibrated Mip Filtering prevents Gaussians from fitting each view's blur; Section~\ref{sec:ablations} shows that removing it produces several times more Gaussians without improving quality.

\begin{table*}[t]
\centering
\caption{\textbf{Quantitative comparison against surface reconstruction baselines.}  \rankkey{}.
\emph{Full} scores each calibrated FoV and \emph{Common} the shared rectified images; Section~\ref{sec:metrics} defines both.}
\label{tab:mesh}
\setlength{\tabcolsep}{1.8pt}
\resizebox{\textwidth}{!}{%
\begin{tabular}{@{}l c ccc ccc ccc ccc ccc@{}}
\toprule
& & \multicolumn{9}{c}{\textbf{FIORD}~\cite{gunes2025fiord}}
& \multicolumn{6}{c}{\textbf{ScanNet++}~\cite{yeshwanth2023scannet++}} \\
\cmidrule(lr){3-11}\cmidrule(l){12-17}
& & \multicolumn{3}{c}{Full ($200^\circ$)}
& \multicolumn{3}{c}{Common ($90^\circ$)}
& \multicolumn{3}{c}{Geometry}
& \multicolumn{3}{c}{Full ($127^\circ$)}
& \multicolumn{3}{c}{Geometry} \\
\cmidrule(lr){3-5}\cmidrule(lr){6-8}\cmidrule(lr){9-11}
\cmidrule(lr){12-14}\cmidrule(l){15-17}
Method & FoV
& PSNR$\uparrow$ & SSIM$\uparrow$ & LPIPS$\downarrow$
& PSNR$\uparrow$ & SSIM$\uparrow$ & LPIPS$\downarrow$
& $P\uparrow$ & $R\uparrow$ & $F\uparrow$
& PSNR$\uparrow$ & SSIM$\uparrow$ & LPIPS$\downarrow$
& $P\uparrow$ & $R\uparrow$ & $F\uparrow$ \\
\midrule
GOF~\cite{Yu2024GOF} & Rectified & 17.03 & 0.676 & 0.353 & 28.20 & 0.883 & 0.278 & 0.168 & 0.248 & 0.198 & 27.54 & 0.904 & 0.212 & 0.579 & 0.629 & 0.601 \\
PGSR~\cite{chen2024pgsr} & Rectified & 16.81 & 0.670 & 0.363 & \cellthird{28.44} & \cellthird{0.892} & 0.273 & \cellbest{0.225} & 0.293 & \cellthird{0.239} & 27.39 & 0.898 & 0.241 & \cellbest{0.698} & 0.677 & 0.687 \\
SOF~\cite{radl2025sof} & Rectified & 16.62 & 0.660 & 0.363 & 26.15 & 0.852 & 0.311 & 0.163 & 0.240 & 0.192 & 27.10 & 0.898 & 0.218 & 0.620 & 0.667 & 0.641 \\
MiLo~\cite{guedon2025milo} & Rectified & 17.05 & 0.680 & 0.368 & 28.21 & 0.882 & 0.297 & 0.180 & 0.261 & 0.210 & 27.61 & 0.903 & 0.234 & \cellthird{0.651} & 0.701 & 0.675 \\
CoMe~\cite{radl2026come} & Rectified & \cellthird{17.64} & \cellthird{0.696} & \cellthird{0.332} & 27.43 & 0.881 & \cellthird{0.272} & 0.190 & \cellthird{0.296} & 0.227 & \cellthird{28.11} & \cellthird{0.909} & \cellthird{0.190} & \cellsecond{0.652} & \cellthird{0.730} & \cellthird{0.687} \\
\midrule
\textbf{GOFish (Ours)} & Full & \cellbest{24.36} & \cellbest{0.827} & \cellbest{0.210} & \cellbest{29.00} & \cellbest{0.911} & \cellbest{0.221} & \cellthird{0.200} & \cellsecond{0.331} & \cellsecond{0.245} & \cellbest{30.17} & \cellbest{0.921} & \cellsecond{0.172} & 0.637 & \cellsecond{0.795} & \cellbest{0.706} \\
\textbf{+ Geometry Objective} & Full & \cellsecond{24.18} & \cellsecond{0.820} & \cellsecond{0.217} & \cellsecond{28.51} & \cellsecond{0.906} & \cellsecond{0.229} & \cellsecond{0.202} & \cellbest{0.340} & \cellbest{0.249} & \cellsecond{30.12} & \cellsecond{0.921} & \cellbest{0.171} & 0.630 & \cellbest{0.801} & \cellsecond{0.704} \\
\bottomrule
\end{tabular}}
\end{table*}

\subsection{Wide-Angle Appearance Modeling}
\label{ssec:sensor_response}

Exposure, vignetting, and rim falloff are image-formation effects that must remain outside the 3D scene.
Gaussian surface methods handle photometric variation with VastGaussian's per-image appearance network~\cite{lin2024vastgaussian,Yu2024GOF,radl2025sof}, but its corrections are not limited to camera effects.
We instead build on PPISP~\cite{deutsch2026ppispphysicallyplausiblecompensationcontrol}, which restricts corrections to exposure, vignetting, color, and camera response.
At inference, its controller predicts exposure and color for novel views in real time.
CoMe~\cite{radl2026come} reports that PPISP outperforms its improved VastGaussian model and nearly matches the full method on pinhole scenes.
However, PPISP's low-order vignetting does not capture the steep rim falloff, leaving the darkest rim pixels too bright and their neighbors too dark.
This residual is baked into Gaussian colors, making Gaussians seen at the rim too dark near the center of other views.

We therefore multiply PPISP's vignetting $V$ by a per-camera wide-angle attenuation $W$.
At pixel $u$ of view $v$,
\begin{equation}
 \hat{\Mat I}(u)=\mathrm{CRF}\big(H_v(V W e_v\,\Mat I(u))\big),\quad
 W(u)=w(\rho)\,e^{-a(u)},
 \label{eq:sensor_response}
\end{equation}
where $\Mat I$ is the rendered radiance, $e_v$ is the view's exposure, $H_v$ is its color transform, and $\mathrm{CRF}$ is the camera response function.
The normalized radius $\rho$ is the distance of $u$ from the principal point divided by the larger image dimension.

The radial profile $w$ is nonincreasing, and $a\ge0$ is a low-resolution flat-field log-attenuation map.
Both yield unit attenuation over the inner half of the image radius: $w{=}1$ and $a{=}0$.
This anchoring prevents either from absorbing global exposure changes.
We optimize both terms as detailed in Appendix~\ref{app:optimization}.
$W$ is applied after Gaussian rendering and shared across a camera's views, so it models only camera-fixed effects and never enters the 3D scene or mesh.
For pinhole cameras, setting $W\equiv1$ reduces the model to PPISP.

\subsection{Full Field-of-View Mesh Extraction}
\label{ssec:extraction}

GOFish uses the same ray-maximum response as GOF~\cite{Yu2024GOF}, enabling opacity-field mesh extraction over every training view's full calibrated domain.
Let $\alpha_{i,v}(\Mat x)$ be Gaussian $i$'s alpha from Eq.~\eqref{eq:intrinsic_render_alpha} on the ray segment from view $v$ to point $\Mat x$, evaluated with stored covariance $\Mat\Sigma_i$ and opacity $o_i$.
We omit the per-view pixel filter to keep view-specific footprints out of the field and mesh.
The opacity field is
\begin{equation}
 O(\Mat x)=\min_{v}\Big[\,1-\prod_{i}\big(1-\alpha_{i,v}(\Mat x)\big)\Big],
 \label{eq:native_field}
\end{equation}
where $v$ ranges over training views with a valid ray to $\Mat x$, and $i$ over Gaussians admitted along that ray by the mesh rule of Appendix~\ref{app:mesh_admission}.
Following GOF, points that no training view observes remain occupied, with $O(\Mat x){=}1$.
A cropped renderer therefore leaves points visible only through peripheral rays filled in, whereas full-FoV coverage evaluates them.
We extract the $O(\Mat x){=}0.5$ level set using the tetrahedral opacity-field readout and filters inherited from SOF and GOF; Appendix~\ref{app:extraction} details both steps.

\label{ssec:geometry}%
GOFish optionally adds SOF's Geometry Objective~\cite{radl2025sof}, replacing pinhole depth with distance along calibrated rays.
Its opacity term evaluates the 3D scene without the per-view pixel footprint, so it supervises the same footprint-free per-view opacity from which the mesh field $O$ is built.
Appendix~\ref{app:optimization} gives the adapted objective.

\begin{figure*}[t]
  \centering
  {\input{figures/fig3/fig3_grid.tex}}\vspace{-1.0mm}
  \caption{\textbf{Qualitative rendering results on FIORD~\cite{gunes2025fiord} and FullCircle~\cite{foroutan2026fullcircle}.}  Held-out views span the full FoV; metrics are per-scene.}
  \label{fig:qualitative_render}
\end{figure*}

\section{Experiments}
\label{sec:experiments}

GOFish extends the 3DGUT~\cite{wu20253dgut} codebase.
We evaluate GOFish with and without SOF's Geometry Objective~\cite{radl2025sof} from Section~\ref{ssec:geometry}.
Each variant uses one parameter set across datasets except ScanNet++, where, following prior surface methods, we add CoMe's confidence and normal-variance losses~\cite{radl2026come} and use SOF's scene-extent rule for large-Gaussian pruning.
Appendices~\ref{app:optimization} and~\ref{app:setup} provide additional optimization and evaluation details.
\paragraph{Datasets.}
FIORD~\cite{gunes2025fiord} and FullCircle~\cite{foroutan2026fullcircle} use dual-fisheye $360^\circ$ cameras.
FIORD pairs synchronized nominal $200^\circ$ views with LiDAR; FullCircle provides roughly $205^\circ$ views without geometry ground truth.
ScanNet++~\cite{yeshwanth2023scannet++} provides $127^\circ$ fisheye DSLR images and laser-scan ground truth.
We use 3DGUT's six scenes, while Appendix~\ref{app:benchmarks} additionally evaluates CoMe's set.
For pinhole evaluation, we use six Tanks~\&~Temples~\cite{Knapitsch2017tanks} scenes with ground-truth geometry.
Appendix~\ref{app:fiord_data} covers FIORD's registration and calibration issues; Appendix~\ref{app:evaluation} details scenes, resolutions, and splits.

\paragraph{Baselines.}\label{sec:baselines}
We use each baseline's released code and recipe.
Fisheye baselines retain their native camera pipelines, including 3DGUT's angular crop, while surface baselines train on the rectifications marked in the tables.
Appendix~\ref{app:baselines} details adaptations and hardware exceptions.

\paragraph{Metrics.}\label{sec:metrics}
We report PSNR, SSIM, and LPIPS on held-out test views.
All tabulated GOFish rendering scores use the complete-admission renderer of Section~\ref{ssec:rasterization}, while FPS uses the real-time renderer detailed in Appendix~\ref{app:realtime}.
\emph{Full} scores each camera's calibrated FoV, and unrendered pixels count as black.
\emph{Common (90\textdegree)} scores every method on the shared rectified FIORD images used by the surface baselines.
For geometry, we report precision, recall, and F-score under the protocols detailed in Appendix~\ref{app:evaluation}: a 10\,cm threshold on FIORD, CoMe's 5\,cm protocol on ScanNet++, and the official evaluator on Tanks~\&~Temples.

\section{Results}
\label{sec:results}
\begin{figure*}[t]
  \centering
  {\input{figures/fig4/fig4_grid.tex}}\vspace{-1.0mm}
  \caption{\textbf{Qualitative mesh results on FIORD~\cite{gunes2025fiord} and FullCircle~\cite{foroutan2026fullcircle}.}  GOFish uses the Geometry Objective.  Metrics are per-scene.}
  \label{fig:qualitative_mesh}
\end{figure*}
Across its base and Geometry Objective variants, GOFish ranks first in every full-FoV rendering metric on the three fisheye benchmarks in Table~\ref{tab:rendering}.
Its PSNR exceeds the best prior result by 2.28\,dB on FIORD, 1.97\,dB on FullCircle, and 0.46\,dB on ScanNet++~\cite{gunes2025fiord,foroutan2026fullcircle,yeshwanth2023scannet++}.
The gains are not merely due to coverage; on average, GOFish also outperforms all released fisheye renderers within their respective fields of view.
Figure~\ref{fig:qualitative_render} shows that prior methods blur, smear, or ring near the FoV boundary, whereas GOFish keeps walls, windows, and branches sharp and correctly exposed to the rim.
Figure~\ref{fig:capabilities} shows the recovered periphery in full-FoV reflections and the extracted mesh supporting collision physics.
Appendix~\ref{app:qualitative} shows additional scenes, and Appendix~\ref{app:per_scene} reports per-scene metrics.

GOFish leads both full-FoV rendering and FIORD mesh F-score in Table~\ref{tab:mesh}.
Appendix~\ref{app:seeds} confirms the lead across three runs, with base GOFish at $0.245\pm0.001$ and PGSR~\cite{chen2024pgsr} at $0.239\pm0.002$.
The Geometry Objective~\cite{radl2025sof} raises F-score further to 0.249.
Calibrated densification recovers more surface across the FoV, giving GOFish the highest recall and F-score.
Appendix~\ref{app:geom_region} shows that GOFish leads within the shared $90^\circ$ field of view and for rays beyond $90^\circ$ from the optical axis, so wider coverage alone does not explain the advantage.
Figure~\ref{fig:qualitative_mesh} shows broad peripheral patches or holes in surface-baseline meshes, while GOFish preserves finer structure.

Across its two variants, GOFish leads rendering, recall, and F-score on the main ScanNet++ set.
Appendix~\ref{app:benchmarks} evaluates the ScanNet++ scene set used by CoMe~\cite{radl2026come} and provides the full pinhole comparison.
On Tanks~\&~Temples~\cite{Knapitsch2017tanks}, GOFish renders best, and Appendix~\ref{app:qualitative} shows visibly fewer rendering artifacts than the surface baselines.
A single Gaussian scene thus provides both the best full-FoV rendering and, on FIORD and the main ScanNet++ set, the most complete reconstructed surface.

\subsection{Rendering Gain by Ray Angle}
\label{sec:angle}
Figure~\ref{fig:angle} compares GOFish with 3DGUT + PPISP~\cite{wu20253dgut,deutsch2026ppispphysicallyplausiblecompensationcontrol} across ray angles and isolates the contribution of Wide-Angle Appearance Modeling.
Within the baseline's $160^\circ$ FoV, GOFish gains 4.0\,dB on FIORD and 1.9\,dB on FullCircle, of which appearance modeling accounts for only 0.4 and 0.2\,dB, respectively.
Complete admission, calibrated densification, and Calibrated Mip Filtering therefore drive most of the shared-region improvement.
The FIORD gain peaks at 5.6\,dB in the $75^\circ$--$90^\circ$ bin spanning the baseline's $80^\circ$ cutoff, while FullCircle reaches 6.9\,dB beyond $100^\circ$, where the baseline renders nothing.
In the outermost bin, Wide-Angle Appearance Modeling adds 7.9\,dB on FIORD and 1.1\,dB on FullCircle.
On FIORD, this exceeds GOFish's lead over the baseline---without appearance modeling, GOFish renders the dark rim too brightly and scores below the baseline's unrendered black pixels.
\begin{figure}[!t]
  \centering
  \IfFileExists{figures/capabilities/cap_grid.tex}{\input{figures/capabilities/cap_grid.tex}}{%
  \figph{\linewidth}{0.5\linewidth}{3DGUT + PPISP / GOFish simulation pair}}
  \caption{\textbf{Ray tracing and mesh-based physics simulation.}
  3DGUT ray traces its $160^\circ$ FoV using 3DGRT~\cite{moenne20243d} and does not produce a mesh, so simulated objects have no surface to collide with.
  GOFish covers the full $200^\circ$ FoV and supports collisions.}
  \label{fig:capabilities}
\end{figure}

\subsection{Component Analysis}
\label{sec:ablations}
Table~\ref{tab:ablation} cumulatively adds GOFish's components to 3DGUT + PPISP.
Full Field-of-View Rasterization improves rendering but leaves F-score at 0.139, confirming that coverage alone does not recover the surface.
Adding calibrated densification and Calibrated Mip Filtering then raises F-score to 0.240 while PSNR remains nearly unchanged.
Together, these stages account for 3.6\,dB of GOFish's 4.0\,dB gain within the baseline's $160^\circ$ view.
Wide-Angle Appearance Modeling raises Full PSNR by 3.17\,dB to 24.36\,dB, consistent with modeling rim falloff as an image-formation effect outside the Gaussian scene.
Finally, the optional Geometry Objective raises F-score to 0.249 at a cost of 0.18\,dB PSNR.
The reference rows show that neither 3DGRT's full-FoV ray tracing~\cite{moenne20243d} nor adding SOF's losses~\cite{radl2025sof} to 3DGUT yields a competitive mesh.
\begin{figure}[t]
  \centering
  \includegraphics[width=\linewidth]{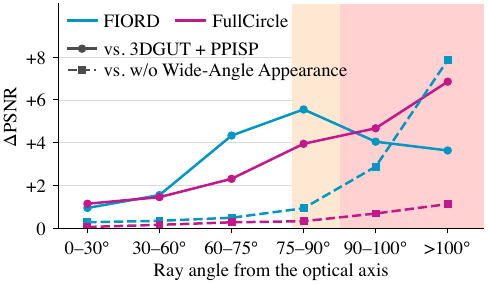}
  \caption{\textbf{Rendering gain by ray angle on FIORD~\cite{gunes2025fiord} (\textcolor[HTML]{0097C4}{cyan}) and FullCircle~\cite{foroutan2026fullcircle} (\textcolor[HTML]{C2188B}{magenta}).}  PSNR gain of GOFish over 3DGUT + PPISP~[\citenum{wu20253dgut}, \citenum{deutsch2026ppispphysicallyplausiblecompensationcontrol}] (solid) and over GOFish without Wide-Angle Appearance (dashed).  As in Fig.~\ref{fig:failures}, 3DGUT + PPISP cuts rays beyond $80^\circ$ (\textcolor{orange!90!black}{orange}) and fails beyond $90^\circ$ (\textcolor{red!80!black}{red}).}
  \label{fig:angle}
\end{figure}%

The split ablation in Appendix~\ref{app:split} confirms the division of labor predicted in Sec.~\ref{ssec:factorization}.
No individual part---the image-space statistic, SOF's schedule, or the filter---is sufficient alone.
For the two scenes with filtered and unfiltered meshes, F-score is essentially unchanged, indicating that calibrated densification drives surface recovery.
At matched rendering quality, Calibrated Mip Filtering makes that recovery tractable by reducing the Gaussian count by $2.2$--$5.4\times$ and preventing out-of-memory failures during training and meshing on a 24\,GB GPU.

On an NVIDIA A10G GPU, the compact scene and real-time renderer reach 79 FPS on FIORD, nearly matching 3DGUT + PPISP at 84 FPS and far exceeding 3DGRT + PPISP at 22 FPS.
Appendix~\ref{app:realtime} shows that the real-time renderer trails the complete-admission renderer by only 0.03\,dB in Full PSNR.
Together, the scene and mesh form a simulation-ready digital twin with full-FoV rendering quality, real-time speed, ray tracing, and collision geometry.

\section{Limitations and Future Work}
\label{sec:limitations}

GOFish inherits opacity-field limitations~\cite{Yu2024GOF,radl2025sof}.
Taking the minimum across views can misplace reflective surfaces or carve thin structures, while sparse observations can leave holes and mesh overfill, as illustrated in Appendix~\ref{app:failures}.
Full-FoV coverage cannot recover information lost to sparse sampling or rim attenuation, and reconstruction remains sensitive to calibration and pose errors documented in Appendix~\ref{app:fiord_data}.
Among our datasets, only FIORD provides geometry ground truth beyond $180^\circ$.
GOFish assumes static scenes and central camera models, so moving content, rolling shutter, and non-central cameras remain unsupported.
Complete admission and per-pixel sorting slow training relative to rendering-only 3DGUT, although Appendix~\ref{app:efficiency} shows that GOFish remains faster than GOF end to end on FIORD and renders in real time.

\begin{table}[t]
  \centering
  \caption{\textbf{Component analysis on FIORD~\cite{gunes2025fiord}.}  Components are added cumulatively.
  + Calibrated Mip Filtering includes calibrated densification, and + Wide-Angle Appearance completes base GOFish.
  Appendix~\ref{app:ablation_rows} details the configurations.}
  \label{tab:ablation}
  \setlength{\tabcolsep}{4pt}
  \resizebox{\linewidth}{!}{%
  \begin{tabular}{@{}l c cccc cc@{}}
    \toprule
    Variant & FoV & PSNR$\uparrow$ & SSIM$\uparrow$ & LPIPS$\downarrow$ & F$\uparrow$ & Train$\downarrow$ & FPS$\uparrow$ \\
    \midrule
    GOF~\cite{Yu2024GOF} & 90\textdegree & 17.03 & 0.676 & 0.353 & 0.198 & 104m & 11.0 \\
    3DGRT + PPISP~[\citenum{moenne20243d}, \citenum{deutsch2026ppispphysicallyplausiblecompensationcontrol}] & Full & 18.42 & 0.712 & 0.313 & 0.159 & 108m & 22.3 \\
    3DGUT + SOF~\cite{wu20253dgut,radl2025sof} & 160\textdegree & 18.46 & 0.671 & 0.374 & 0.168 & 111m & 17.9 \\
    \midrule
    3DGUT + PPISP~[\citenum{wu20253dgut}, \citenum{deutsch2026ppispphysicallyplausiblecompensationcontrol}] & 160\textdegree & 20.18 & 0.743 & 0.312 & 0.141 & 29m & 84.3 \\
    + Full-FoV Rasterization         & Full & 21.21 & 0.798 & 0.253 & 0.139 & 162m & 91.1 \\
    + Calibrated Mip Filtering     & Full & 21.19 & 0.804 & 0.239 & 0.240 & 103m & 81.1 \\
    + Wide-Angle Appearance          & Full & \textbf{24.36} & \textbf{0.827} & \textbf{0.210} & 0.245 & 107m & 79.3 \\
    + Geometry Objective             & Full & 24.18 & 0.820 & 0.217 & \textbf{0.249} & 116m & 86.8 \\
    \bottomrule
  \end{tabular}}
\end{table}

Depth supervision, normal priors, or Taubin smoothing~\cite{turkulainen2025dnsplatter,taubin1995signal} could reduce surface artifacts.
Sparse optimization and pruning~\cite{HansonSpeedy,HansonTuPUP3DGS,TuYing2025SpeeDe3DGS} could reduce training cost and improve rendering speed.
Selective super-resolution~\cite{Asthana2025SplatSuRe} could improve the undersampled rim, and extending complete admission to rolling-shutter and non-central cameras would broaden camera support.
Agentic 3D modeling pipelines~\cite{he2026sceneorchestra} could build on GOFish digital twins by refining their meshes, simplifying colliders, and adding semantic labels to create task-ready simulation environments for training robot policies.

\section{Conclusion}
\label{sec:conclusion}

We introduced \textbf{GOFish}, the first 3D Gaussian Splatting surface reconstruction method to train directly on fisheye images, rasterize fields of view (FoVs) beyond $180^\circ$, and extract an aligned mesh from the same Gaussian scene.
A calibration-defined domain and complete admission cover every valid ray, while calibrated densification recovers the surface across the FoV.
Calibrated Mip Filtering and Wide-Angle Appearance Modeling keep view-specific sampling and camera effects out of the Gaussian scene and mesh.
GOFish leads every rendering metric across the $200^\circ$ FIORD, $205^\circ$ FullCircle, $127^\circ$ ScanNet++, and pinhole Tanks~\&~Temples benchmarks.
Its meshes also achieve the highest F-score on FIORD and the main ScanNet++ set.
Together, the Gaussian scene and mesh form a LiDAR-free, simulation-ready digital twin that renders in real time, ray traces the full FoV, and provides collision geometry.

\clearpage
\iftoggle{cvprfinal}{%
\section*{Acknowledgements}
This work was completed during an internship at Amazon.
We thank Ziyang Long, Mellon M. Zhang, Chenqi Li, Sarosij Bose, Srikumar Sastry,
Amin Kheradmand, Yongqing Ye, David Stafford, Huixian Gong, Yun He, Zeping Liu,
Haiyang Ying, Alex Hanson, Chengyue Huang, and Chen Xu for helpful discussions and feedback on this work.
}{}
{
    \small
    \bibliographystyle{ieeenat_fullname}
    \bibliography{main}
}

\clearpage
\appendix
\captionsetup[figure]{skip=3pt}
\section{Additional Method Details}

We make the view index explicit, so $\Mat J_{i,v}$, $\Mat C_{i,v}$, and $m_{i,v}$ denote the quantities of Eq.~\eqref{eq:observation_m2} for view $v$.
Throughout, $\|\cdot\|$ is the Euclidean norm for vectors and spectral norm for matrices, $\|\cdot\|_F$ is the Frobenius norm, and $B(\Mat c,r)$ is the closed Euclidean ball with center $\Mat c$ and radius $r$.

\subsection{Camera Domain}
\label{app:domain}

The camera domain determines which pixels participate in rendering, supervision, admission, and meshing.
For view $v$, let $\mathcal U_v\subset\mathbb R^2$ be the continuous calibrated image domain and $\mathcal P_v$ its finite set of raster pixel centers.
Let $\Mat d_v(u)$ denote the unit ray direction obtained by unprojecting $u$, when that direction is unique.
We define the valid domain as $\mathcal D_v=\{u\in\mathcal U_v:\Mat d_v(u)\text{ exists uniquely}\}$, and rendering and admission operate on $\mathcal P_v\cap\mathcal D_v$.
For the OpenCV fisheye model, a ray at angle $\theta$ from the optical axis maps to image radius $\rho(\theta)=\theta(1+k_1\theta^2+k_2\theta^4+k_3\theta^6+k_4\theta^8)$.
Here $k_1,\ldots,k_4$ are the calibrated radial-distortion coefficients.
Rays are unique up to $\theta_{\max}$ of Eq.~\eqref{eq:physical_theta_max}.
Dataset masks do not alter $\mathcal D_v$.
Masks apply only to supervision, densification statistics, and PPISP~\cite{deutsch2026ppispphysicallyplausiblecompensationcontrol} pooling.
Rendering, reporting, and meshing use the full domain.
Losses that compare neighboring pixels, such as SSIM, include only pixels whose full stencil lies within the domain.
We measure camera-local distance as view-space depth for pinhole cameras and radial ray distance for fisheye cameras, and use the same convention for SOF's distortion coordinate~\cite{radl2025sof}.

\subsection{Complete Admission}
\label{app:admission}

Let $\Mat\Sigma_i=\Mat B_i\Mat B_i^{\top}$ be the covariance used by the renderer and define the corresponding Mahalanobis distance as $q_i(\Mat x)^2=(\Mat x-\Mat\mu_i)^{\top}\Mat\Sigma_i^{-1}(\Mat x-\Mat\mu_i)$.
The closed effective support $S_i=\{\Mat x:q_i(\Mat x)\le r_i\}$ conservatively contains every point where the renderer can return $\alpha_i>1/255$.
For Gaussian opacity $o_i>1/255$, we use the outward-rounded radius $r_i=\min\{r_{\mathrm{adm}},\sqrt{2\ln(255\,o_i)}\}\le3$.
If $o_i\le1/255$, the support is empty, so the Gaussian may be rejected.
3DGUT's renderer~\cite{wu20253dgut} uses a kernel cutoff of $0.0113$, corresponding to radius $\sqrt{-2\ln0.0113}=2.994<3$, so the cap $r_{\mathrm{adm}}{=}3$ contains its support.
Complete admission omits the $0.2$ near-plane cull of 3DGS~\cite{kerbl3Dgaussians} and 3DGUT.

For each Gaussian--view pair, we test every tile once.
A tile is rejected only when one of three tests certifies that no valid ray can reach $S_i$; a failed or inconclusive test retains it.
The \emph{Domain} test rejects a tile with no pixel in $\mathcal P_v\cap\mathcal D_v$, or when the angular cap of $S_i$ seen from $\Mat o$ lies entirely beyond the calibrated maximum ray angle.
For the \emph{Taylor} test, suppose $\Mat o$ lies outside a ball $B(\Mat\mu_i,R_i)\supseteq S_i$.
For the active view, let $\pi$ denote its projection and $\Mat J_\pi$ the projection Jacobian at $\Mat\mu_i$.
We enclose $\pi(S_i)$ in a rectangle centered at $\pi(\Mat\mu_i)$ with half-extents
\begin{equation}
 \begin{aligned}
 h_a={}&r_i\|(\Mat J_\pi\Mat B_i)_a\|+r_i\|(\Mat J_\pi)_a\|\|\Mat B_i\|\frac{\eta}{1-\eta}\\
 &+\tfrac12\,\bar H_\theta\,\frac{R_i^2}{(\|\Mat\mu_i-\Mat o\|-R_i)^2}+\epsilon_a,
 \end{aligned}
 \label{eq:taylor_rect}
\end{equation}
where $a\in\{1,2\}$ indexes the image coordinates.
The bound $\bar H_\theta$ is the outward-rounded interval bound on the Frobenius norm of the projection Hessian at unit range over the ray-angle band of $S_i$.
The certified factor $\eta=\|\Mat I-\Mat B_i\Mat A_i\|<1$ bounds the error in the renderer's floating-point inverse $\Mat A_i$ of $\Mat B_i$, while $\epsilon_a$ encloses numerical errors in $\pi$ and $\Mat J_\pi$.
The test rejects a tile only if none of its pixel centers lies in this rectangle.
For supports that cross the maximum ray angle, we apply the same bound in an angular chart.
When the Taylor bound cannot be certified, the \emph{Cone} test encloses all pixel-ray directions of the tile in a ball $B(\Mat c,\varrho)$.
Choose $r\ge r_{\mathrm{adm}}$ to include a bound on the renderer's floating-point error, and define $\Mat p=\Mat\Sigma_i^{-1}(\Mat\mu_i-\Mat o)$, $\kappa=\|\Mat\mu_i-\Mat o\|_{\Mat\Sigma_i^{-1}}^2-r^2>0$, and $\Mat K=\Mat p\Mat p^{\top}-\kappa\Mat\Sigma_i^{-1}$.
The tile is rejected if
\begin{equation}
 \Mat c^{\top}\Mat p+\varrho\|\Mat p\|\le0
 \quad\text{or}\quad
 \Mat c^{\top}\Mat K\Mat c+2\varrho\|\Mat K\Mat c\|+\varrho^2\|\Mat K\|_F<0.
 \label{eq:cone_reject}
\end{equation}
If the camera lies inside the enlarged support or cannot be certified outside it, every tile containing a valid pixel is retained.
We retain 3DGUT's unscented-transform parameters and renderer constants but replace its projected sigma-point extent with complete admission.
Admitted Gaussians are ordered by radial distance using StopThePop's per-pixel hierarchical sort~\cite{radl2024stopthepop}.

\paragraph{Assumptions of Proposition~\ref{prop:admission}.}  The camera is central, and every valid pixel satisfies $\pi(\Mat o+t\,\Mat d_v(u))=u$ for all $t>0$ in the chart.
The angular cap encloses every direction from $\Mat o$ to $S_i$, the ball $B(\Mat c,\varrho)$ encloses every valid ray direction of the tile, and the rectangle of Eq.~\eqref{eq:taylor_rect} contains $\pi(S_i)$ whenever $\Mat o\notin B(\Mat\mu_i,R_i)$.
\begin{proof}[Proof of Proposition~\ref{prop:admission}]
Consider a valid pixel $u$ in the tile whose ray intersects $S_i$, and choose $\Mat x=\Mat o+t\Mat d\in S_i$ on that ray, with $t\ge0$.
If $t{=}0$, the camera lies in $S_i$, is not provably outside the support, and every valid tile is kept.
Otherwise $t>0$.
The \emph{Domain} test cannot reject the tile because it contains $u$, whose valid ray points to $S_i$ and therefore lies inside the angular cap and calibrated range.
The \emph{Taylor} test cannot reject it because $\pi(\Mat x)=u$, so $u\in\pi(S_i)$ and lies inside the rectangle.
For the \emph{Cone} test, write $f(t)=q_i(\Mat o+t\Mat d)^2=\kappa+r^2-2t\,\Mat d^{\top}\Mat p+t^2\Mat d^{\top}\Mat\Sigma_i^{-1}\Mat d$, where $\kappa>0$.
For some $t\ge0$ to satisfy $f(t)\le r^2$, the quadratic requires $\Mat d^{\top}\Mat p>0$ and $(\Mat d^{\top}\Mat p)^2\ge\kappa\,\Mat d^{\top}\Mat\Sigma_i^{-1}\Mat d$, or equivalently $\Mat d^{\top}\Mat K\Mat d\ge0$.
Because $\Mat d\in B(\Mat c,\varrho)$, we have $\Mat d^{\top}\Mat p\le \Mat c^{\top}\Mat p+\varrho\|\Mat p\|$ and
$\Mat d^{\top}\Mat K\Mat d\le\Mat c^{\top}\Mat K\Mat c+2\varrho\|\Mat K\Mat c\|+\varrho^2\|\Mat K\|_F$.
Either rejection condition in Eq.~\eqref{eq:cone_reject} would therefore prevent $\Mat d$ from reaching $q_i\le r$.
But $\Mat x\in S_i$ gives $q_i(\Mat x)\le r_i\le r_{\mathrm{adm}}\le r$, so the \emph{Cone} test cannot reject the tile.
\end{proof}

\subsection{Calibrated Mip Filtering}
\label{app:factorization}

\paragraph{3D filter.}  A training view observes Gaussian $i$ when $\pi_v(\Mat\mu_i)\in\mathcal D_v$.
We detach the mean and Jacobians when computing $\epsilon_i$, then freeze it with the scene when geometry optimization ends.
Following Mip-Splatting~\cite{yu2024mip}, a Gaussian unseen by every training view receives the largest $\epsilon_i$ among the observed Gaussians.
Proposition~\ref{prop:volume} shows that Mip-Splatting's 3D volume-ratio scaling does not generally preserve ray-maximum image integrals, so we leave opacity unchanged.
Mesh extraction uses the filtered covariance and optimized opacity $o_i$, while the per-view factor $m_{i,v}$ is applied only during rendering.

\paragraph{Pixel footprint.}  We detach the mean and Jacobian when computing the lift of Eq.~\eqref{eq:observation_m2}, so $m_{i,v}$ creates no position gradient along the viewing ray.
For pinhole cameras, $\Mat J_{i,v}$ is the analytic perspective Jacobian, and we skip the footprint below a view-space depth of $0.2$.
Other cameras require a full-row-rank Jacobian at the Gaussian mean within the valid camera chart.
If none exists, we set $\Mat C_{i,v}=\Mat\Sigma_i$ and $m_{i,v}{=}1$; admission still uses the covariance rendered for that view.
We compute $\Mat J_{i,v}^{+}$ in double precision using a pivoted QR factorization of $\Mat J_{i,v}^{\top}$ because normal equations lose accuracy at grazing angles.
\begin{proof}[Proof of Proposition~\ref{prop:lift}]
Let the columns of $\Mat U\in\mathbb R^{3\times2}$ form an orthonormal basis for the row space of $\Mat J$, and let $\Mat n$ be a unit null vector.
In the basis $[\Mat U\ \Mat n]$, write $\Mat X$ in blocks $\Mat Y_{11}$, $\Mat y_{12}$, and $y_{22}$.
Because $\Mat J\Mat n=\Mat 0$, the constraint becomes $(\Mat J\Mat U)\Mat Y_{11}(\Mat J\Mat U)^{\top}=\gamma\Mat I_2$ and uniquely fixes $\Mat Y_{11}=\gamma(\Mat J\Mat U)^{-1}(\Mat J\Mat U)^{-\top}$.
Positive semidefiniteness gives $y_{22}\ge0$.
Since $\operatorname{tr}\Mat X=\operatorname{tr}\Mat Y_{11}+y_{22}$, the minimum requires $y_{22}{=}0$, which in turn forces $\Mat y_{12}=\Mat 0$.
Thus the unique minimizer is $\Mat U\Mat Y_{11}\Mat U^{\top}$.
Using $\Mat J^{+}=\Mat U(\Mat J\Mat U)^{-1}$, this equals $\gamma\Mat J^{+}\Mat J^{+\top}$ and has $\Mat n^{\top}\Mat X\Mat n{=}0$.
\end{proof}

\paragraph{Opacity normalization.}  Minimizing Mahalanobis distance along a ray, as in Eq.~\eqref{eq:intrinsic_render_alpha}, yields the same transverse profile as integrating the Gaussian along that ray, up to the factor $\sqrt{2\pi s}$ defined below.
Under a local projection, this response is a 2D Gaussian with covariance $\Mat J_{i,v}\Mat C_{i,v}\Mat J_{i,v}^{\top}$.
When the $\alpha_{\max}$ cap is inactive, its integral over the tangent image plane is proportional to $o_i\det(\Mat J_{i,v}\Mat C_{i,v}\Mat J_{i,v}^{\top})^{1/2}$.
Preserving this integral as $\Mat\Sigma_i$ grows to $\Mat C_{i,v}$ gives $m_{i,v}$ in Eq.~\eqref{eq:observation_m2}.
The 3D volume ratio $(\det\Mat\Sigma_i/\det\Mat C_{i,v})^{1/2}$ instead preserves the 3D integral.
\begin{proof}[Proof of Proposition~\ref{prop:volume}]
Choose an orthonormal frame with one axis along $\Mat d$ and two transverse axes, and write $\Mat\Sigma$ in blocks $\Mat\Sigma_{\perp\perp}$, $\Mat\Sigma_{\perp\parallel}=\Mat\Sigma_{\parallel\perp}^{\top}$, and $\Sigma_{\parallel\parallel}$.
The determinant factorizes as $\det\Mat\Sigma=\det(\Mat\Sigma_{\perp\perp})\,s(\Mat\Sigma)$, where the Schur complement $s$ is the variance along $\Mat d$ conditioned on transverse position.
Because $\Mat J\Mat d=\Mat 0$, the determinant $\det(\Mat J\Mat\Sigma\Mat J^{\top})$ is proportional to $\det\Mat\Sigma_{\perp\perp}$, with the same proportionality for $\Mat C$.
Hence $m_{i,v}^2=\det\Mat\Sigma_{\perp\perp}/\det\Mat C_{\perp\perp}$, which gives the identity.
The lift of Proposition~\ref{prop:lift} changes only the transverse block.
Writing its positive-definite transverse component as $\Mat\Delta$, we obtain
\[
s(\Mat C)-s(\Mat\Sigma)=\Mat\Sigma_{\parallel\perp}\big[\Mat\Sigma_{\perp\perp}^{-1}-(\Mat\Sigma_{\perp\perp}+\Mat\Delta)^{-1}\big]\Mat\Sigma_{\perp\parallel}\ge0.
\]
Because $\Mat\Delta\succ0$, the bracket is positive definite, so equality holds exactly when $\Mat\Sigma_{\perp\parallel}=\Mat 0$, which is equivalent to $\Mat d$ being an eigenvector of $\Mat\Sigma_i$.
\end{proof}

\subsection{Optimization}
\label{app:optimization}

\paragraph{Objective.}  Following 3DGS~\cite{kerbl3Dgaussians}, GOFish uses the photometric loss $0.8\mathcal L_1+0.2\mathcal L_{\text{D-SSIM}}$.
The Geometry Objective of Sec.~\ref{ssec:geometry} adds SOF's terms~\cite{radl2025sof} with their original weights and warm-ups after adapting camera-dependent quantities to the calibrated ray domain,
\begin{equation}
 \begin{aligned}
 \mathcal L={}&0.8\mathcal L_1+0.2\mathcal L_{\text{D-SSIM}}
 +100\mathcal L_{\text{dist}}+0.05\mathcal L_{\text{normal}}\\
 &+0.01\mathcal L_{\text{smooth}}
 +0.004\mathcal L_{\text{opa}}+0.1\mathcal L_{\text{ext}}.
 \end{aligned}
 \label{eq:training_objective}
\end{equation}
Following GOF~\cite{Yu2024GOF}, the distortion loss fixes compositing weights and differentiates hit depths only through Gaussian positions and shapes; differentiating the weights as in SOF gives nearly identical results on FIORD.
We use no pose optimization or transient modeling.
On ScanNet++, we add CoMe's confidence-weighted photometric loss and normal-variance regularizer~\cite{radl2026come} with their released weights from iteration 500.
We omit its confidence-steered densification and color-variance loss, which would compete with PPISP.
Appendix~\ref{app:come_losses} reports these losses on every dataset.

\paragraph{Opacity and depth--normal terms.}  SOF's opacity loss $\mathcal L_{\mathrm{opa}}$ drives accumulated ray opacity toward $0.5$ at the surface crossing.
As in its released code, we solve $t_{0.5}$ in closed form on the rising side of the Gaussian that takes opacity through $0.5$ and hold it fixed during backpropagation.
A second pass evaluates the Gaussian scene without the pixel footprint, stopping at $t_{0.5}$ or traversing the full ray when no crossing exists.
The depth--normal loss $\mathcal L_{\mathrm{normal}}$ compares rendered normals with normals computed from the detached depth at $t_{0.5}$ wherever opacity reaches $0.5$.
Unlike SOF, which passes a straight-through gradient through this depth, GOFish detaches it.
We retain SOF's edge-aware normal smoothness loss $\mathcal L_{\mathrm{smooth}}$ unchanged.

\paragraph{Distortion and extent terms.}  These terms require two camera-domain adaptations of SOF's objective.
Let $t_i^*$ be a hit's peak parameter along a unit ray whose direction has camera-frame optical-axis component $d_z$, and define
\begin{equation}
 u_i=\begin{cases}
 t_i^*d_z,&\text{pinhole},\\
 t_i^*,&\text{fisheye},
 \end{cases}
 \qquad \xi_i=\frac{1}{\max(u_i,0.2)}.
 \label{eq:distortion_coordinate}
\end{equation}
For compositing weights $w_i=\alpha_iT_i$, where $T_i$ is the incoming transmittance, define
\begin{equation}
 \begin{aligned}
 S_0&=\sum_iw_i,\\
 S_1&=\sum_iw_i\xi_i, & S_2&=\sum_iw_i\xi_i^2,\\
 \mathcal L_{\mathrm{dist}}&=c_{\mathrm{ndc}}^2(S_0S_2-S_1^2),
 \end{aligned}
 \label{eq:sof_distortion_exact}
\end{equation}
where $c_{\mathrm{ndc}}=t_f t_n/(t_f-t_n)$ is SOF's normalized-device-coordinate scale for far and near planes $t_f{=}100$ and $t_n{=}0.2$.
The clamp applies only to this loss coordinate and does not affect rendering.
Pinhole hits above the near plane match SOF's coordinate exactly.
The extent term accumulates each Gaussian's alpha-weighted along-ray standard deviation $(\Mat d^{\top}\Mat\Sigma_i\Mat d)^{1/2}$ in world units.
This removes the near-camera bias of normalized device coordinates and penalizes oversized Gaussians viewed obliquely.
As in SOF's released code, the compositing alphas are detached in this term, and we keep SOF's weight $0.1$ for it on every dataset.

\paragraph{Densification.}  For a full-row-rank Jacobian, we recover the image gradient as $\Mat g_{u,i,v}=\Mat J_{i,v}^{+\top}\Mat g_{\mu,i,v}$ and scale it to normalized device coordinates as in SOF.
For AbsGS~\cite{ye2024absgs}, we accumulate the absolute gradient components alongside SOF's $L_2$ norm.
We densify a Gaussian when its signed image-space gradient exceeds $2\times10^{-4}$ or its absolute gradient exceeds a quantile chosen, following GOF, to select the same fraction of Gaussians.
Densification and pruning run every 100 iterations from 500 to 15k, and opacities are reset to 0.01 every 3k iterations as in 3DGS.

\paragraph{Pruning.}
We retain SOF's opacity threshold of $0.05$ and prune Gaussians larger than $0.1$ of the scene extent.
On FIORD~\cite{gunes2025fiord} and Tanks~\&~Temples~\cite{Knapitsch2017tanks}, we use 3DGUT's robust extent, $1.1$ times the median camera distance from the coordinatewise-median camera center, so one distant capture cannot inflate it.
On ScanNet++~\cite{yeshwanth2023scannet++}, we use SOF's extent, $1.1$ times the maximum camera distance from the camera centroid.
The median rule fails on ScanNet++ \texttt{0a5c}, whose compact 2\,m camera path gives a 4.6\,cm threshold that deletes 40\% of the Gaussians and leaves near-camera floaters.
Conversely, the maximum rule retains large Gaussians that create off-surface mesh on FIORD and Tanks~\&~Temples.
We omit 3DGS's 20-pixel screen-size prune, inherited by SOF, GOF, and CoMe, because densification resets the screen radii before it can run.

\paragraph{Appearance.}  The Gaussian scene and PPISP train jointly for 30k iterations, after which only PPISP's novel-view controller trains for 5k iterations.
We extract the mesh from the 30k scene.
Beyond adding the wide-angle attenuation $W$ of Eq.~\eqref{eq:sensor_response}, we modify PPISP only so its pooling and gating ignore pixels outside the calibrated domain.
For noncircular fisheye images such as ScanNet++, the image corner defines the maximum image radius.
The radial profile $w$ has 50 knots over the outer half of this radius, and the flat field $a$ is a $50{\times}50$ grid.
Both are initialized to unit attenuation and use a smoothness weight of $10^{-4}$.

\subsection{Mesh Extraction}
\label{app:extraction}

Following GOF, we place nine sites per Gaussian and build a fresh Delaunay tetrahedralization over all sites.
We evaluate $O$ of Eq.~\eqref{eq:native_field} at each site and extract the level set $O{=}0.5$ with Marching Tetrahedra.
Let $\Mat R_i$ and diagonal $\Mat S_i$ denote Gaussian $i$'s rotation and scale before filtering.
The isotropic 3D filter preserves its rotation and gives $\Mat\Sigma_i=\Mat R_i\bar{\Mat S}_i\bar{\Mat S}_i^{\top}\Mat R_i^{\top}$, where $\bar{\Mat S}_i^2=\Mat S_i^2+\epsilon_i^2\Mat I_3$ is evaluated elementwise on the diagonal.
The sites are $\Mat\mu_i$ and the eight points $\Mat\mu_i+r_i\Mat R_i\bar{\Mat S}_i\Mat z$ for $\Mat z\in\{\pm1\}^3$.
Here $r_i$ is defined in Appendix~\ref{app:admission} and capped at 3, matching GOF's box for sufficiently opaque Gaussians.
Sites are placed only for Gaussians with $o_i>1/255$.

\paragraph{Field evaluation.}\phantomsection\label{app:mesh_admission}
Mesh extraction uses a separate practical admission rule, the union of 3DGUT's projected sigma-point extent and a tile box enclosing the projected angular cap of the support.
As in GOF, view validity in Eq.~\eqref{eq:native_field} checks only the calibrated camera domain, not occlusion.
We refine each Marching Tetrahedra crossing with eight bisection steps on the same field, where unobserved midpoints remain opaque as in GOF and SOF.

\paragraph{Filters.}  Before Marching Tetrahedra, we apply the cell filter described by GOF~\cite{Yu2024GOF} and remove any tetrahedron with an edge longer than the sum of its endpoint Gaussians' support extents.
Neither the GOF nor SOF release implements this filter.
We then apply SOF's released filter, which retains a crossing vertex when its generating edge is no longer than the sum of the endpoint scales, then retains a face only if all three vertices remain.
The cell filter does not determine the comparison, since without it GOFish still reaches an F-score of 0.244 on FIORD against 0.239 for PGSR~\cite{chen2024pgsr}.

\begin{figure*}[t]
  \centering\small
  \newcommand{\fdpanel}[3]{\begin{minipage}[t]{#1}\centering
    \includegraphics[width=\linewidth]{figures/fiord_data/#2}\\[1pt]#3\end{minipage}}
  \fdpanel{0.247\textwidth}{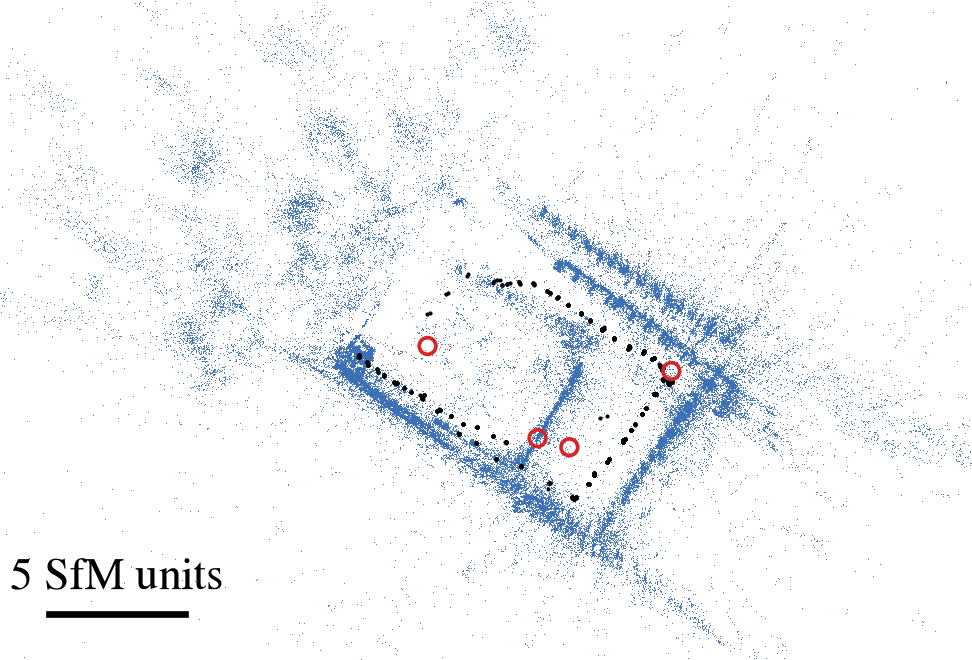}{\texttt{Night\_Out}}\hfill
  \fdpanel{0.247\textwidth}{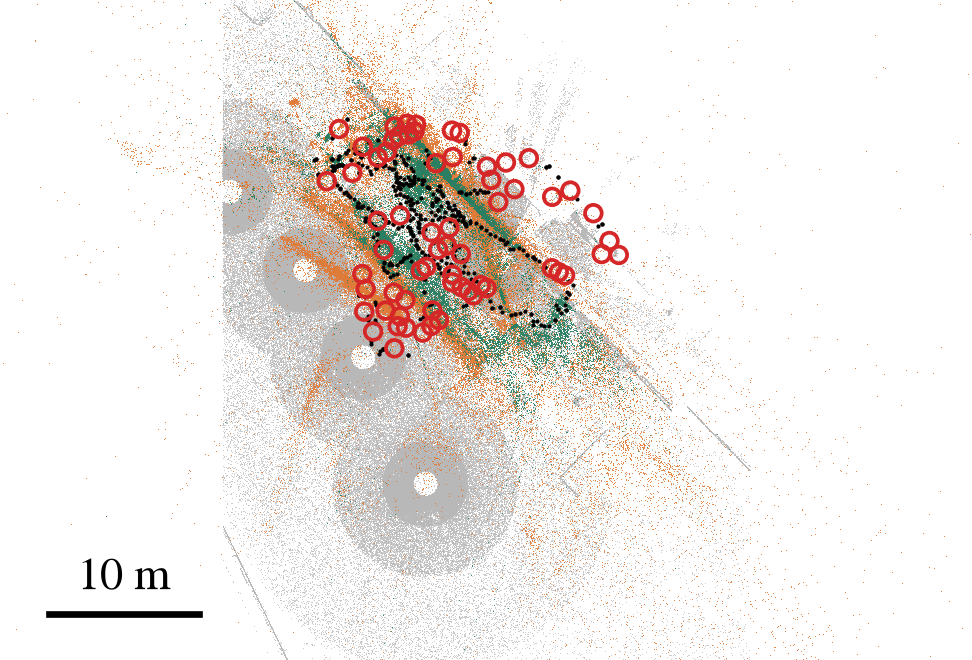}{\texttt{Building\_Out}}\hfill
  \fdpanel{0.247\textwidth}{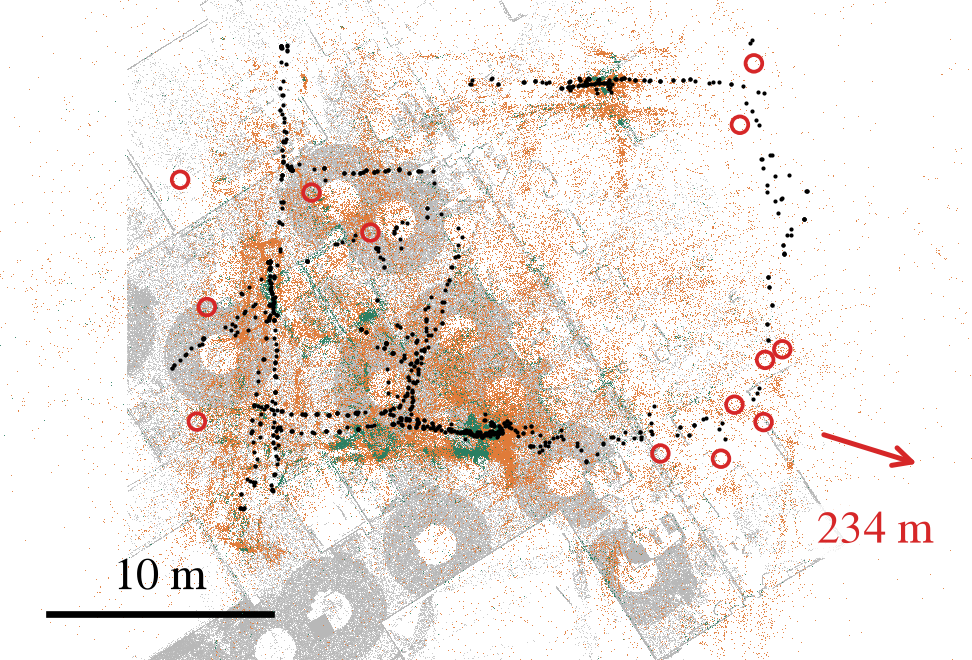}{\texttt{Upstairs\_In}}\hfill
  \fdpanel{0.247\textwidth}{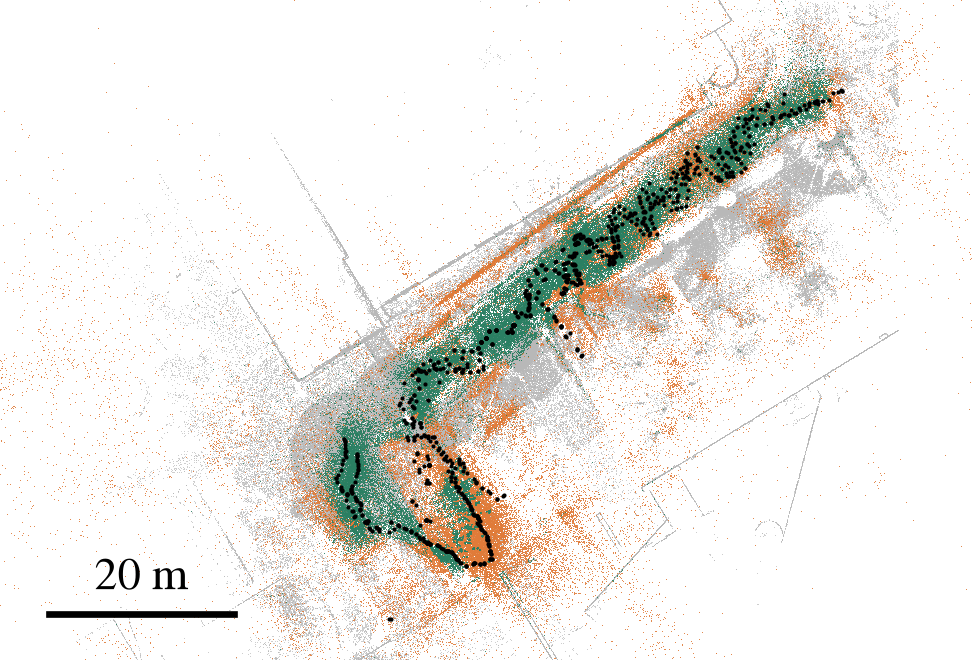}{\texttt{Corridor\_Out} (re-registered)}\\[3pt]
  \fdpanel{0.495\textwidth}{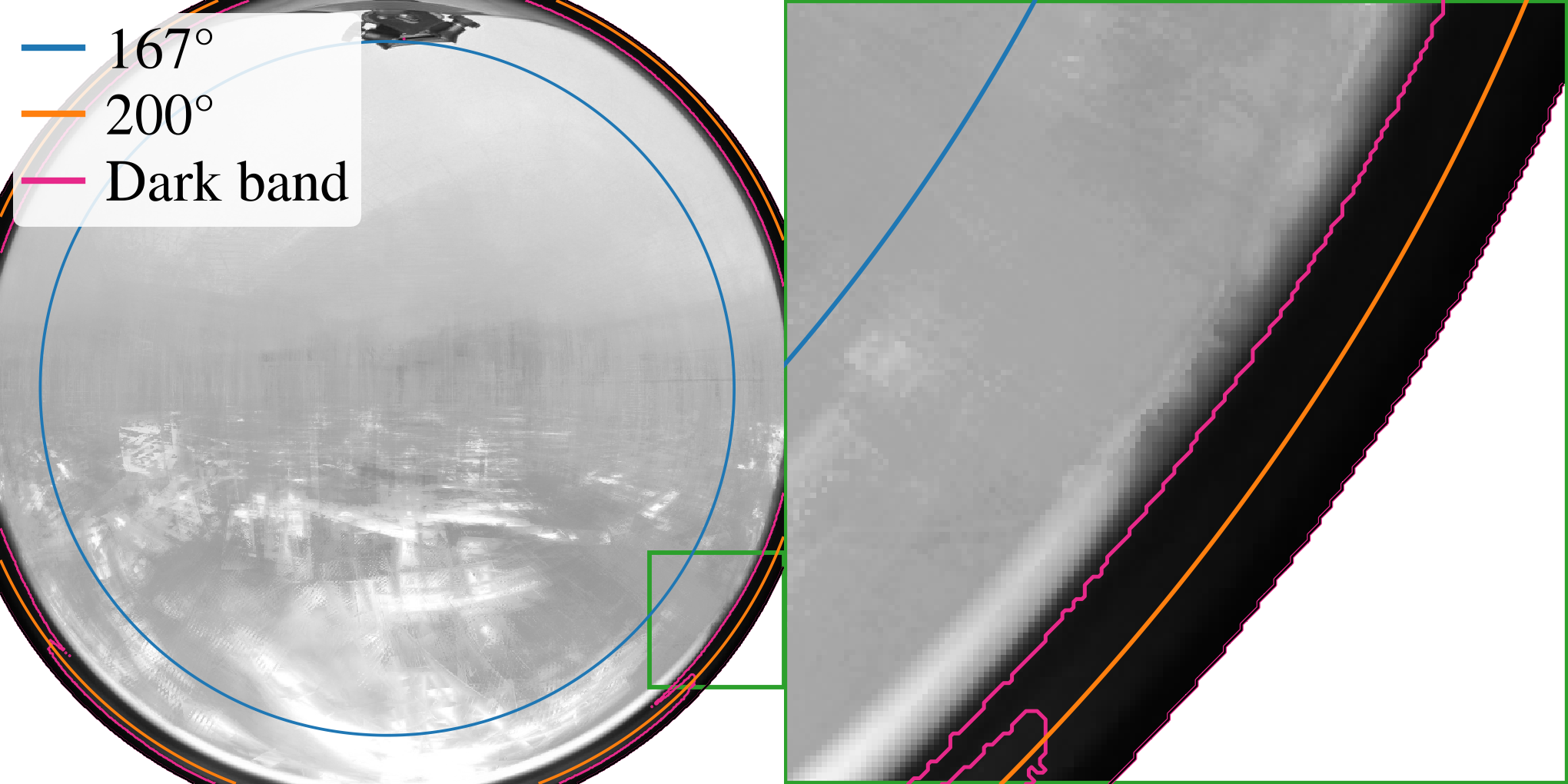}{\textbf{(a)} Dark rim band}\hfill
  \begin{minipage}[t]{0.495\textwidth}\centering
    \includegraphics[width=\linewidth]{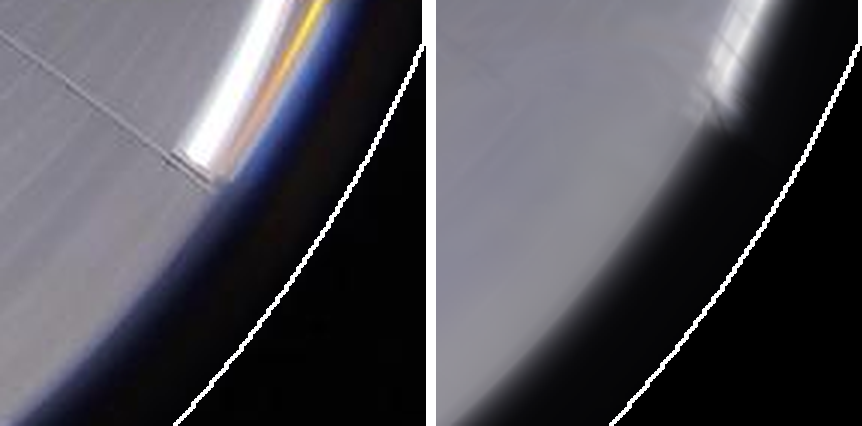}\\[1pt]\textbf{(b)} Rim and calibrated edge\end{minipage}
  \caption{\textbf{FIORD capture and calibration diagnostics~\cite{gunes2025fiord}.}
  (Top)~SfM reconstructions are overlaid on LiDAR
  (\textcolor{gray!70!black}{gray}).  Points within 10\,cm of LiDAR are
  \textcolor{green!50!black}{green} and others
  \textcolor{orange!90!black}{orange}.  \texttt{Night\_Out} has no LiDAR.
  Cameras are \textcolor{black}{black}; \textcolor{red!80!black}{red} marks those
  more isolated than any retained camera, including one off-panel (arrow).
  (a)~Full-FoV 95th-percentile training luma (left) and a rim crop (right)
  for \texttt{Building\_In} camera~2.  Curves mark total field angles of
  $167^\circ$ (\textcolor{blue!80!black}{blue})
  and $200^\circ$ (\textcolor{orange!90!black}{orange}), while
  \textcolor{magenta!80!black}{magenta} marks the dark band.
  (b)~Held-out ground truth (left) and GOFish rendering (right) at the rim,
  with the calibrated edge in white.}
  \label{fig:fiord_data}
\end{figure*}

\subsection{Real-Time Renderer}
\label{app:realtime}

GOFish's reported FPS uses the real-time renderer described here, while training and every tabulated rendering metric use the complete-admission renderer of Sec.~\ref{ssec:rasterization}.
The real-time renderer uses the same trained Gaussians, filtering, and PPISP model, recomputing PPISP each frame while caching camera-fixed terms such as $W$.

Gaussian admission starts from 3DGUT's projected sigma-point extent, with out-of-domain sigma points mirrored from valid counterparts.
If the extent remains undefined and the Gaussian subtends more than $5^\circ$, a conservative cone test admits it.
The renderer sorts admitted Gaussians per tile and composites them through a 12-entry per-pixel buffer.
Tiles containing more than 2048 Gaussians are divided among four thread blocks.

With complete admission and per-pixel sorting enabled, it reproduces the training renderer byte for byte.
Relative to the complete-admission renderer, it lowers Full PSNR by $0.03$\,dB on FIORD, $0.14$\,dB on FullCircle, and $0.002$\,dB on ScanNet++ without changing any first-place Full ranking.

\section{Additional Experimental Details}
\label{app:setup}

\subsection{FIORD Data}
\label{app:fiord_data}

FIORD~\cite{gunes2025fiord} provides full-FoV fisheye images with metric ground truth.
Figure~\ref{fig:fiord_data} shows the excluded SfM trajectories, \texttt{Corridor\_Out}'s re-registration, and field-of-view calibration.
We render seven of its ten scenes and evaluate geometry on six after method-independent registration and calibration checks.

\paragraph{Excluded scenes.}  The authors' SfM reconstructions of \texttt{Night\_Out}, \texttt{Building\_Out}, and \texttt{Upstairs\_In} do not form coherent camera trajectories.
Every retained camera is within 0.29 SfM units of another; the most isolated cameras in these scenes are 1.21, 0.96, and 100.6 units from the rest, respectively.
In \texttt{Upstairs\_In}, the outlier is supported by only 33 of the scene's 22{,}247 keypoints and lies 234\,m from the remaining trajectory.
We therefore exclude all three; \texttt{Night\_Out} also lacks LiDAR ground truth.

\paragraph{Registration.}  \texttt{Corridor\_Out}'s images and poses are internally consistent and retained for rendering, but its SfM--LiDAR registration is invalid.
Global re-registration with Fast Point Feature Histograms (FPFH), Random Sample Consensus (RANSAC), and robust Iterative Closest Point (ICP) reduces the camera-up tilt from $14.6^\circ$ to $1.7^\circ$ but leaves an inlier root-mean-square error of 0.114\,m, above our 0.10\,m limit.
We therefore exclude \texttt{Corridor\_Out} from the geometry mean and report it only for rendering.
A method-independent ICP alignment between \texttt{Hall\_In}'s SfM points and LiDAR consistently corrects its $3.4^\circ$ tilt, and we apply it to all methods.

\paragraph{Field-of-view calibration.}  The published FIORD intrinsics are extrapolated at the image-circle boundary.
On six cameras, the calibration maps this boundary to total FoVs of $198^\circ$ to $244^\circ$ for a nominal $200^\circ$ lens.
On the other eight, it loses monotonicity before the boundary, which ends the valid domain between $186^\circ$ and $229^\circ$.
Six of the 14 cameras contain a dark band at calibrated total field angles of at least $167^\circ$, covering 3.1--5.7\% of the valid image domain.
Although the calibration assigns scene rays to this band, its 95th-percentile training luma remains below 0.10 because it receives no scene light in any training frame.
A static scene cannot reproduce this camera-fixed darkness, so the per-camera attenuation $W$ of Sec.~\ref{ssec:sensor_response} absorbs it without entering the scene.
We retain the full calibrated FoV, including this band, when scoring every method.
Excluding it lowers Full PSNR by 0.07\,dB for the Geometry Objective variant and 0.04\,dB for DirectFisheye-GS~\cite{yang2025directfisheye}.

\begin{table*}[t]
\centering
\caption{\textbf{Quantitative comparison on both ScanNet++ scene sets~\cite{yeshwanth2023scannet++}.}
Left uses the six scenes of 3DGUT and Fisheye-GS~\cite{wu20253dgut,liao2024fisheyegs}; right uses CoMe's six~\cite{radl2026come}.
\emph{Full} is the native $127^\circ$ image and \emph{Common} the official undistortion.
GOFish is mapped to \emph{Common} with the official map and a $2{\times}$ box downsample; geometry uses CoMe's 5\,cm protocol.  \rankkey{}.}
\label{tab:app_scannet_both}
\setlength{\tabcolsep}{2.5pt}
\resizebox{\textwidth}{!}{%
\begin{tabular}{@{}l ccc ccc ccc ccc ccc ccc@{}}
\toprule
& \multicolumn{9}{c}{\textbf{ScanNet++ (3DGUT set)}~\cite{yeshwanth2023scannet++,wu20253dgut}} & \multicolumn{9}{c}{\textbf{ScanNet++ (CoMe set)}~\cite{yeshwanth2023scannet++,radl2026come}} \\
\cmidrule(lr){2-10}\cmidrule(l){11-19}
& \multicolumn{3}{c}{Full ($127^\circ$)} & \multicolumn{3}{c}{Common} & \multicolumn{3}{c}{Geometry}
& \multicolumn{3}{c}{Full ($127^\circ$)} & \multicolumn{3}{c}{Common} & \multicolumn{3}{c}{Geometry} \\
\cmidrule(lr){2-4}\cmidrule(lr){5-7}\cmidrule(lr){8-10}\cmidrule(lr){11-13}\cmidrule(lr){14-16}\cmidrule(l){17-19}
Method & PSNR$\uparrow$ & SSIM$\uparrow$ & LPIPS$\downarrow$ & PSNR$\uparrow$ & SSIM$\uparrow$ & LPIPS$\downarrow$ & $P\uparrow$ & $R\uparrow$ & $F\uparrow$
& PSNR$\uparrow$ & SSIM$\uparrow$ & LPIPS$\downarrow$ & PSNR$\uparrow$ & SSIM$\uparrow$ & LPIPS$\downarrow$ & $P\uparrow$ & $R\uparrow$ & $F\uparrow$ \\
\midrule
GOF~\cite{Yu2024GOF} & 27.54 & 0.904 & 0.212 & \cellthird{30.01} & 0.920 & 0.208 & 0.579 & 0.629 & 0.601 & 26.10 & 0.874 & 0.234 & 27.34 & 0.889 & 0.233 & 0.578 & 0.693 & 0.630 \\
PGSR~\cite{chen2024pgsr} & 27.39 & 0.898 & 0.241 & 29.66 & 0.916 & 0.229 & \cellbest{0.698} & 0.677 & 0.687 & 25.69 & 0.870 & 0.258 & 26.94 & 0.887 & 0.249 & \cellbest{0.694} & 0.728 & \cellbest{0.710} \\
SOF~\cite{radl2025sof} & 27.10 & 0.898 & 0.218 & 28.83 & 0.914 & 0.214 & 0.620 & 0.667 & 0.641 & 26.29 & 0.876 & 0.221 & 27.20 & 0.889 & 0.225 & 0.603 & 0.724 & 0.657 \\
MiLo~\cite{guedon2025milo} & 27.61 & 0.903 & 0.234 & 29.75 & 0.916 & 0.231 & \cellthird{0.651} & 0.701 & 0.675 & 26.35 & 0.877 & 0.247 & 27.50 & 0.891 & 0.247 & \cellsecond{0.652} & 0.726 & 0.687 \\
CoMe~\cite{radl2026come} & \cellthird{28.11} & \cellthird{0.909} & \cellthird{0.190} & 29.92 & \cellthird{0.921} & \cellthird{0.193} & \cellsecond{0.652} & \cellthird{0.730} & \cellthird{0.687} & \cellthird{26.65} & \cellthird{0.880} & \cellthird{0.207} & \cellthird{27.55} & \cellthird{0.893} & \cellthird{0.211} & \cellthird{0.631} & \cellthird{0.768} & 0.692 \\
\midrule
\textbf{GOFish (Ours)} & \cellbest{30.17} & \cellbest{0.921} & \cellsecond{0.172} & \cellbest{30.72} & \cellbest{0.925} & \cellsecond{0.185} & 0.637 & \cellsecond{0.795} & \cellbest{0.706} & \cellbest{27.61} & \cellsecond{0.888} & \cellsecond{0.195} & \cellbest{28.24} & \cellsecond{0.899} & \cellsecond{0.208} & 0.604 & \cellsecond{0.828} & \cellsecond{0.698} \\
\textbf{+ Geometry Objective} & \cellsecond{30.12} & \cellsecond{0.921} & \cellbest{0.171} & \cellsecond{30.68} & \cellsecond{0.925} & \cellbest{0.185} & 0.630 & \cellbest{0.801} & \cellsecond{0.704} & \cellsecond{27.61} & \cellbest{0.889} & \cellbest{0.193} & \cellsecond{28.20} & \cellbest{0.899} & \cellbest{0.207} & 0.592 & \cellbest{0.846} & \cellthird{0.696} \\
\bottomrule
\end{tabular}}
\end{table*}

\subsection{Evaluation Protocol}
\label{app:evaluation}\label{app:runtime}\label{app:ablation_rows}

\paragraph{Datasets.}  Camera-native methods train on $816{\times}816$ FIORD images and $720{\times}720$ FullCircle images per lens~\cite{foroutan2026fullcircle}, both downsampled by $4{\times}$.
ScanNet++~\cite{yeshwanth2023scannet++} uses $876{\times}584$ DSLR fisheye images downsampled by $2{\times}$, while Tanks~\&~Temples~\cite{Knapitsch2017tanks} uses images near $980{\times}545$ from GOF's preprocessing~\cite{Yu2024GOF}.
Rectified methods use matching $800{\times}800$ images for FIORD and FullCircle and the official undistorted $876{\times}584$ images for ScanNet++.
Every eighth image is held out for testing, except FullCircle, which uses separately captured tripod frames.
ScanNet++ uses the six scenes of 3DGUT and Fisheye-GS~\cite{wu20253dgut,liao2024fisheyegs}, and Tanks~\&~Temples uses GOF's six scenes.

\paragraph{Rendering.}  We use no test-time color, exposure, or pose fitting and leave learned image processors unchanged.
\emph{Full} is the calibrated FoV of Appendix~\ref{app:domain}, excluding only the tripod pixels of FullCircle's test mask.
\emph{Common} is the rectified training raster; camera-native predictions are resampled onto it with the fixed rectification operator.

Camera-native methods use their released renderers.
For rectified methods, we port each rasterizer to the calibrated lens while preserving its activations, filters, sorting, alpha rules, appearance handling, and background, with no GOFish component.
On rectified test views, every port matches its released renderer within 0.01\,dB PSNR, 0.0002 SSIM, and 0.0004 LPIPS~\cite{zhang2018lpips} on every scene.
SOF and CoMe~\cite{radl2025sof,radl2026come} match byte for byte.
For each FullCircle scene, we use whichever of two GOF ports, differing only in numeric precision, passes this check.

To avoid renderer conversion, the GOFish @ FoV columns use each baseline's released renderer on its own domain and score GOFish on the same pixels.
That domain contains pixels whose ray angle is at most half the renderer's listed FoV.
SSIM uses an $11{\times}11$ Gaussian window and LPIPS uses VGG, both restricted to each scored region; on the GOFish @ FoV domains, LPIPS is averaged over $64{\times}64$ tiles.
On Tanks~\&~Temples, each method uses its released pinhole renderer; when dimensions differ, we resize only the prediction with a fixed operator.

\paragraph{Geometry.}  Every baseline retains its released mesh post-processing; GOFish uses Appendix~\ref{app:extraction}.
On FIORD, observed ground truth comprises LiDAR points that hidden-point removal~\cite{katz2007direct} finds visible within the calibrated domain from at least two training cameras.
Recall counts a point as recovered when its nearest prediction is within 10\,cm.
As on Tanks~\&~Temples and ScanNet++, precision is evaluated only in regions supported by the ground truth.
Predicted samples must lie inside the LiDAR's axis-aligned bounding volume and be observable by the same test from at least one training camera.
Both visibility selections depend only on the ground truth and training cameras.
On ScanNet++, every method follows CoMe's protocol of scaled ICP per mesh, cropping to the ground-truth volume, $2.5$\,cm voxel downsampling, and a $5$\,cm threshold.
On Tanks~\&~Temples, we use the official crop, alignment, thresholds, and evaluator.

\paragraph{Runtime.}  All times are wall-clock measurements on one NVIDIA A10G GPU.
Training includes the full schedule and the novel-view controller phase of PPISP~\cite{deutsch2026ppispphysicallyplausiblecompensationcontrol}; meshing includes extraction and post-processing of the 30k scene.
Rendering speed is measured at each method's evaluation resolution over 200 synchronized held-out frames after 20 warm-ups, timing only the trained-scene renderer; GOFish uses Appendix~\ref{app:realtime}.
GOFish's training times use an optimized build with a bit-identical forward pass and gradients that agree up to rounding; metrics use standard runs.

\paragraph{Component analysis.}  In Table~\ref{tab:ablation}, F averages the six geometry scenes and Train excludes meshing.
The + Full-FoV Rasterization row keeps 3DGUT's densification, and from + Calibrated Mip Filtering on, rows use the calibrated densification gradient and the schedule of Appendix~\ref{app:optimization}.
Every 3DGUT and 3DGRT~\cite{moenne20243d} row uses the same opacity-field readout, and empty extractions score 0.
The + Full-FoV Rasterization row includes only exact speedups and reports FPS from its own configuration without the Mip filter.

\subsection{Baseline Adaptations}
\label{app:baselines}

On FIORD, SOF~\cite{radl2025sof} uses its released large-scene recipe at the rectified resolution; on Tanks~\&~Temples, PGSR's released preprocessing~\cite{chen2024pgsr} supplies scene bounds absent from GOF's inputs.
MiLo~\cite{guedon2025milo} exceeds the 24\,GB memory of the NVIDIA A10G GPU on FIORD \texttt{Corridor\_Out} and \texttt{Road\_Out}, so we train those scenes with the same released configuration on a 40\,GB NVIDIA A100 GPU.

\textbf{Fisheye-GS}~\cite{liao2024fisheyegs} supports only an equidistant fisheye model.
On FIORD and ScanNet++, we reproject training images to its equidistant camera and map held-out predictions back to the native raster for scoring.
Pixels outside the source image use zeros on FIORD and reflected borders on ScanNet++.
On Tanks~\&~Temples, we make its training script honor pinhole mode.
We also restore the forward-pass projection clamp at $1.3\tan(\mathrm{FoV}/2)$ assumed by its backward pass, as in 3DGS~\cite{kerbl3Dgaussians}.
Without the clamp, Gaussians far outside the frustum render without the intended bound while receiving clamped gradients, producing floaters in held-out views.

\textbf{Self-Calibrating GS}~\cite{deng2025self} initializes its lens network from a fisheye calibration file.
For the pinhole Tanks~\&~Temples images, we supply a zero-distortion pinhole calibration, so the lens starts at identity rather than its equidistant fallback.
On ScanNet++, its $90^\circ$ cube-face guard rescales only the vertical focal length, although the same intrinsics define the fisheye-to-cube mapping.
We retain the calibrated focal length for that mapping and keep the face intrinsics consistent in the evaluation cameras.
We also use the dataset's Kannala--Brandt calibration instead of its equidistant default for the base lens mapping.
Training does not save the cubemap network, so the method's own initialization refits it before the released held-out render pass used for timing.

\textbf{3DGUT and 3DGRT}~\cite{wu20253dgut,moenne20243d} use a released loader that sets FullCircle's field-of-view limit from the image corner.
The corner lies beyond the calibration's monotone range, producing folded rays and collapsing 3DGUT training on \texttt{room3} to 11.7\,dB Full PSNR.
We cap the limit at the end of the monotone range and assign no ray beyond it, matching FullCircle's treatment of border pixels.

\begin{table}[t]
\centering
\caption{\textbf{Quantitative comparison on Tanks~\&~Temples~\cite{Knapitsch2017tanks}.}  \rankkey{}.  $\ast$~Appendix~\ref{app:baselines} details baseline adaptations.}
\label{tab:app_tnt_render}\label{tab:app_tnt_mesh}
\setlength{\tabcolsep}{3pt}
\resizebox{\columnwidth}{!}{%
\begin{tabular}{@{}l ccc ccc@{}}
\toprule
& \multicolumn{3}{c}{Rendering} & \multicolumn{3}{c}{Geometry} \\
\cmidrule(lr){2-4}\cmidrule(l){5-7}
Method & PSNR$\uparrow$ & SSIM$\uparrow$ & LPIPS$\downarrow$ & $P\uparrow$ & $R\uparrow$ & $F\uparrow$ \\
\midrule
Fisheye-GS$^\ast$~\cite{liao2024fisheyegs} & 24.71 & 0.838 & 0.236 &  &  &  \\
3DGUT~\cite{wu20253dgut} & 25.11 & 0.852 & 0.221 &  &  &  \\
3DGUT + PPISP~[\citenum{wu20253dgut}, \citenum{deutsch2026ppispphysicallyplausiblecompensationcontrol}] & \cellthird{26.51} & \cellsecond{0.868} & \cellthird{0.199} &  &  &  \\
3DGEER$^\ast$~\cite{huang20253dgeer} & 24.42 & 0.837 & 0.235 &  &  &  \\
Self-Calibrating GS$^\ast$~\cite{deng2025self} & 22.66 & 0.759 & 0.332 &  &  &  \\
DirectFisheye-GS~\cite{yang2025directfisheye} & 25.37 & 0.846 & 0.227 &  &  &  \\
3DGRT~\cite{moenne20243d} & 23.95 & 0.842 & 0.232 &  &  &  \\
3DGRT + PPISP~[\citenum{moenne20243d}, \citenum{deutsch2026ppispphysicallyplausiblecompensationcontrol}] & 26.42 & 0.863 & 0.200 &  &  &  \\
\midrule
GOF~\cite{Yu2024GOF} & 22.85 & 0.833 & 0.231 & 0.511 & 0.417 & 0.451 \\
PGSR~\cite{chen2024pgsr} & 23.63 & 0.840 & 0.212 & 0.476 & \cellbest{0.585} & \cellsecond{0.514} \\
SOF~\cite{radl2025sof} & 22.96 & 0.814 & 0.248 & \cellthird{0.520} & 0.422 & \cellthird{0.454} \\
MiLo~\cite{guedon2025milo} & 22.96 & 0.813 & 0.281 & \cellsecond{0.522} & 0.396 & 0.445 \\
CoMe~\cite{radl2026come} & 22.45 & 0.814 & 0.245 & \cellbest{0.574} & \cellsecond{0.495} & \cellbest{0.521} \\
\midrule
\textbf{GOFish (Ours)} & \cellbest{26.77} & \cellbest{0.872} & \cellbest{0.187} & 0.389 & 0.422 & 0.399 \\
\textbf{+ Geometry Objective} & \cellsecond{26.52} & \cellthird{0.868} & \cellsecond{0.193} & 0.406 & \cellthird{0.435} & 0.414 \\
\bottomrule
\end{tabular}}
\end{table}

\begin{table*}[t]
\centering
\caption{\textbf{Efficiency and representation size.}  Dataset averages follow Appendix~\ref{app:runtime}.  Training, meshing, and FPS are measured on an NVIDIA A10G GPU.  GOFish FPS uses the real-time renderer detailed in Appendix~\ref{app:realtime}.  $\star$ denotes an NVIDIA A100 GPU run detailed in Appendix~\ref{app:baselines}.}
\label{tab:runtime}
\setlength{\tabcolsep}{2.5pt}
\resizebox{\textwidth}{!}{%
\begin{tabular}{@{}l cccccc cccccc cccccc@{}}
\toprule
& \multicolumn{6}{c}{\textbf{FIORD}~\cite{gunes2025fiord}} & \multicolumn{6}{c}{\textbf{ScanNet++}~\cite{yeshwanth2023scannet++}} & \multicolumn{6}{c}{\textbf{Tanks \& Temples}~\cite{Knapitsch2017tanks}} \\
\cmidrule(lr){2-7}\cmidrule(lr){8-13}\cmidrule(l){14-19}
Method & Train$\downarrow$ & Mesh$\downarrow$ & Total$\downarrow$ & FPS$\uparrow$ & Gaussians & Triangles & Train$\downarrow$ & Mesh$\downarrow$ & Total$\downarrow$ & FPS$\uparrow$ & Gaussians & Triangles & Train$\downarrow$ & Mesh$\downarrow$ & Total$\downarrow$ & FPS$\uparrow$ & Gaussians & Triangles \\
\midrule
3DGS~\cite{kerbl3Dgaussians} & 23m &  & 23m & 124 & 0.95M &  & 16m &  & 16m & 168 & 0.97M &  & 22m &  & 22m & 107 & 1.31M & \\
Fisheye-GS$^\ast$~\cite{liao2024fisheyegs} & 42m &  & 42m & 135 & 0.60M &  & 119m &  & 119m & 74.0 & 0.96M &  & 21m &  & 21m & 115 & 1.49M & \\
3DGUT~\cite{wu20253dgut} & 28m &  & 28m & 92.1 & 1.03M &  & 10m &  & 10m & 238 & 0.37M &  & 24m &  & 24m & 105 & 1.00M & \\
3DGUT + PPISP~[\citenum{wu20253dgut}, \citenum{deutsch2026ppispphysicallyplausiblecompensationcontrol}] & 29m &  & 29m & 84.3 & 1.03M &  & 11m &  & 11m & 209 & 0.35M &  & 25m &  & 25m & 96.5 & 1.00M & \\
3DGEER$^\ast$~\cite{huang20253dgeer} & 51m &  & 51m & 46.7 & 1.90M &  & 67m &  & 67m & 112 & 1.41M &  & 71m &  & 71m & 97.2 & 4.27M & \\ 
Self-Calibrating GS$^\ast$~\cite{deng2025self} & 123m &  & 123m & 7.0 & 0.51M &  & 98m &  & 98m & 6.8 & 0.34M &  & 84m &  & 84m & 30.7 & 2.03M & \\
DirectFisheye-GS~\cite{yang2025directfisheye} & 28m &  & 28m & 208 & 0.46M &  & 56m &  & 56m & 137 & 0.66M &  & 29m &  & 29m & 191 & 0.97M & \\
3DGRT~\cite{moenne20243d} & 104m &  & 104m & 23.1 & 1.03M &  & 27m &  & 27m & 95.7 & 0.53M &  & 63m &  & 63m & 39.8 & 1.00M &  \\ 
3DGRT + PPISP~[\citenum{moenne20243d}, \citenum{deutsch2026ppispphysicallyplausiblecompensationcontrol}] & 108m &  & 108m & 22.3 & 1.03M &  & 28m &  & 28m & 93.5 & 0.50M &  & 67m &  & 67m & 37.2 & 1.00M &  \\ 
\midrule
GOF~\cite{Yu2024GOF} & 104m & 104m & 208m & 11.0 & 0.73M & 9.9M & 76m & 51m & 127m & 18.2 & 0.98M & 16.7M & 93m & 46m & 139m & 11.7 & 1.34M & 28.3M \\
PGSR~\cite{chen2024pgsr} & 49m & 309m & 358m & 103 & 0.53M & 20.6M & 45m & 1m & 46m & 118 & 0.60M & 4.9M & 69m & 4m & 73m & 43.6 & 1.44M & 22.9M \\
SOF~\cite{radl2025sof} & 37m & 10m & 47m & 53.4 & 0.85M & 8.5M & 34m & 9m & 43m & 52.5 & 0.99M & 16.8M & 41m & 10m & 51m & 42.6 & 1.35M & 28.2M \\
MiLo~\cite{guedon2025milo} & 62m$^\star$ & 6m$^\star$ & 68m$^\star$ & 312 & 0.16M & 4.8M & 77m & 7m & 85m & 314 & 0.21M & 6.1M & 94m & 10m & 104m & 194 & 0.23M & 7.0M \\
CoMe~\cite{radl2026come} & 41m & 9m & 50m & 40.2 & 0.69M & 12.0M & 43m & 11m & 54m & 34.8 & 1.20M & 21.0M & 55m & 14m & 69m & 29.1 & 1.79M & 29.0M \\
\midrule
GOFish (Ours) & 107m & 46m & 153m & 79.3 & 0.60M & 19.3M & 129m & 29m & 158m & 98.8 & 0.97M & 26.7M & 99m & 22m & 121m & 81.5 & 1.32M & 41.4M \\
+ Geometry Objective & 116m & 45m & 161m & 86.8 & 0.60M & 18.6M & 135m & 28m & 163m & 107 & 0.97M & 26.5M & 103m & 21m & 124m & 84.0 & 1.32M & 39.6M \\
\bottomrule
\end{tabular}}
\end{table*}

\IfFileExists{tables/app_geer.tex}{\input{tables/app_geer}}{}

\ifgeer
\textbf{3DGEER}~\cite{huang20253dgeer} uses KB mode on FIORD and FullCircle and BEAP training with KB rendering on ScanNet++, following its released configurations at our splits and resolutions.
Because its fisheye models assume forward-facing rays, we use the release's mask to exclude rays at least $89^\circ$ from the optical axis, limiting its field of view to $178^\circ$.
These rays receive no loss and are scored as unrendered black pixels, as for every baseline.
We supply one ray map per FIORD camera, computed with a converged inverse of the KB model, because the release's three-step Newton inverse does not converge on FIORD's lenses.
We also cull Gaussians whose KB pixel bound is inverted for a view, a case the release does not catch and that crashes training on two outdoor scenes.
On FullCircle \texttt{lounge}, 3DGEER exceeds the memory of our 24\,GB and 32\,GB GPUs at its released densification threshold, so we double it to 0.0004 and leave all other settings unchanged.

\fi

\section{Additional Results}
\label{app:results}\label{app:benchmarks}

Table~\ref{tab:app_scannet_both} compares ScanNet++~\cite{yeshwanth2023scannet++} on the six scenes used by 3DGUT and Fisheye-GS~\cite{wu20253dgut,liao2024fisheyegs} and the six used by CoMe~\cite{radl2026come}.
Across its base and Geometry Objective variants, GOFish leads rendering and recall on both sets.
Base GOFish also leads F-score on the 3DGUT set and ranks second on CoMe's set, 0.698 against 0.710 for PGSR~\cite{chen2024pgsr}.
On the pinhole Tanks~\&~Temples benchmark~\cite{Knapitsch2017tanks} in Table~\ref{tab:app_tnt_render}, GOFish renders best, while dedicated surface methods retain higher F-score.
Figure~\ref{fig:app_qual_tnt} shows every scene; on \texttt{Courthouse}, floaters cover CoMe's rendering while GOFish recovers the facade.

\subsection{Ablations and Robustness}
\label{app:analyses}

\paragraph{Cross-dataset losses.}\label{app:come_losses}  Across separate full training runs, CoMe's confidence-weighted photometric loss and normal-variance regularizer raise F-score by 0.011, 0.028, and 0.013 on FIORD~\cite{gunes2025fiord}, ScanNet++, and Tanks~\&~Temples.
Full PSNR changes by $-0.13$, $+0.06$, and $-0.11$\,dB, respectively.
Following CoMe's ScanNet++ protocol, GOFish uses these losses only there.
The cross-dataset runs show that both metrics improve on ScanNet++, whereas modest F-score gains on FIORD and Tanks~\&~Temples come with lower Full PSNR.

\paragraph{Densification split.}\label{app:split}  Table~\ref{tab:ablation} adds calibrated densification and filtering together.
We separate them on four of FIORD's six geometry scenes~\cite{gunes2025fiord}: \texttt{Kitchen\_In}, \texttt{Road\_Out}, \texttt{MeetingRoom\_In}, and \texttt{Bridge\_Out}.
The \texttt{Building\_In} runs used an earlier training configuration, while unfiltered \texttt{Hall\_In} training ran out of GPU memory.
With only the image-space statistic, only SOF's schedule~\cite{radl2025sof}, or only the filter, the mean F-score is 0.146, 0.214, and 0.140, against 0.301 for the full row on these scenes.
On \texttt{Kitchen\_In} and \texttt{MeetingRoom\_In}, where both variants produce meshes, calibrated densification without the filter scores 0.573 and 0.503, against 0.560 and 0.505 with it.

With and without the filter, Full PSNR differs by at most 0.1\,dB on these four scenes and on \texttt{Corridor\_Out}, which has no geometry ground truth.
The filter reduces the Gaussian count by $2.2$--$5.4\times$, from 1.05--2.94\,M to 0.42--0.69\,M, and raises real-time rendering speed on \texttt{Kitchen\_In} and \texttt{MeetingRoom\_In} from 65 to 86--90\,FPS on an NVIDIA A10G GPU.
Without the filter, \texttt{Bridge\_Out} mesh extraction ran out of GPU memory, while \texttt{Road\_Out} mesh extraction did not finish.
Thus calibrated densification drives surface recovery, while filtering controls scene size and memory.

\paragraph{Run-to-run variation.}\label{app:seeds}  Each of three repetitions covers every evaluated FIORD scene for both GOFish and PGSR.
Across the six geometry scenes, base GOFish averages an F-score of $0.245\pm0.001$ versus $0.239\pm0.002$ for PGSR, and every GOFish run exceeds every PGSR run.
Its seven-scene mean Full PSNR varies by only $\pm0.02$\,dB.
The F-score gap therefore exceeds run-to-run variation.

\paragraph{Geometry by region.}\label{app:geom_region}  We assign each FIORD ground-truth point and predicted sample the smallest ray angle from which any training camera observes it.
We separately identify samples observed by any camera within its $90^\circ$ rectification.
This rectified region contains 94\% of the observed ground truth.
GOFish leads every surface method within the shared $90^\circ$ FoV, 0.254 against 0.243 for PGSR, and for rays beyond $90^\circ$ from the optical axis, 0.112 against 0.102 for CoMe.
Its overall advantage therefore does not come only from evaluating the wider FoV.

\subsection{Efficiency}
\label{app:efficiency}

Table~\ref{tab:runtime} separates training, meshing, and rendering cost from representation size.
On FIORD, GOFish completes training and mesh extraction in 153 minutes, faster end to end than GOF's 208 minutes~\cite{Yu2024GOF}.
Its real-time renderer reaches 79 FPS, close to the 84 FPS of rendering-only 3DGUT + PPISP~\cite{wu20253dgut,deutsch2026ppispphysicallyplausiblecompensationcontrol} and more than three times the 22 FPS of 3DGRT + PPISP~\cite{moenne20243d}.
The same renderer reaches 99 FPS on ScanNet++ and 82 FPS on Tanks~\&~Temples.
Despite covering the full FoV, GOFish averages 0.60 million Gaussians on FIORD, compared with 1.03 million for 3DGUT + PPISP.
The optional Geometry Objective adds little end-to-end cost and does not reduce rendering speed.

\begin{figure*}[t]
  \centering
  \begin{minipage}[t]{0.49\linewidth}\centering
    \includegraphics[width=\linewidth]{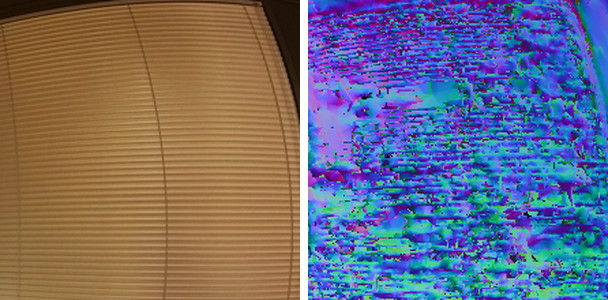}\\[1pt]{\small \textbf{(a)} Reflective surface}\end{minipage}\hfill
  \begin{minipage}[t]{0.49\linewidth}\centering
    \includegraphics[width=\linewidth]{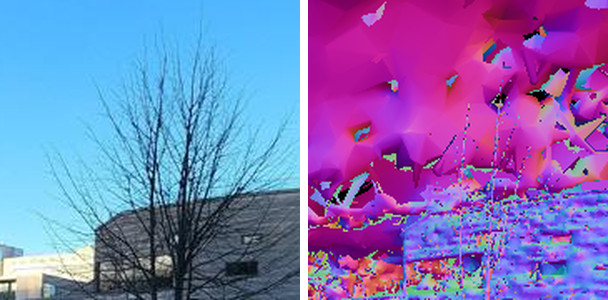}\\[1pt]{\small \textbf{(b)} Thin structure}\end{minipage}
  \\[3pt]
  \begin{minipage}[t]{0.49\linewidth}\centering
    \includegraphics[width=\linewidth]{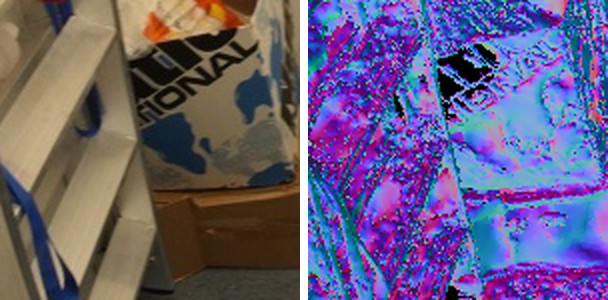}\\[1pt]{\small \textbf{(c)} Holes}\end{minipage}\hfill
  \begin{minipage}[t]{0.49\linewidth}\centering
    \includegraphics[width=\linewidth]{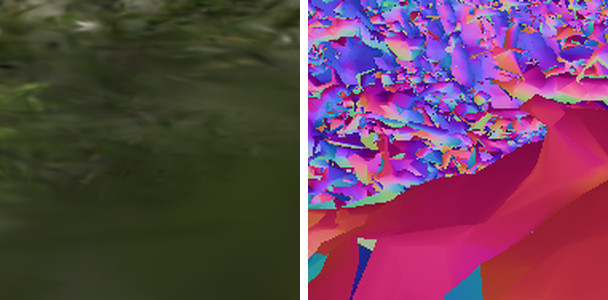}\\[1pt]{\small \textbf{(d)} Overfill}\end{minipage}
  \caption{\textbf{Representative surface-reconstruction failure cases.}
  Panels (a)--(c) compare ground truth (left) with GOFish meshes (right).
  Panel (d) shows GOFish's rendering (left) and mesh (right).}
  \label{fig:app_failures}
\end{figure*}%

\subsection{Surface Failure Cases}
\label{app:failures}

Figure~\ref{fig:app_failures} shows challenges shared across Gaussian surface reconstruction methods~\cite{Yu2024GOF,radl2025sof,chen2024pgsr,guedon2025milo}.
Reflective surfaces and thin structures challenge the minimum-over-views field in panels (a) and (b), while sparse observations produce holes and overfill in panels (c) and (d).

\subsection{Per-Scene Metrics}
\label{app:per_scene}

Tables~\ref{tab:app_ps_render}--\ref{tab:app_ps_geom} provide per-scene results for the aggregates in Tables~\ref{tab:rendering},~\ref{tab:mesh},~\ref{tab:app_scannet_both}, and~\ref{tab:app_tnt_render}.
They also include FullCircle surface-baseline results and Tanks~\&~Temples rendering and geometry that the body tables do not report.
Rendering domains follow Appendix~\ref{app:evaluation}: \emph{Full} is the calibrated FoV, \emph{Common} the shared rectification, and GOFish @ FoV compares GOFish on the pixels rendered by each baseline.
Blank cells denote unreported domains, and asterisks mark baseline adaptations detailed in Appendix~\ref{app:baselines}.

\subsection{Additional Qualitative Results}
\label{app:qualitative}

Figures~\ref{fig:app_qual_render}--\ref{fig:app_qual_tnt} show one held-out view per scene, emphasizing scenes not shown in the body.
Every dataset has a rendering comparison; FIORD~\cite{gunes2025fiord}, ScanNet++~\cite{yeshwanth2023scannet++}, and Tanks~\&~Temples~\cite{Knapitsch2017tanks} also have surface comparisons, while FullCircle~\cite{foroutan2026fullcircle} lacks geometry ground truth.
Badges report per-scene metrics.
Meshes are rendered from the held-out camera and shaded as in Figure~\ref{fig:qualitative_mesh}; black marks missing surfaces.
GOFish uses the Geometry Objective for meshing.

\begin{table}[t]
\centering\footnotesize
\caption*{\textbf{Scene abbreviations for Tables~\ref{tab:app_ps_render}--\ref{tab:app_ps_geom}.}}
\renewcommand{\arraystretch}{0.94}
\setlength{\tabcolsep}{5pt}
\begin{tabular}{@{}lll@{}}
\toprule
Dataset & Abbreviation & Scene \\
\midrule
\multirow{7}{*}{\textbf{FIORD}~\cite{gunes2025fiord}}
& Kit. & \texttt{Kitchen\_In} \\
& Cor. & \texttt{Corridor\_Out} \\
& Road & \texttt{Road\_Out} \\
& Bld. & \texttt{Building\_In} \\
& Meet. & \texttt{MeetingRoom\_In} \\
& Bri. & \texttt{Bridge\_Out} \\
& Hall & \texttt{Hall\_In} \\
\midrule
\multirow{9}{*}{\textbf{FullCircle}~\cite{foroutan2026fullcircle}}
& Dark & \texttt{dark} \\
& F1 & \texttt{flat1} \\
& F2 & \texttt{flat2} \\
& Lab & \texttt{lab} \\
& Lou. & \texttt{lounge} \\
& Per. & \texttt{persons} \\
& R1 & \texttt{room1} \\
& R2 & \texttt{room2} \\
& R3 & \texttt{room3} \\
\midrule
\multirow{6}{*}{\textbf{ScanNet++}~\cite{yeshwanth2023scannet++}}
& \texttt{0a5c} & \texttt{0a5c013435} \\
& \texttt{8d56} & \texttt{8d563fc2cc} \\
& \texttt{bb87} & \texttt{bb87c292ad} \\
& \texttt{d415} & \texttt{d415cc449b} \\
& \texttt{e8ea} & \texttt{e8ea9b4da8} \\
& \texttt{fe17} & \texttt{fe1733741f} \\
\midrule
\multirow{6}{*}{\shortstack[l]{\textbf{Tanks \&}\\\textbf{Temples}~\cite{Knapitsch2017tanks}}}
& Barn & \texttt{Barn} \\
& Cat. & \texttt{Caterpillar} \\
& Court. & \texttt{Courthouse} \\
& Ign. & \texttt{Ignatius} \\
& Meet. & \texttt{Meetingroom} \\
& Truck & \texttt{Truck} \\
\bottomrule
\end{tabular}
\end{table}

\FloatBarrier

\renewcommand{\dbltopfraction}{0.98}\renewcommand{\topfraction}{0.98}\renewcommand{\dblfloatpagefraction}{0.9}\renewcommand{\floatpagefraction}{0.9}\setcounter{dbltopnumber}{3}\setcounter{topnumber}{3}

\begin{table*}[tp]
\centering
\caption{\textbf{Per-scene rendering results on FIORD~\cite{gunes2025fiord} and FullCircle~\cite{foroutan2026fullcircle} for fisheye baselines.}  \rankkey{}.
GOFish's gain at each baseline's FoV is \rankswatch{tabgain}{highlighted}.  Protocols follow Appendix~\ref{app:evaluation}.}
\label{tab:app_ps_render}\label{tab:app_ps_render_fc}
\setlength{\tabcolsep}{1pt}
\resizebox{\textwidth}{!}{%
\begin{tabular}{@{}cl ccccccccccccccccccccccccccccccccc@{}}
\toprule
 & & & \multicolumn{14}{c}{\textbf{FIORD}~\cite{gunes2025fiord}} & \multicolumn{18}{c}{\textbf{FullCircle}~\cite{foroutan2026fullcircle}} \\
\cmidrule(lr){4-17}\cmidrule(l){18-35}
 & & & \multicolumn{7}{c}{Full ($200^\circ$)} & \multicolumn{7}{c}{GOFish @ FoV} & \multicolumn{9}{c}{Full ($205^\circ$)} & \multicolumn{9}{c}{GOFish @ FoV} \\
\cmidrule(lr){4-10}\cmidrule(lr){11-17}\cmidrule(lr){18-26}\cmidrule(l){27-35}
 & Method & FoV & Kit. & Cor. & Road & Bld. & Meet. & Bri. & Hall & Kit. & Cor. & Road & Bld. & Meet. & Bri. & Hall & Dark & F1 & F2 & Lab & Lou. & Per. & R1 & R2 & R3 & Dark & F1 & F2 & Lab & Lou. & Per. & R1 & R2 & R3 \\
\midrule
\multirow{10}{*}{\rotatebox[origin=c]{90}{PSNR$\uparrow$}} & Fisheye-GS$^\ast$~\cite{liao2024fisheyegs} & $145^\circ$ & 18.72 & 16.22 & 17.19 & 18.18 & 16.28 & 17.47 & 20.42 & \cellgain{+5.66} & \cellgain{+2.93} & \cellgain{+2.59} & \cellgain{+6.56} & \cellgain{+9.65} & \cellgain{+3.43} & \cellgain{+4.63} & 13.44 & 13.87 & 14.53 & 14.38 & 12.80 & 13.25 & 14.97 & 14.59 & 14.28 & \cellgain{+15.05} & \cellgain{+13.98} & \cellgain{+14.05} & \cellgain{+14.29} & \cellgain{+8.24} & \cellgain{+15.82} & \cellgain{+15.51} & \cellgain{+15.64} & \cellgain{+13.88} \\
 & 3DGUT~\cite{wu20253dgut} & $160^\circ$ & 20.20 & 17.62 & 19.49 & 19.03 & 19.62 & 24.25 & 18.84 & \cellgain{+3.99} & \cellgain{+4.01} & \cellgain{+2.48} & \cellgain{+6.36} & \cellgain{+5.27} & $-$1.51 & \cellgain{+7.04} & 22.17 & 21.06 & 20.44 & 20.32 & 20.14 & 22.55 & 22.30 & 23.21 & 22.06 & \cellgain{+1.40} & \cellgain{+2.69} & \cellgain{+1.15} & \cellgain{+1.71} & $-$1.46 & \cellgain{+2.63} & \cellgain{+2.13} & \cellgain{+2.68} & \cellgain{+1.63} \\
 & 3DGUT + PPISP~[\citenum{wu20253dgut}, \citenum{deutsch2026ppispphysicallyplausiblecompensationcontrol}] & $160^\circ$ & 20.34 & 18.78 & 19.70 & 19.68 & 19.48 & \cellthird{24.38} & 18.93 & \cellgain{+4.36} & \cellgain{+3.26} & \cellgain{+2.14} & \cellgain{+6.51} & \cellgain{+6.12} & $-$1.17 & \cellgain{+7.09} & 22.03 & 20.99 & 20.58 & 20.31 & 20.00 & 22.06 & 22.09 & 23.07 & 21.93 & \cellgain{+1.74} & \cellgain{+2.53} & \cellgain{+1.44} & \cellgain{+1.82} & $-$1.42 & \cellgain{+3.13} & \cellgain{+2.84} & \cellgain{+3.00} & \cellgain{+1.82} \\
 & 3DGEER$^\ast$~\cite{huang20253dgeer} & $178^\circ$ & 14.19 & 10.39 & 11.52 & 12.29 & 13.50 & 10.06 & 17.55 & $-$0.39 & \cellgain{+4.80} & \cellgain{+3.55} & \cellgain{+1.53} & \cellgain{+0.46} & \cellgain{+4.85} & $-$0.60 & 13.45 & 12.96 & 10.02 & 10.36 & 13.07 & 14.75 & 13.57 & 13.47 & 12.19 & \cellgain{+4.66} & \cellgain{+3.70} & \cellgain{+5.22} & \cellgain{+5.34} & \cellgain{+1.48} & \cellgain{+4.35} & \cellgain{+4.86} & \cellgain{+4.87} & \cellgain{+3.72} \\
 & Self-Calibrating GS$^\ast$~\cite{deng2025self} & $180^\circ$ & 18.59 & 18.68 & 17.40 & 17.67 & 18.46 & 13.90 & 19.39 & \cellgain{+4.91} & \cellgain{+4.39} & \cellgain{+4.34} & \cellgain{+6.45} & \cellgain{+6.39} & \cellgain{+5.98} & \cellgain{+7.04} & 17.01 & 17.40 & 16.61 & 16.45 & 16.01 & 18.18 & 16.79 & 19.79 & 17.42 & \cellgain{+11.05} & \cellgain{+10.37} & \cellgain{+10.98} & \cellgain{+11.94} & \cellgain{+5.31} & \cellgain{+11.01} & \cellgain{+12.42} & \cellgain{+9.78} & \cellgain{+11.12} \\
 & DirectFisheye-GS~\cite{yang2025directfisheye} & $180^\circ$ & \cellthird{23.13} & \cellthird{20.56} & \cellthird{20.56} & \cellthird{20.93} & \cellthird{22.94} & 23.07 & \cellthird{23.35} & \cellgain{+1.09} & \cellgain{+2.24} & \cellgain{+1.76} & \cellgain{+4.70} & \cellgain{+2.76} & \cellgain{+0.62} & \cellgain{+2.31} & 21.12 & 20.55 & 19.51 & 19.91 & 18.92 & 21.83 & 21.06 & 22.24 & 20.87 & \cellgain{+5.44} & \cellgain{+5.34} & \cellgain{+6.44} & \cellgain{+6.55} & \cellgain{+0.70} & \cellgain{+5.63} & \cellgain{+5.73} & \cellgain{+6.20} & \cellgain{+5.81} \\
 & 3DGRT~\cite{moenne20243d} & Full & 16.97 & 14.88 & 15.17 & 15.83 & 17.34 & 14.44 & 16.22 & \cellgain{+7.21} & \cellgain{+8.19} & \cellgain{+7.70} & \cellgain{+9.17} & \cellgain{+8.23} & \cellgain{+10.29} & \cellgain{+8.88} & 22.87 & 23.63 & 21.19 & 23.06 & \cellsecond{21.32} & 25.43 & 22.87 & 25.37 & \cellthird{24.10} & \cellgain{+2.66} & \cellgain{+2.32} & \cellgain{+3.03} & \cellgain{+2.84} & $-$0.32 & \cellgain{+2.03} & \cellgain{+2.35} & \cellgain{+2.17} & \cellgain{+2.29} \\
 & 3DGRT + PPISP~[\citenum{moenne20243d}, \citenum{deutsch2026ppispphysicallyplausiblecompensationcontrol}] & Full & 19.19 & 15.11 & 17.89 & 18.69 & 19.16 & 21.97 & 16.92 & \cellgain{+4.99} & \cellgain{+7.96} & \cellgain{+4.98} & \cellgain{+6.31} & \cellgain{+6.41} & \cellgain{+2.76} & \cellgain{+8.18} & \cellthird{23.67} & \cellthird{23.77} & \cellthird{21.38} & \cellthird{23.08} & \cellbest{21.37} & \cellthird{25.61} & \cellthird{23.08} & \cellthird{25.43} & 24.07 & \cellgain{+1.86} & \cellgain{+2.18} & \cellgain{+2.84} & \cellgain{+2.82} & $-$0.37 & \cellgain{+1.85} & \cellgain{+2.14} & \cellgain{+2.11} & \cellgain{+2.32} \\
\cmidrule(l){2-35}
 & \textbf{GOFish (Ours)} & Full & \cellbest{24.18} & \cellbest{23.07} & \cellbest{22.87} & \cellbest{25.00} & \cellbest{25.57} & \cellbest{24.73} & \cellbest{25.10} &  &  &  &  &  &  &  & \cellbest{25.53} & \cellbest{25.95} & \cellsecond{24.22} & \cellbest{25.90} & \cellthird{21.00} & \cellsecond{27.46} & \cellsecond{25.22} & \cellsecond{27.54} & \cellbest{26.39} &  &  &  &  &  &  &  &  &  \\
 & \textbf{+ Geometry Objective} & Full & \cellsecond{24.08} & \cellsecond{22.81} & \cellsecond{22.56} & \cellsecond{24.96} & \cellsecond{25.46} & \cellsecond{24.43} & \cellsecond{24.94} &  &  &  &  &  &  &  & \cellsecond{25.27} & \cellsecond{25.93} & \cellbest{24.26} & \cellsecond{25.89} & 19.60 & \cellbest{27.62} & \cellbest{25.23} & \cellbest{27.68} & \cellsecond{26.27} &  &  &  &  &  &  &  &  &  \\
\midrule
\multirow{10}{*}{\rotatebox[origin=c]{90}{SSIM$\uparrow$}} & Fisheye-GS$^\ast$~\cite{liao2024fisheyegs} & $145^\circ$ & 0.738 & 0.610 & 0.641 & 0.712 & 0.666 & 0.701 & 0.729 & \cellgain{+0.105} & \cellgain{+0.096} & \cellgain{+0.103} & \cellgain{+0.116} & \cellgain{+0.211} & \cellgain{+0.093} & \cellgain{+0.106} & 0.654 & 0.631 & 0.719 & 0.599 & 0.432 & 0.601 & 0.659 & 0.717 & 0.655 & \cellgain{+0.268} & \cellgain{+0.265} & \cellgain{+0.202} & \cellgain{+0.282} & \cellgain{+0.355} & \cellgain{+0.280} & \cellgain{+0.278} & \cellgain{+0.207} & \cellgain{+0.228} \\
 & 3DGUT~\cite{wu20253dgut} & $160^\circ$ & 0.791 & 0.615 & 0.673 & 0.755 & 0.796 & \cellsecond{0.833} & 0.719 & \cellgain{+0.044} & \cellgain{+0.064} & \cellgain{+0.075} & \cellgain{+0.072} & \cellgain{+0.052} & $-$0.018 & \cellgain{+0.105} & 0.865 & 0.824 & 0.867 & 0.810 & \cellthird{0.768} & 0.825 & 0.885 & 0.889 & 0.845 & \cellgain{+0.017} & \cellgain{+0.027} & \cellgain{+0.018} & \cellgain{+0.015} & $-$0.038 & \cellgain{+0.024} & \cellgain{+0.012} & \cellgain{+0.019} & \cellgain{+0.020} \\
 & 3DGUT + PPISP~[\citenum{wu20253dgut}, \citenum{deutsch2026ppispphysicallyplausiblecompensationcontrol}] & $160^\circ$ & 0.786 & 0.628 & 0.677 & \cellthird{0.762} & 0.791 & \cellbest{0.834} & 0.721 & \cellgain{+0.055} & \cellgain{+0.063} & \cellgain{+0.072} & \cellgain{+0.071} & \cellgain{+0.057} & $-$0.017 & \cellgain{+0.104} & 0.866 & 0.825 & 0.867 & 0.807 & 0.767 & 0.821 & 0.882 & 0.888 & 0.845 & \cellgain{+0.017} & \cellgain{+0.025} & \cellgain{+0.018} & \cellgain{+0.019} & $-$0.038 & \cellgain{+0.028} & \cellgain{+0.014} & \cellgain{+0.019} & \cellgain{+0.020} \\
 & 3DGEER$^\ast$~\cite{huang20253dgeer} & $178^\circ$ & 0.740 & 0.565 & 0.582 & 0.688 & 0.734 & 0.647 & \cellthird{0.795} & $-$0.005 & \cellgain{+0.066} & \cellgain{+0.076} & \cellgain{+0.023} & +0.000 & \cellgain{+0.021} & $-$0.001 & 0.738 & 0.711 & 0.731 & 0.687 & 0.621 & 0.709 & 0.762 & 0.753 & 0.725 & \cellgain{+0.038} & \cellgain{+0.040} & \cellgain{+0.043} & \cellgain{+0.056} & \cellgain{+0.036} & \cellgain{+0.043} & \cellgain{+0.029} & \cellgain{+0.040} & \cellgain{+0.032} \\
 & Self-Calibrating GS$^\ast$~\cite{deng2025self} & $180^\circ$ & 0.737 & 0.594 & 0.631 & 0.715 & 0.740 & 0.700 & 0.699 & \cellgain{+0.106} & \cellgain{+0.147} & \cellgain{+0.141} & \cellgain{+0.124} & \cellgain{+0.121} & \cellgain{+0.097} & \cellgain{+0.170} & 0.705 & 0.670 & 0.726 & 0.604 & 0.484 & 0.679 & 0.684 & 0.771 & 0.696 & \cellgain{+0.206} & \cellgain{+0.216} & \cellgain{+0.187} & \cellgain{+0.268} & \cellgain{+0.293} & \cellgain{+0.200} & \cellgain{+0.246} & \cellgain{+0.159} & \cellgain{+0.195} \\
 & DirectFisheye-GS~\cite{yang2025directfisheye} & $180^\circ$ & \cellthird{0.813} & \cellthird{0.630} & \cellthird{0.680} & 0.759 & \cellthird{0.811} & 0.790 & 0.786 & \cellgain{+0.038} & \cellgain{+0.106} & \cellgain{+0.100} & \cellgain{+0.087} & \cellgain{+0.053} & \cellgain{+0.034} & \cellgain{+0.074} & 0.832 & 0.798 & 0.823 & 0.773 & 0.719 & 0.794 & 0.852 & 0.855 & 0.802 & \cellgain{+0.055} & \cellgain{+0.059} & \cellgain{+0.068} & \cellgain{+0.062} & \cellgain{+0.017} & \cellgain{+0.061} & \cellgain{+0.046} & \cellgain{+0.051} & \cellgain{+0.062} \\
 & 3DGRT~\cite{moenne20243d} & Full & 0.732 & 0.561 & 0.621 & 0.721 & 0.759 & 0.757 & 0.647 & \cellgain{+0.123} & \cellgain{+0.173} & \cellgain{+0.160} & \cellgain{+0.129} & \cellgain{+0.110} & \cellgain{+0.074} & \cellgain{+0.219} & 0.885 & 0.862 & 0.890 & 0.848 & \cellsecond{0.789} & 0.860 & \cellthird{0.911} & 0.914 & 0.875 & \cellgain{+0.013} & \cellgain{+0.011} & \cellgain{+0.013} & \cellgain{+0.008} & $-$0.030 & \cellgain{+0.009} & \cellgain{+0.007} & \cellgain{+0.008} & \cellgain{+0.007} \\
 & 3DGRT + PPISP~[\citenum{moenne20243d}, \citenum{deutsch2026ppispphysicallyplausiblecompensationcontrol}] & Full & 0.754 & 0.580 & 0.669 & 0.735 & 0.776 & 0.797 & 0.671 & \cellgain{+0.101} & \cellgain{+0.154} & \cellgain{+0.112} & \cellgain{+0.115} & \cellgain{+0.093} & \cellgain{+0.034} & \cellgain{+0.195} & \cellthird{0.885} & \cellthird{0.864} & \cellthird{0.896} & \cellthird{0.849} & \cellbest{0.791} & \cellthird{0.861} & 0.911 & \cellthird{0.915} & \cellthird{0.876} & \cellgain{+0.013} & \cellgain{+0.009} & \cellgain{+0.007} & \cellgain{+0.007} & $-$0.032 & \cellgain{+0.008} & \cellgain{+0.007} & \cellgain{+0.007} & \cellgain{+0.006} \\
\cmidrule(l){2-35}
 & \textbf{GOFish (Ours)} & Full & \cellbest{0.855} & \cellbest{0.734} & \cellbest{0.781} & \cellbest{0.850} & \cellbest{0.869} & \cellthird{0.831} & \cellbest{0.866} &  &  &  &  &  &  &  & \cellbest{0.898} & \cellbest{0.873} & \cellbest{0.903} & \cellbest{0.856} & 0.759 & \cellbest{0.869} & \cellbest{0.918} & \cellbest{0.922} & \cellbest{0.882} &  &  &  &  &  &  &  &  &  \\
 & \textbf{+ Geometry Objective} & Full & \cellsecond{0.853} & \cellsecond{0.718} & \cellsecond{0.767} & \cellsecond{0.849} & \cellsecond{0.866} & 0.827 & \cellsecond{0.862} &  &  &  &  &  &  &  & \cellsecond{0.895} & \cellsecond{0.871} & \cellsecond{0.901} & \cellsecond{0.856} & 0.732 & \cellsecond{0.867} & \cellsecond{0.916} & \cellsecond{0.920} & \cellsecond{0.880} &  &  &  &  &  &  &  &  &  \\
\midrule
\multirow{10}{*}{\rotatebox[origin=c]{90}{LPIPS$\downarrow$}} & Fisheye-GS$^\ast$~\cite{liao2024fisheyegs} & $145^\circ$ & 0.361 & 0.409 & 0.402 & 0.415 & 0.469 & 0.406 & 0.300 & \cellgain{+0.162} & \cellgain{+0.101} & \cellgain{+0.102} & \cellgain{+0.200} & \cellgain{+0.326} & \cellgain{+0.130} & \cellgain{+0.151} & 0.472 & 0.520 & 0.479 & 0.504 & 0.557 & 0.484 & 0.466 & 0.486 & 0.490 & \cellgain{+0.377} & \cellgain{+0.403} & \cellgain{+0.396} & \cellgain{+0.430} & \cellgain{+0.485} & \cellgain{+0.340} & \cellgain{+0.415} & \cellgain{+0.369} & \cellgain{+0.354} \\
 & 3DGUT~\cite{wu20253dgut} & $160^\circ$ & 0.279 & 0.391 & 0.360 & 0.332 & 0.299 & 0.259 & 0.290 & \cellgain{+0.029} & \cellgain{+0.046} & \cellgain{+0.053} & \cellgain{+0.066} & \cellgain{+0.053} & $-$0.013 & \cellgain{+0.084} & 0.252 & 0.311 & 0.249 & 0.246 & 0.212 & 0.298 & 0.225 & 0.275 & 0.285 & \cellgain{+0.036} & \cellgain{+0.046} & \cellgain{+0.039} & \cellgain{+0.021} & $-$0.036 & \cellgain{+0.044} & \cellgain{+0.026} & \cellgain{+0.051} & \cellgain{+0.040} \\
 & 3DGUT + PPISP~[\citenum{wu20253dgut}, \citenum{deutsch2026ppispphysicallyplausiblecompensationcontrol}] & $160^\circ$ & 0.284 & 0.377 & 0.356 & 0.330 & 0.297 & \cellthird{0.256} & 0.285 & \cellgain{+0.043} & \cellgain{+0.050} & \cellgain{+0.052} & \cellgain{+0.065} & \cellgain{+0.052} & $-$0.014 & \cellgain{+0.081} & 0.255 & 0.310 & 0.246 & 0.251 & 0.212 & 0.301 & 0.227 & 0.274 & 0.285 & \cellgain{+0.037} & \cellgain{+0.041} & \cellgain{+0.034} & \cellgain{+0.025} & $-$0.035 & \cellgain{+0.048} & \cellgain{+0.030} & \cellgain{+0.049} & \cellgain{+0.040} \\
 & 3DGEER$^\ast$~\cite{huang20253dgeer} & $178^\circ$ & 0.270 & 0.418 & 0.415 & 0.324 & 0.296 & 0.373 & \cellthird{0.203} & \cellgain{+0.001} & \cellgain{+0.052} & \cellgain{+0.055} & \cellgain{+0.035} & \cellgain{+0.014} & \cellgain{+0.031} & \cellgain{+0.003} & 0.332 & 0.365 & 0.343 & 0.330 & 0.314 & 0.351 & 0.292 & 0.346 & 0.349 & \cellgain{+0.035} & \cellgain{+0.040} & \cellgain{+0.047} & \cellgain{+0.051} & \cellgain{+0.020} & \cellgain{+0.032} & \cellgain{+0.023} & \cellgain{+0.039} & \cellgain{+0.035} \\
 & Self-Calibrating GS$^\ast$~\cite{deng2025self} & $180^\circ$ & 0.311 & 0.391 & 0.395 & 0.364 & 0.351 & 0.369 & 0.285 & \cellgain{+0.123} & \cellgain{+0.121} & \cellgain{+0.117} & \cellgain{+0.161} & \cellgain{+0.159} & \cellgain{+0.093} & \cellgain{+0.175} & 0.409 & 0.464 & 0.423 & 0.451 & 0.457 & 0.430 & 0.418 & 0.418 & 0.445 & \cellgain{+0.272} & \cellgain{+0.288} & \cellgain{+0.287} & \cellgain{+0.339} & \cellgain{+0.330} & \cellgain{+0.248} & \cellgain{+0.318} & \cellgain{+0.257} & \cellgain{+0.281} \\
 & DirectFisheye-GS~\cite{yang2025directfisheye} & $180^\circ$ & \cellthird{0.256} & \cellthird{0.368} & \cellthird{0.351} & 0.336 & 0.295 & 0.298 & 0.225 & \cellgain{+0.058} & \cellgain{+0.089} & \cellgain{+0.086} & \cellgain{+0.116} & \cellgain{+0.091} & \cellgain{+0.041} & \cellgain{+0.077} & 0.284 & 0.333 & 0.293 & 0.275 & 0.256 & 0.316 & 0.253 & 0.302 & 0.316 & \cellgain{+0.073} & \cellgain{+0.079} & \cellgain{+0.095} & \cellgain{+0.061} & \cellgain{+0.018} & \cellgain{+0.072} & \cellgain{+0.061} & \cellgain{+0.080} & \cellgain{+0.081} \\
 & 3DGRT~\cite{moenne20243d} & Full & 0.305 & 0.417 & 0.402 & 0.327 & 0.303 & 0.332 & 0.314 & \cellgain{+0.114} & \cellgain{+0.141} & \cellgain{+0.144} & \cellgain{+0.127} & \cellgain{+0.119} & \cellgain{+0.107} & \cellgain{+0.177} & \cellthird{0.226} & 0.267 & 0.224 & 0.212 & \cellsecond{0.195} & 0.253 & 0.190 & 0.233 & 0.250 & \cellgain{+0.025} & \cellgain{+0.026} & \cellgain{+0.037} & \cellgain{+0.024} & $-$0.005 & \cellgain{+0.017} & \cellgain{+0.016} & \cellgain{+0.021} & \cellgain{+0.024} \\
 & 3DGRT + PPISP~[\citenum{moenne20243d}, \citenum{deutsch2026ppispphysicallyplausiblecompensationcontrol}] & Full & 0.281 & 0.389 & 0.353 & \cellthird{0.315} & \cellthird{0.287} & 0.272 & 0.295 & \cellgain{+0.090} & \cellgain{+0.113} & \cellgain{+0.095} & \cellgain{+0.115} & \cellgain{+0.103} & \cellgain{+0.047} & \cellgain{+0.158} & 0.228 & \cellthird{0.263} & \cellthird{0.206} & \cellthird{0.202} & \cellbest{0.189} & \cellthird{0.252} & \cellthird{0.188} & \cellthird{0.232} & \cellthird{0.241} & \cellgain{+0.027} & \cellgain{+0.022} & \cellgain{+0.019} & \cellgain{+0.014} & $-$0.011 & \cellgain{+0.016} & \cellgain{+0.014} & \cellgain{+0.020} & \cellgain{+0.015} \\
\cmidrule(l){2-35}
 & \textbf{GOFish (Ours)} & Full & \cellbest{0.191} & \cellbest{0.276} & \cellbest{0.258} & \cellbest{0.200} & \cellbest{0.184} & \cellbest{0.225} & \cellbest{0.137} &  &  &  &  &  &  &  & \cellsecond{0.201} & \cellbest{0.241} & \cellbest{0.187} & \cellbest{0.188} & \cellthird{0.200} & \cellbest{0.236} & \cellbest{0.174} & \cellbest{0.212} & \cellsecond{0.226} &  &  &  &  &  &  &  &  &  \\
 & \textbf{+ Geometry Objective} & Full & \cellsecond{0.194} & \cellsecond{0.288} & \cellsecond{0.274} & \cellsecond{0.205} & \cellsecond{0.188} & \cellsecond{0.231} & \cellsecond{0.141} &  &  &  &  &  &  &  & \cellbest{0.201} & \cellsecond{0.245} & \cellsecond{0.192} & \cellsecond{0.189} & 0.235 & \cellsecond{0.241} & \cellsecond{0.174} & \cellsecond{0.216} & \cellbest{0.225} &  &  &  &  &  &  &  &  &  \\
\bottomrule
\end{tabular}}
\end{table*}

\begin{table*}[tp]
\centering
\caption{\textbf{Per-scene rendering results on FIORD~\cite{gunes2025fiord} and FullCircle~\cite{foroutan2026fullcircle} for surface baselines.}  \rankkey{}.  Protocols follow Appendix~\ref{app:evaluation}.}
\label{tab:app_ps_surf}\label{tab:app_ps_surf_fc}
\setlength{\tabcolsep}{2pt}
\resizebox{\textwidth}{!}{%
\begin{tabular}{@{}cl cccccccccccccccccccccccc@{}}
\toprule
 & & & \multicolumn{14}{c}{\textbf{FIORD}~\cite{gunes2025fiord}} & \multicolumn{9}{c}{\textbf{FullCircle}~\cite{foroutan2026fullcircle}} \\
\cmidrule(lr){4-17}\cmidrule(l){18-26}
 & & & \multicolumn{7}{c}{Full ($200^\circ$)} & \multicolumn{7}{c}{Common ($90^\circ$)} & \multicolumn{9}{c}{Full ($205^\circ$)} \\
\cmidrule(lr){4-10}\cmidrule(lr){11-17}\cmidrule(l){18-26}
 & Method & FoV & Kit. & Cor. & Road & Bld. & Meet. & Bri. & Hall & Kit. & Cor. & Road & Bld. & Meet. & Bri. & Hall & Dark & F1 & F2 & Lab & Lou. & Per. & R1 & R2 & R3 \\
\midrule
\multirow{7}{*}{\rotatebox[origin=c]{90}{PSNR$\uparrow$}} & GOF~\cite{Yu2024GOF} & Rectified & 18.02 & 17.42 & 14.92 & 17.64 & 19.35 & \cellthird{12.86} & 19.03 & \cellbest{30.96} & 24.15 & 22.44 & 28.19 & \cellbest{31.88} & \cellbest{30.60} & 29.20 & 21.48 & 22.62 & 19.94 & 22.81 & 18.67 & 22.97 & 22.36 & 22.95 & 22.77 \\
 & PGSR~\cite{chen2024pgsr} & Rectified & 17.82 & 17.12 & 14.56 & 18.36 & 18.81 & 12.59 & 18.43 & \cellthird{30.37} & 25.14 & 24.34 & 29.31 & \cellsecond{31.00} & \cellsecond{29.34} & \cellthird{29.54} & 21.08 & \cellthird{23.35} & 20.04 & \cellthird{23.30} & 18.31 & 23.08 & \cellthird{22.46} & 23.10 & 23.59 \\
 & SOF~\cite{radl2025sof} & Rectified & 18.08 & 16.92 & 13.26 & 17.00 & 19.30 & 12.73 & 19.05 & 30.07 & 21.53 & 17.87 & 26.00 & 30.57 & 28.65 & 28.32 & 21.13 & 22.41 & 19.55 & 22.72 & 19.62 & 22.03 & 22.02 & 22.67 & \cellthird{23.63} \\
 & MiLo~\cite{guedon2025milo} & Rectified & 17.57 & 17.87 & \cellthird{14.94} & 18.21 & 18.86 & 12.26 & 19.67 & 30.02 & \cellthird{25.18} & \cellthird{24.35} & \cellthird{29.46} & 30.25 & \cellthird{29.19} & 29.01 & 21.30 & 23.10 & 19.79 & 22.89 & \cellsecond{20.82} & 23.17 & 22.26 & \cellthird{23.35} & 22.70 \\
 & CoMe~\cite{radl2026come} & Rectified & \cellthird{18.18} & \cellthird{19.34} & 14.91 & \cellthird{18.64} & \cellthird{19.59} & 12.77 & \cellthird{20.06} & 30.05 & 24.36 & 20.70 & 29.01 & 29.86 & 28.90 & 29.11 & \cellthird{22.00} & 22.95 & \cellthird{20.18} & 22.90 & \cellthird{19.94} & \cellthird{23.50} & 22.31 & 23.34 & 23.50 \\
\cmidrule(l){2-26}
 & \textbf{GOFish (Ours)} & Full & \cellbest{24.18} & \cellbest{23.07} & \cellbest{22.87} & \cellbest{25.00} & \cellbest{25.57} & \cellbest{24.73} & \cellbest{25.10} & \cellsecond{30.61} & \cellbest{26.61} & \cellbest{25.24} & \cellbest{30.94} & \cellthird{30.72} & 27.86 & \cellbest{31.06} & \cellbest{25.53} & \cellbest{25.95} & \cellsecond{24.22} & \cellbest{25.90} & \cellbest{21.00} & \cellsecond{27.46} & \cellsecond{25.22} & \cellsecond{27.54} & \cellbest{26.39} \\
 & \textbf{+ Geometry Objective} & Full & \cellsecond{24.08} & \cellsecond{22.81} & \cellsecond{22.56} & \cellsecond{24.96} & \cellsecond{25.46} & \cellsecond{24.43} & \cellsecond{24.94} & 30.37 & \cellsecond{25.94} & \cellsecond{24.67} & \cellsecond{30.63} & 30.29 & 26.99 & \cellsecond{30.66} & \cellsecond{25.27} & \cellsecond{25.93} & \cellbest{24.26} & \cellsecond{25.89} & 19.60 & \cellbest{27.62} & \cellbest{25.23} & \cellbest{27.68} & \cellsecond{26.27} \\
\midrule
\multirow{7}{*}{\rotatebox[origin=c]{90}{SSIM$\uparrow$}} & GOF~\cite{Yu2024GOF} & Rectified & 0.731 & 0.571 & 0.588 & 0.726 & 0.780 & 0.671 & 0.664 & \cellsecond{0.936} & 0.773 & 0.805 & 0.901 & \cellsecond{0.945} & \cellsecond{0.927} & 0.891 & 0.844 & 0.810 & 0.831 & 0.782 & 0.685 & 0.798 & 0.875 & 0.862 & 0.822 \\
 & PGSR~\cite{chen2024pgsr} & Rectified & 0.715 & 0.582 & 0.584 & 0.737 & 0.760 & 0.662 & 0.651 & 0.933 & \cellthird{0.799} & \cellthird{0.836} & 0.917 & 0.942 & \cellthird{0.922} & 0.898 & 0.828 & 0.807 & 0.825 & 0.783 & 0.649 & 0.792 & 0.862 & 0.855 & 0.822 \\
 & SOF~\cite{radl2025sof} & Rectified & 0.741 & 0.543 & 0.496 & 0.705 & 0.789 & 0.674 & 0.670 & 0.933 & 0.720 & 0.702 & 0.880 & 0.942 & 0.914 & 0.877 & 0.837 & 0.807 & 0.822 & 0.788 & 0.696 & 0.786 & \cellthird{0.877} & 0.859 & \cellthird{0.835} \\
 & MiLo~\cite{guedon2025milo} & Rectified & 0.723 & 0.589 & \cellthird{0.598} & 0.745 & 0.774 & 0.655 & 0.677 & 0.929 & 0.783 & 0.827 & 0.914 & 0.932 & 0.909 & 0.883 & 0.846 & \cellthird{0.817} & \cellthird{0.834} & \cellthird{0.789} & \cellthird{0.707} & 0.804 & 0.873 & 0.864 & 0.826 \\
 & CoMe~\cite{radl2026come} & Rectified & \cellthird{0.744} & \cellthird{0.606} & 0.572 & \cellthird{0.758} & \cellthird{0.792} & \cellthird{0.695} & \cellthird{0.705} & 0.934 & 0.774 & 0.763 & \cellthird{0.919} & \cellbest{0.946} & \cellbest{0.929} & \cellthird{0.901} & \cellthird{0.847} & 0.815 & 0.832 & 0.786 & 0.694 & \cellthird{0.807} & 0.877 & \cellthird{0.866} & 0.830 \\
\cmidrule(l){2-26}
 & \textbf{GOFish (Ours)} & Full & \cellbest{0.855} & \cellbest{0.734} & \cellbest{0.781} & \cellbest{0.850} & \cellbest{0.869} & \cellbest{0.831} & \cellbest{0.866} & \cellbest{0.937} & \cellbest{0.839} & \cellbest{0.872} & \cellbest{0.934} & \cellthird{0.944} & 0.922 & \cellbest{0.931} & \cellbest{0.898} & \cellbest{0.873} & \cellbest{0.903} & \cellbest{0.856} & \cellbest{0.759} & \cellbest{0.869} & \cellbest{0.918} & \cellbest{0.922} & \cellbest{0.882} \\
 & \textbf{+ Geometry Objective} & Full & \cellsecond{0.853} & \cellsecond{0.718} & \cellsecond{0.767} & \cellsecond{0.849} & \cellsecond{0.866} & \cellsecond{0.827} & \cellsecond{0.862} & \cellthird{0.935} & \cellsecond{0.826} & \cellsecond{0.864} & \cellsecond{0.932} & 0.942 & 0.917 & \cellsecond{0.928} & \cellsecond{0.895} & \cellsecond{0.871} & \cellsecond{0.901} & \cellsecond{0.856} & \cellsecond{0.732} & \cellsecond{0.867} & \cellsecond{0.916} & \cellsecond{0.920} & \cellsecond{0.880} \\
\midrule
\multirow{7}{*}{\rotatebox[origin=c]{90}{LPIPS$\downarrow$}} & GOF~\cite{Yu2024GOF} & Rectified & 0.306 & 0.405 & 0.448 & 0.358 & 0.286 & 0.383 & 0.286 & 0.211 & 0.379 & 0.385 & 0.293 & 0.186 & 0.228 & 0.262 & 0.262 & 0.321 & 0.277 & 0.259 & 0.246 & 0.308 & 0.227 & 0.291 & 0.297 \\
 & PGSR~\cite{chen2024pgsr} & Rectified & 0.326 & 0.397 & \cellthird{0.430} & 0.357 & 0.324 & 0.402 & 0.307 & 0.220 & \cellthird{0.355} & \cellthird{0.343} & 0.273 & 0.202 & 0.252 & 0.268 & 0.277 & 0.324 & 0.283 & 0.257 & 0.281 & 0.314 & 0.240 & 0.308 & 0.299 \\
 & SOF~\cite{radl2025sof} & Rectified & 0.293 & 0.426 & 0.501 & 0.369 & 0.273 & 0.394 & 0.284 & 0.213 & 0.434 & 0.495 & 0.319 & 0.187 & 0.250 & 0.280 & 0.265 & 0.317 & 0.282 & \cellthird{0.248} & \cellthird{0.237} & 0.313 & \cellthird{0.220} & 0.292 & \cellthird{0.282} \\
 & MiLo~\cite{guedon2025milo} & Rectified & 0.321 & 0.406 & 0.439 & 0.373 & 0.307 & 0.424 & 0.308 & 0.231 & 0.376 & 0.364 & 0.302 & 0.220 & 0.289 & 0.296 & 0.267 & 0.322 & 0.280 & 0.261 & 0.242 & 0.312 & 0.235 & 0.303 & 0.302 \\
 & CoMe~\cite{radl2026come} & Rectified & \cellthird{0.292} & \cellthird{0.383} & 0.457 & \cellthird{0.319} & \cellthird{0.260} & \cellthird{0.363} & \cellthird{0.252} & \cellthird{0.209} & 0.378 & 0.435 & \cellthird{0.263} & \cellbest{0.163} & \cellbest{0.215} & \cellthird{0.243} & \cellthird{0.254} & \cellthird{0.311} & \cellthird{0.275} & 0.253 & 0.241 & \cellthird{0.295} & 0.220 & \cellthird{0.282} & 0.288 \\
\cmidrule(l){2-26}
 & \textbf{GOFish (Ours)} & Full & \cellbest{0.191} & \cellbest{0.276} & \cellbest{0.258} & \cellbest{0.200} & \cellbest{0.184} & \cellbest{0.225} & \cellbest{0.137} & \cellbest{0.193} & \cellbest{0.289} & \cellbest{0.281} & \cellbest{0.218} & \cellsecond{0.167} & \cellsecond{0.220} & \cellbest{0.180} & \cellsecond{0.201} & \cellbest{0.241} & \cellbest{0.187} & \cellbest{0.188} & \cellbest{0.200} & \cellbest{0.236} & \cellbest{0.174} & \cellbest{0.212} & \cellsecond{0.226} \\
 & \textbf{+ Geometry Objective} & Full & \cellsecond{0.194} & \cellsecond{0.288} & \cellsecond{0.274} & \cellsecond{0.205} & \cellsecond{0.188} & \cellsecond{0.231} & \cellsecond{0.141} & \cellsecond{0.196} & \cellsecond{0.304} & \cellsecond{0.295} & \cellsecond{0.224} & \cellthird{0.174} & \cellthird{0.227} & \cellsecond{0.184} & \cellbest{0.201} & \cellsecond{0.245} & \cellsecond{0.192} & \cellsecond{0.189} & \cellsecond{0.235} & \cellsecond{0.241} & \cellsecond{0.174} & \cellsecond{0.216} & \cellbest{0.225} \\
\bottomrule
\end{tabular}}
\end{table*}

\begin{table*}[tp]
\centering\renewcommand{\arraystretch}{0.85}
\caption{\textbf{Per-scene rendering results on ScanNet++~\cite{yeshwanth2023scannet++} and Tanks~\&~Temples~\cite{Knapitsch2017tanks}.}  \rankkey{}.  Protocols follow Appendix~\ref{app:evaluation}.}
\label{tab:app_ps_render_sn}\label{tab:app_ps_surf_sn}\label{tab:app_ps_render_tnt}
\setlength{\tabcolsep}{2pt}
\resizebox{\textwidth}{!}{%
\begin{tabular}{@{}cl ccccccccccccccccccc@{}}
\toprule
 & & & \multicolumn{12}{c}{\textbf{ScanNet++}~\cite{yeshwanth2023scannet++}} & \multicolumn{6}{c}{\textbf{Tanks \& Temples}~\cite{Knapitsch2017tanks}} \\
\cmidrule(lr){4-15}\cmidrule(l){16-21}
 & & & \multicolumn{6}{c}{Full ($127^\circ$)} & \multicolumn{6}{c}{Common} & \multicolumn{6}{c}{Pinhole} \\
\cmidrule(lr){4-9}\cmidrule(lr){10-15}\cmidrule(l){16-21}
 & Method & FoV & \texttt{0a5c} & \texttt{8d56} & \texttt{bb87} & \texttt{d415} & \texttt{e8ea} & \texttt{fe17} & \texttt{0a5c} & \texttt{8d56} & \texttt{bb87} & \texttt{d415} & \texttt{e8ea} & \texttt{fe17} & Barn & Cat. & Court. & Ign. & Meet. & Truck \\
\midrule
\multirow{15}{*}{\rotatebox[origin=c]{90}{PSNR$\uparrow$}} & Fisheye-GS$^\ast$~\cite{liao2024fisheyegs} & $145^\circ$ & 24.50 & 24.17 & 29.21 & 26.13 & 29.28 & 24.40 &  &  &  &  &  &  & 28.30 & 23.75 & 22.58 & 22.35 & 25.90 & 25.41 \\
 & 3DGUT~\cite{wu20253dgut} & $160^\circ$ & 29.54 & 26.09 & 30.97 & 28.21 & 33.10 & 25.66 &  &  &  &  &  &  & 29.10 & 23.65 & 22.84 & 22.82 & 26.30 & 25.96 \\
 & 3DGUT + PPISP~[\citenum{wu20253dgut}, \citenum{deutsch2026ppispphysicallyplausiblecompensationcontrol}] & $160^\circ$ & 29.84 & 25.94 & 30.99 & 28.11 & 33.16 & 25.54 &  &  &  &  &  &  & \cellthird{30.18} & 26.00 & \cellbest{24.02} & \cellsecond{25.01} & 27.06 & \cellbest{26.79} \\
 & 3DGEER$^\ast$~\cite{huang20253dgeer} & $178^\circ$ & 27.50 & 26.29 & 31.22 & 28.37 & 32.16 & 26.01 &  &  &  &  &  &  & 28.63 & 23.50 & 22.91 & 21.84 & 25.23 & 24.39 \\
 & Self-Calibrating GS$^\ast$~\cite{deng2025self} & $180^\circ$ & 28.50 & 25.71 & 29.60 & 26.36 & 31.21 & 24.62 &  &  &  &  &  &  & 26.70$^\ast$ & 22.45$^\ast$ & 18.37$^\ast$ & 21.21$^\ast$ & 23.33$^\ast$ & 23.93$^\ast$ \\
 & DirectFisheye-GS~\cite{yang2025directfisheye} & $180^\circ$ & 29.70 & 26.68 & 31.52 & 28.41 & 33.40 & 25.95 &  &  &  &  &  &  & 29.00 & 24.41 & 23.42 & 22.73 & 26.74 & 25.91 \\
 & 3DGRT~\cite{moenne20243d} & Full & 29.19 & 26.40 & 30.98 & 28.62 & 31.90 & 25.89 &  &  &  &  &  &  & 27.68 & 22.78 & 22.00 & 21.95 & 24.43 & 24.87 \\
 & 3DGRT + PPISP~[\citenum{moenne20243d}, \citenum{deutsch2026ppispphysicallyplausiblecompensationcontrol}] & Full & \cellthird{30.17} & \cellthird{26.73} & \cellthird{32.29} & \cellthird{28.98} & \cellthird{34.05} & \cellthird{26.03} &  &  &  &  &  &  & 29.45 & \cellsecond{26.08} & \cellthird{23.68} & \cellbest{25.11} & \cellthird{27.86} & 26.35 \\
\cmidrule(l){2-21}
 & GOF~\cite{Yu2024GOF} & Rectified & 28.12 & 25.71 & 30.01 & 26.92 & 29.47 & 25.04 & 30.48 & \cellthird{27.11} & \cellthird{32.42} & 29.33 & \cellthird{34.25} & \cellthird{26.45} & 26.22 & 21.77 & 20.28 & 21.01 & 24.71 & 23.13 \\
 & PGSR~\cite{chen2024pgsr} & Rectified & 28.18 & 25.32 & 29.56 & 27.19 & 29.34 & 24.77 & 30.42 & 26.62 & 31.83 & 29.48 & 33.65 & 25.96 & 27.89 & 22.30 & 20.87 & 21.72 & 24.07 & 24.91 \\
 & SOF~\cite{radl2025sof} & Rectified & 23.81 & 25.69 & 30.32 & 26.90 & 30.69 & 25.18 & 25.21 & 26.72 & 32.05 & 28.60 & 34.20 & 26.19 & 27.82 & 22.53 & 16.93 & 21.78 & 23.70 & 25.00 \\
 & MiLo~\cite{guedon2025milo} & Rectified & 28.41 & 25.75 & 29.66 & 27.47 & 29.34 & 25.04 & 30.53 & 27.03 & 31.87 & 29.42 & 33.46 & 26.17 & 26.56 & 21.44 & 21.76 & 20.64 & 24.57 & 22.80 \\
 & CoMe~\cite{radl2026come} & Rectified & 28.97 & 26.00 & 30.19 & 27.87 & 30.44 & 25.20 & \cellthird{30.61} & 27.05 & 32.08 & \cellthird{29.57} & 34.12 & 26.07 & 26.79 & 21.17 & 17.99 & 21.45 & 22.86 & 24.42 \\
\cmidrule(l){2-21}
 & \textbf{GOFish (Ours)} & Full & \cellbest{30.84} & \cellbest{26.93} & \cellbest{32.81} & \cellbest{29.46} & \cellbest{34.59} & \cellbest{26.38} & \cellbest{31.51} & \cellbest{27.46} & \cellbest{33.40} & \cellbest{30.06} & \cellsecond{34.94} & \cellbest{26.95} & \cellbest{30.25} & \cellbest{26.32} & \cellsecond{23.81} & 24.92 & \cellbest{28.65} & \cellsecond{26.65} \\
 & \textbf{+ Geometry Objective} & Full & \cellsecond{30.78} & \cellsecond{26.88} & \cellsecond{32.72} & \cellsecond{29.44} & \cellsecond{34.58} & \cellsecond{26.33} & \cellsecond{31.47} & \cellsecond{27.42} & \cellsecond{33.32} & \cellsecond{30.03} & \cellbest{34.96} & \cellsecond{26.88} & \cellsecond{30.20} & \cellthird{26.04} & 23.33 & \cellthird{24.94} & \cellsecond{28.00} & \cellthird{26.58} \\
\midrule
\multirow{15}{*}{\rotatebox[origin=c]{90}{SSIM$\uparrow$}} & Fisheye-GS$^\ast$~\cite{liao2024fisheyegs} & $145^\circ$ & 0.905 & 0.883 & 0.926 & 0.874 & 0.930 & 0.848 &  &  &  &  &  &  & 0.859 & 0.803 & 0.797 & 0.809 & 0.883 & 0.874 \\
 & 3DGUT~\cite{wu20253dgut} & $160^\circ$ & \cellthird{0.934} & 0.900 & 0.938 & \cellthird{0.891} & 0.951 & 0.867 &  &  &  &  &  &  & 0.888 & 0.824 & 0.814 & 0.811 & 0.889 & 0.887 \\
 & 3DGUT + PPISP~[\citenum{wu20253dgut}, \citenum{deutsch2026ppispphysicallyplausiblecompensationcontrol}] & $160^\circ$ & 0.933 & 0.900 & 0.938 & 0.891 & 0.951 & 0.865 &  &  &  &  &  &  & \cellthird{0.892} & \cellthird{0.852} & \cellbest{0.825} & \cellthird{0.838} & \cellthird{0.906} & \cellbest{0.896} \\
 & 3DGEER$^\ast$~\cite{huang20253dgeer} & $178^\circ$ & 0.929 & 0.901 & \cellthird{0.939} & 0.891 & \cellthird{0.952} & \cellthird{0.870} &  &  &  &  &  &  & 0.867 & 0.803 & 0.801 & 0.797 & 0.883 & 0.871 \\
 & Self-Calibrating GS$^\ast$~\cite{deng2025self} & $180^\circ$ & 0.909 & 0.885 & 0.913 & 0.838 & 0.931 & 0.831 &  &  &  &  &  &  & 0.804$^\ast$ & 0.731$^\ast$ & 0.659$^\ast$ & 0.725$^\ast$ & 0.818$^\ast$ & 0.814$^\ast$ \\
 & DirectFisheye-GS~\cite{yang2025directfisheye} & $180^\circ$ & 0.928 & \cellthird{0.902} & 0.936 & 0.881 & 0.948 & 0.864 &  &  &  &  &  &  & 0.868 & 0.809 & 0.809 & 0.814 & 0.893 & 0.882 \\
 & 3DGRT~\cite{moenne20243d} & Full & 0.920 & 0.887 & 0.930 & 0.889 & 0.944 & 0.856 &  &  &  &  &  &  & 0.876 & 0.811 & 0.803 & 0.801 & 0.878 & 0.881 \\
 & 3DGRT + PPISP~[\citenum{moenne20243d}, \citenum{deutsch2026ppispphysicallyplausiblecompensationcontrol}] & Full & 0.930 & 0.898 & 0.937 & 0.891 & 0.949 & 0.861 &  &  &  &  &  &  & 0.886 & 0.847 & \cellthird{0.819} & 0.833 & 0.906 & 0.890 \\
\cmidrule(l){2-21}
 & GOF~\cite{Yu2024GOF} & Rectified & 0.926 & 0.892 & 0.934 & 0.873 & 0.946 & 0.854 & 0.937 & 0.908 & \cellthird{0.946} & 0.900 & 0.957 & \cellthird{0.874} & 0.882 & 0.804 & 0.752 & 0.803 & 0.876 & 0.885 \\
 & PGSR~\cite{chen2024pgsr} & Rectified & 0.924 & 0.886 & 0.926 & 0.870 & 0.940 & 0.842 & 0.936 & 0.904 & 0.939 & 0.899 & 0.954 & 0.862 & 0.890 & 0.807 & 0.782 & 0.809 & 0.870 & 0.880 \\
 & SOF~\cite{radl2025sof} & Rectified & 0.875 & 0.895 & 0.934 & 0.873 & 0.950 & 0.859 & 0.908 & 0.909 & 0.944 & 0.892 & 0.958 & 0.873 & 0.879 & 0.799 & 0.646 & 0.808 & 0.865 & 0.885 \\
 & MiLo~\cite{guedon2025milo} & Rectified & 0.926 & 0.893 & 0.928 & 0.875 & 0.943 & 0.849 & 0.935 & 0.908 & 0.939 & 0.897 & 0.952 & 0.865 & 0.849 & 0.756 & 0.778 & 0.770 & 0.865 & 0.859 \\
 & CoMe~\cite{radl2026come} & Rectified & 0.929 & 0.898 & 0.935 & 0.889 & 0.951 & 0.853 & \cellthird{0.938} & \cellthird{0.911} & 0.945 & \cellbest{0.906} & \cellbest{0.959} & 0.866 & 0.871 & 0.766 & 0.689 & 0.808 & 0.865 & 0.887 \\
\cmidrule(l){2-21}
 & \textbf{GOFish (Ours)} & Full & \cellbest{0.939} & \cellsecond{0.907} & \cellbest{0.944} & \cellsecond{0.903} & \cellbest{0.957} & \cellbest{0.877} & \cellbest{0.943} & \cellsecond{0.915} & \cellbest{0.948} & \cellthird{0.904} & \cellsecond{0.959} & \cellbest{0.883} & \cellbest{0.895} & \cellbest{0.857} & \cellsecond{0.821} & \cellbest{0.849} & \cellbest{0.913} & \cellsecond{0.895} \\
 & \textbf{+ Geometry Objective} & Full & \cellsecond{0.939} & \cellbest{0.907} & \cellsecond{0.944} & \cellbest{0.904} & \cellsecond{0.957} & \cellsecond{0.876} & \cellsecond{0.943} & \cellbest{0.916} & \cellsecond{0.948} & \cellsecond{0.904} & \cellthird{0.959} & \cellsecond{0.882} & \cellsecond{0.893} & \cellsecond{0.854} & 0.812 & \cellsecond{0.846} & \cellsecond{0.909} & \cellthird{0.893} \\
\midrule
\multirow{15}{*}{\rotatebox[origin=c]{90}{LPIPS$\downarrow$}} & Fisheye-GS$^\ast$~\cite{liao2024fisheyegs} & $145^\circ$ & 0.273 & 0.270 & 0.246 & 0.215 & 0.275 & 0.258 &  &  &  &  &  &  & 0.234 & 0.265 & 0.292 & 0.201 & 0.240 & 0.184 \\
 & 3DGUT~\cite{wu20253dgut} & $160^\circ$ & 0.207 & 0.219 & 0.213 & 0.194 & 0.222 & 0.230 &  &  &  &  &  &  & 0.194 & 0.243 & 0.262 & 0.221 & 0.241 & 0.166 \\
 & 3DGUT + PPISP~[\citenum{wu20253dgut}, \citenum{deutsch2026ppispphysicallyplausiblecompensationcontrol}] & $160^\circ$ & 0.207 & 0.216 & 0.212 & 0.193 & 0.219 & 0.230 &  &  &  &  &  &  & 0.189 & \cellthird{0.212} & \cellbest{0.248} & 0.189 & 0.215 & 0.141 \\
 & 3DGEER$^\ast$~\cite{huang20253dgeer} & $178^\circ$ & 0.201 & 0.204 & \cellthird{0.206} & 0.175 & 0.205 & \cellthird{0.213} &  &  &  &  &  &  & 0.230 & 0.249 & 0.288 & 0.207 & 0.239 & 0.197 \\
 & Self-Calibrating GS$^\ast$~\cite{deng2025self} & $180^\circ$ & 0.229 & 0.231 & 0.238 & 0.239 & 0.245 & 0.260 &  &  &  &  &  &  & 0.292$^\ast$ & 0.330$^\ast$ & 0.470$^\ast$ & 0.297$^\ast$ & 0.346$^\ast$ & 0.260$^\ast$ \\
 & DirectFisheye-GS~\cite{yang2025directfisheye} & $180^\circ$ & 0.201 & 0.206 & 0.210 & 0.190 & 0.215 & 0.223 &  &  &  &  &  &  & 0.219 & 0.259 & 0.272 & 0.202 & 0.231 & 0.178 \\
 & 3DGRT~\cite{moenne20243d} & Full & 0.320 & 0.330 & 0.278 & 0.193 & 0.277 & 0.272 &  &  &  &  &  &  & 0.237 & 0.249 & 0.299 & 0.225 & 0.228 & 0.156 \\
 & 3DGRT + PPISP~[\citenum{moenne20243d}, \citenum{deutsch2026ppispphysicallyplausiblecompensationcontrol}] & Full & 0.272 & 0.279 & 0.245 & 0.188 & 0.246 & 0.254 &  &  &  &  &  &  & 0.187 & 0.221 & \cellsecond{0.253} & 0.191 & \cellthird{0.206} & \cellthird{0.140} \\
\cmidrule(l){2-21}
 & GOF~\cite{Yu2024GOF} & Rectified & 0.203 & 0.222 & 0.215 & 0.182 & 0.214 & 0.235 & 0.198 & 0.220 & 0.211 & 0.176 & 0.208 & 0.234 & 0.202 & 0.245 & 0.344 & 0.196 & 0.240 & 0.159 \\
 & PGSR~\cite{chen2024pgsr} & Rectified & 0.234 & 0.256 & 0.246 & 0.199 & 0.254 & 0.255 & 0.217 & 0.246 & 0.234 & 0.192 & 0.234 & 0.252 & \cellbest{0.176} & 0.240 & 0.292 & \cellthird{0.188} & 0.234 & 0.145 \\
 & SOF~\cite{radl2025sof} & Rectified & 0.295 & 0.207 & 0.214 & 0.175 & 0.198 & 0.219 & 0.262 & 0.211 & 0.212 & 0.177 & 0.200 & \cellthird{0.225} & 0.200 & 0.245 & 0.459 & 0.190 & 0.238 & 0.157 \\
 & MiLo~\cite{guedon2025milo} & Rectified & 0.226 & 0.237 & 0.238 & 0.210 & 0.236 & 0.256 & 0.222 & 0.236 & 0.231 & 0.211 & 0.228 & 0.258 & 0.255 & 0.334 & 0.324 & 0.284 & 0.272 & 0.216 \\
 & CoMe~\cite{radl2026come} & Rectified & \cellthird{0.175} & \cellthird{0.190} & \cellthird{0.206} & \cellthird{0.159} & \cellthird{0.183} & 0.226 & \cellthird{0.176} & \cellthird{0.195} & \cellthird{0.205} & \cellbest{0.162} & \cellbest{0.186} & 0.232 & 0.209 & 0.282 & 0.426 & 0.192 & 0.221 & \cellbest{0.138} \\
\cmidrule(l){2-21}
 & \textbf{GOFish (Ours)} & Full & \cellsecond{0.157} & \cellsecond{0.176} & \cellsecond{0.180} & \cellbest{0.145} & \cellsecond{0.180} & \cellsecond{0.190} & \cellsecond{0.171} & \cellsecond{0.191} & \cellsecond{0.187} & \cellsecond{0.166} & \cellthird{0.188} & \cellbest{0.208} & \cellsecond{0.179} & \cellbest{0.196} & \cellthird{0.257} & \cellbest{0.149} & \cellbest{0.198} & \cellsecond{0.140} \\
 & \textbf{+ Geometry Objective} & Full & \cellbest{0.156} & \cellbest{0.174} & \cellbest{0.180} & \cellsecond{0.145} & \cellbest{0.179} & \cellbest{0.190} & \cellbest{0.170} & \cellbest{0.189} & \cellbest{0.186} & \cellthird{0.166} & \cellsecond{0.188} & \cellsecond{0.208} & \cellthird{0.184} & \cellsecond{0.203} & 0.270 & \cellsecond{0.154} & \cellsecond{0.203} & 0.143 \\
\bottomrule
\end{tabular}}
\end{table*}

\begin{table*}[t]
\centering
\caption{\textbf{Per-scene geometry results.}  \rankkey{}.  Protocols follow Appendix~\ref{app:evaluation}.}
\label{tab:app_ps_geom}
\setlength{\tabcolsep}{2pt}
\resizebox{\textwidth}{!}{%
\begin{tabular}{@{}cl cccccccccccccccccc@{}}
\toprule
 & & \multicolumn{6}{c}{\textbf{FIORD}~\cite{gunes2025fiord}} & \multicolumn{6}{c}{\textbf{ScanNet++}~\cite{yeshwanth2023scannet++}} & \multicolumn{6}{c}{\textbf{Tanks \& Temples}~\cite{Knapitsch2017tanks}}  \\
\cmidrule(lr){3-8}\cmidrule(lr){9-14}\cmidrule(lr){15-20}
 & Method & Kit. & Road & Bld. & Meet. & Bri. & Hall & \texttt{0a5c} & \texttt{8d56} & \texttt{bb87} & \texttt{d415} & \texttt{e8ea} & \texttt{fe17} & Barn & Cat. & Court. & Ign. & Meet. & Truck \\
\midrule
\multirow{7}{*}{\rotatebox[origin=c]{90}{Precision$\uparrow$}} & GOF~\cite{Yu2024GOF} & 0.488 & 0.016 & 0.108 & 0.292 & 0.031 & \cellthird{0.074} & 0.539 & 0.546 & 0.615 & 0.647 & 0.562 & 0.566 & 0.522 & \cellsecond{0.402} & \cellthird{0.432} & 0.726 & \cellthird{0.400} & 0.585 \\
 & PGSR~\cite{chen2024pgsr} & \cellbest{0.647} & \cellbest{0.025} & \cellbest{0.176} & 0.346 & \cellthird{0.066} & \cellbest{0.090} & \cellsecond{0.702} & \cellbest{0.643} & \cellthird{0.696} & \cellbest{0.786} & 0.648 & \cellbest{0.715} & 0.521 & 0.337 & 0.190 & \cellbest{0.831} & 0.368 & \cellbest{0.611} \\
 & SOF~\cite{radl2025sof} & 0.473 & 0.006 & 0.103 & 0.307 & 0.031 & 0.060 & 0.561 & 0.555 & \cellsecond{0.702} & \cellthird{0.675} & 0.608 & 0.622 & \cellbest{0.586} & \cellthird{0.393} & \cellsecond{0.506} & 0.682 & 0.399 & 0.553 \\
 & MiLo~\cite{guedon2025milo} & \cellsecond{0.532} & 0.012 & 0.137 & 0.307 & 0.027 & 0.065 & \cellbest{0.705} & \cellsecond{0.590} & 0.619 & \cellsecond{0.706} & 0.644 & \cellthird{0.646} & \cellthird{0.551} & 0.368 & 0.413 & \cellthird{0.774} & \cellsecond{0.426} & \cellthird{0.602} \\
 & CoMe~\cite{radl2026come} & \cellthird{0.508} & 0.016 & \cellsecond{0.148} & \cellthird{0.352} & 0.039 & \cellsecond{0.077} & 0.639 & 0.562 & \cellbest{0.716} & 0.667 & \cellbest{0.680} & \cellsecond{0.648} & \cellsecond{0.558} & \cellbest{0.446} & \cellbest{0.543} & \cellsecond{0.801} & \cellbest{0.491} & \cellsecond{0.604} \\
\cmidrule(l){2-20}
 & \textbf{GOFish (Ours)} & 0.478 & \cellsecond{0.022} & \cellthird{0.142} & \cellbest{0.420} & \cellsecond{0.074} & 0.064 & \cellthird{0.662} & \cellthird{0.579} & 0.662 & 0.636 & \cellsecond{0.670} & 0.615 & 0.363 & 0.298 & 0.389 & 0.452 & 0.371 & 0.461 \\
 & \textbf{+ Geometry Objective} & 0.507 & \cellthird{0.019} & 0.135 & \cellsecond{0.407} & \cellbest{0.076} & 0.067 & 0.647 & 0.561 & 0.671 & 0.636 & \cellthird{0.651} & 0.617 & 0.395 & 0.316 & 0.389 & 0.511 & 0.373 & 0.454 \\
\midrule
\multirow{7}{*}{\rotatebox[origin=c]{90}{Recall$\uparrow$}} & GOF~\cite{Yu2024GOF} & 0.641 & 0.036 & 0.193 & 0.493 & 0.074 & 0.054 & 0.634 & 0.626 & 0.546 & 0.751 & 0.565 & 0.651 & 0.448 & 0.400 & 0.205 & 0.626 & 0.209 & \cellthird{0.615} \\
 & PGSR~\cite{chen2024pgsr} & 0.641 & \cellbest{0.097} & 0.162 & 0.481 & \cellbest{0.319} & 0.055 & 0.638 & 0.601 & \cellthird{0.677} & 0.745 & 0.667 & \cellthird{0.733} & \cellbest{0.563} & \cellbest{0.686} & \cellbest{0.386} & \cellbest{0.793} & \cellbest{0.350} & \cellbest{0.733} \\
 & SOF~\cite{radl2025sof} & 0.647 & 0.011 & 0.128 & 0.513 & 0.093 & 0.049 & 0.630 & 0.651 & 0.631 & 0.774 & 0.615 & 0.700 & \cellthird{0.485} & \cellthird{0.409} & 0.198 & 0.671 & 0.215 & 0.554 \\
 & MiLo~\cite{guedon2025milo} & \cellthird{0.713} & 0.029 & 0.201 & 0.521 & 0.063 & 0.039 & \cellthird{0.754} & 0.655 & 0.629 & 0.779 & 0.664 & 0.724 & 0.392 & 0.352 & 0.238 & 0.615 & 0.204 & 0.574 \\
 & CoMe~\cite{radl2026come} & 0.709 & 0.019 & \cellthird{0.218} & \cellthird{0.600} & 0.170 & \cellthird{0.060} & 0.749 & \cellthird{0.705} & 0.669 & \cellthird{0.801} & \cellthird{0.736} & 0.719 & \cellsecond{0.507} & \cellsecond{0.489} & 0.245 & \cellsecond{0.763} & \cellsecond{0.301} & \cellsecond{0.665} \\
\cmidrule(l){2-20}
 & \textbf{GOFish (Ours)} & \cellsecond{0.728} & \cellthird{0.041} & \cellbest{0.244} & \cellsecond{0.619} & \cellthird{0.283} & \cellsecond{0.071} & \cellbest{0.877} & \cellsecond{0.743} & \cellsecond{0.694} & \cellbest{0.870} & \cellbest{0.781} & \cellsecond{0.804} & 0.447 & 0.376 & \cellsecond{0.290} & 0.627 & \cellthird{0.264} & 0.527 \\
 & \textbf{+ Geometry Objective} & \cellbest{0.764} & \cellsecond{0.044} & \cellsecond{0.238} & \cellbest{0.623} & \cellsecond{0.295} & \cellbest{0.078} & \cellsecond{0.872} & \cellbest{0.751} & \cellbest{0.729} & \cellsecond{0.869} & \cellsecond{0.746} & \cellbest{0.838} & 0.470 & 0.406 & \cellthird{0.277} & \cellthird{0.684} & 0.261 & 0.514 \\
\midrule
\multirow{7}{*}{\rotatebox[origin=c]{90}{F-score$\uparrow$}} & GOF~\cite{Yu2024GOF} & 0.554 & 0.022 & 0.139 & 0.367 & 0.044 & 0.063 & 0.583 & 0.583 & 0.578 & 0.695 & 0.563 & 0.606 & 0.482 & \cellthird{0.401} & 0.278 & 0.672 & 0.275 & \cellthird{0.599} \\
 & PGSR~\cite{chen2024pgsr} & \cellbest{0.644} & \cellbest{0.040} & 0.169 & 0.403 & \cellthird{0.109} & \cellsecond{0.068} & 0.668 & 0.621 & \cellthird{0.686} & \cellbest{0.765} & 0.657 & \cellbest{0.724} & \cellbest{0.541} & \cellsecond{0.452} & 0.254 & \cellbest{0.811} & \cellsecond{0.359} & \cellbest{0.667} \\
 & SOF~\cite{radl2025sof} & 0.547 & 0.008 & 0.114 & 0.384 & 0.046 & 0.054 & 0.593 & 0.599 & 0.664 & 0.721 & 0.611 & 0.659 & \cellthird{0.530} & 0.401 & 0.285 & 0.676 & 0.280 & 0.553 \\
 & MiLo~\cite{guedon2025milo} & \cellthird{0.609} & 0.017 & 0.163 & 0.387 & 0.038 & 0.049 & \cellthird{0.729} & 0.620 & 0.624 & \cellsecond{0.741} & 0.654 & 0.683 & 0.458 & 0.360 & 0.302 & \cellthird{0.685} & 0.276 & 0.588 \\
 & CoMe~\cite{radl2026come} & 0.592 & 0.018 & \cellsecond{0.176} & \cellthird{0.444} & 0.064 & \cellthird{0.068} & 0.689 & \cellthird{0.625} & \cellsecond{0.692} & 0.728 & \cellsecond{0.707} & 0.682 & \cellsecond{0.531} & \cellbest{0.467} & \cellbest{0.338} & \cellsecond{0.781} & \cellbest{0.373} & \cellsecond{0.633} \\
\cmidrule(l){2-20}
 & \textbf{GOFish (Ours)} & 0.577 & \cellsecond{0.029} & \cellbest{0.180} & \cellbest{0.501} & \cellsecond{0.117} & 0.067 & \cellbest{0.754} & \cellbest{0.651} & 0.677 & \cellthird{0.735} & \cellbest{0.721} & \cellthird{0.697} & 0.401 & 0.333 & \cellsecond{0.332} & 0.525 & \cellthird{0.308} & 0.492 \\
 & \textbf{+ Geometry Objective} & \cellsecond{0.610} & \cellthird{0.027} & \cellthird{0.172} & \cellsecond{0.492} & \cellbest{0.121} & \cellbest{0.072} & \cellsecond{0.743} & \cellsecond{0.642} & \cellbest{0.699} & 0.735 & \cellthird{0.695} & \cellsecond{0.711} & 0.429 & 0.355 & \cellthird{0.324} & 0.585 & 0.307 & 0.482 \\
\bottomrule
\end{tabular}}
\end{table*}

\begin{figure*}[p]
  \centering
  {\input{figures/app_qual_render/grid_p1.tex}}
  \caption{\textbf{Additional qualitative rendering results on FIORD~\cite{gunes2025fiord}.}  Metrics are per-scene.}
  \label{fig:app_qual_render}
\end{figure*}

\begin{figure*}[p]
  \centering
  \resizebox{0.997\textwidth}{!}{\input{figures/app_qual_fiord_surf/grid_p1.tex}}
  \caption{\textbf{Additional qualitative rendering and mesh results on FIORD~\cite{gunes2025fiord}.}  Metrics are per-scene.}
  \label{fig:app_qual_fiord_surf}\label{fig:app_qual_mesh}
\end{figure*}

\begin{figure*}[p]
  \centering
  {\input{figures/app_qual_render/grid_p3.tex}}
  \caption{\textbf{Additional qualitative rendering results on FullCircle~\cite{foroutan2026fullcircle}.}  Metrics are per-scene.}
  \label{fig:app_qual_render_fc}
\end{figure*}

\begin{figure*}[p]
  \centering
  {\input{figures/app_qual_render/grid_p2.tex}}
  \caption{\textbf{Qualitative rendering results on ScanNet++~\cite{yeshwanth2023scannet++}.}  Metrics are per-scene.}
  \label{fig:app_qual_render_sn}
\end{figure*}

\begin{figure*}[p]
  \centering
  \resizebox{0.875\textwidth}{!}{\input{figures/app_qual_sn/grid_p1.tex}}
  \caption{\textbf{Qualitative rendering and mesh results on ScanNet++~\cite{yeshwanth2023scannet++}.}  Metrics are per-scene.}
  \label{fig:app_qual_mesh_sn}
\end{figure*}

\begin{figure*}[p]
  \centering
  \resizebox{0.985\textwidth}{!}{\input{figures/app_qual_tnt/grid_p1.tex}}
  \caption{\textbf{Qualitative rendering and mesh results on Tanks~\&~Temples~\cite{Knapitsch2017tanks}.}  Metrics are per-scene.}
  \label{fig:app_qual_tnt}
\end{figure*}

\FloatBarrier

\end{document}